\documentclass[journal]{IEEEtran}
\usepackage[T1]{fontenc}
\usepackage{amsmath,amssymb,booktabs,graphicx,url,array}
\usepackage[hidelinks]{hyperref}
\hypersetup{pdftitle={TableVision: Recovering Verified Structure from Historical Tables},pdfauthor={Todd Y. Zhou and James M. Snyder, Jr.},pdfsubject={Arithmetic-gated structure recovery for historical election tables and a theory of the sum check},pdfkeywords={Table structure recovery, historical documents, optical character recognition, data validation, dynamic programming, selective release, group testing, computational social science}}
\newcommand{\Exact}{\textsc{Exact}}
\newcommand{\Source}{\textsc{SourceDiscrepancy}}
\newcommand{\Hold}{\textsc{Withheld}}
\newcommand{\Unverified}{\textsc{Unverified}}
\newcommand{\mk}[1]{\raisebox{0.1ex}{\textcircled{\ifnum#1<10 \scriptsize\else\tiny\fi #1}}}
\newtheorem{theorem}{Theorem}
\newtheorem{proposition}{Proposition}
\newtheorem{corollary}{Corollary}
\newtheorem{lemma}{Lemma}
\begin{document}
\title{TableVision: Recovering Verified Structure\protect\\from Historical Tables}
\author{Todd~Y.~Zhou and James~M.~Snyder,~Jr.%
\thanks{T. Y. Zhou is with Harvard University, Cambridge, MA 02138 USA.}%
\thanks{J. M. Snyder, Jr. is with the Department of Government, Harvard University, Cambridge, MA 02138 USA.}}
\maketitle
\begin{abstract}
Gaps, skips, and mistakes are the normal state of a historical register: rows go missing, tables cross page breaks, columns slip under wrong names, and column sums disagree with the corresponding row entries. Readers read digits correctly but lose the table structure. On seven clean panels of election returns, the best open-source reader reproduces 99.0\% of the digits but files only 83.7\% of the figures in the correct cells and gets 11 of 30 columns correct. TableVision treats table structure recovery as the problem. On three degraded 1888 pages, Amazon Textract files only 1 of 1494 figures correctly; TableVision correctly files 99.2\%, and all 51 columns its exact-total check releases are correct. A dynamic program cuts a volume into blocks by how many columns sum to their last line; witnesses read the cells; the gate classifies each column as exactly reconciling, flagged as faithful to a page whose arithmetic is wrong, or withheld. A theory of the sum check says which errors it cannot see and why its optimal block is about $\theta^{-1/2}$ rows at defect rate $\theta$. On 2201 columns of Wisconsin electoral data, the gate accepts 2042, flags 118, and withholds 41; failure rises with column length $R$ as $1-(1-\theta)^R$ at 1.1 defects per 1000 cells in general elections; official returns confirm 933 of 934 testable accepted sums and all 35 testable flagged totals; and the deployment added 491\,359 gated precinct rows to a 9.3-million-row dataset. One implication is that the design of the verification block is critical.
\end{abstract}
\begin{IEEEkeywords}
Table structure recovery, historical documents, optical character recognition, data validation, dynamic programming, selective release, group testing, computational social science.
\end{IEEEkeywords}
\section{Introduction}
\label{sec:intro}
\IEEEPARstart{O}{n} pages 126--127 of the 1894 Iowa Official Register, Delaware County's returns run across a table 25 columns wide, printed sideways on portrait pages in type whose figures are nine pixels tall once scanned. An earlier extraction of those pages produced a county that could not be released, and the reason was not a misread digit: the extraction had classified the line for Union township as a heading, and the county's columns came up short by exactly that township's vote. The digits were correct. The table was wrong.

\begin{table}[t]
\caption{Digits Read Versus Table Structure Kept, by Readers Alone and With TableVision}
\label{tab:bench}
\centering\footnotesize
\setlength{\tabcolsep}{0pt}
\renewcommand{\arraystretch}{1.25}
\newcolumntype{C}[1]{>{\centering\arraybackslash}p{#1}}
\newcolumntype{R}[1]{>{\raggedleft\arraybackslash}p{#1}}
\newcolumntype{L}[1]{>{\raggedright\arraybackslash}p{#1}}
\newcommand{\tv}[1]{\textbf{#1}}
\begin{tabular}{@{}p{72pt}C{42pt}R{28pt}@{\enspace$\rightarrow$\enspace}L{25pt}R{23pt}@{\enspace$\rightarrow$\enspace}L{27pt}@{}}
\toprule
& \% of Digits & \multicolumn{2}{c}{\% of Numbers} & \multicolumn{2}{c}{County Columns}\\
Reader & Correct & \multicolumn{2}{c}{in Correct Cell} & \multicolumn{2}{c}{100\% Correct}\\
\midrule
\multicolumn{6}{@{}l}{\emph{Clean pages: 7 panels, 1214 numbers, 30 county columns}}\\[-1pt]
\multicolumn{6}{@{}l}{\emph{Reader's own table $\rightarrow$ the same words in TableVision's structure}}\\
Tesseract & 90.9 & 69.0 & \tv{83.4} & 3 & \tv{9}\\
EasyOCR & 77.3 & 41.6 & \tv{60.5} & 0 & \tv{1}\\
PaddleOCR & 98.1 & 81.4 & \tv{96.5} & 9 & \tv{14}\\
docTR & 99.0 & 83.7 & \tv{97.9} & 11 & \tv{24}\\
\addlinespace[5pt]
\multicolumn{6}{@{}l}{\emph{Hard pages: 3 pages of 1888 returns, 1494 numbers, 77 county columns}}\\
Tesseract & 67.4 & \multicolumn{2}{c}{15.0} & \multicolumn{2}{c}{0}\\
EasyOCR & 29.9 & \multicolumn{2}{c}{\phantom{0}3.5} & \multicolumn{2}{c}{0}\\
PaddleOCR & 60.8 & \multicolumn{2}{c}{13.9} & \multicolumn{2}{c}{0}\\
docTR & 71.0 & \multicolumn{2}{c}{19.0} & \multicolumn{2}{c}{0}\\
Amazon Textract & 76.7 & \multicolumn{2}{c}{\phantom{0}0.1} & \multicolumn{2}{c}{0}\\
\tv{TableVision} & \tv{99.7} & \multicolumn{2}{c}{\tv{99.2}} & \multicolumn{2}{c}{\tv{67}}\\
\bottomrule
\end{tabular}
\par\smallskip
\parbox{\columnwidth}{\footnotesize \emph{Clean}: seven hand-cropped panels from three volumes; each open-source reader's own table, then the same words filed by a stand-in for TableVision's column comb, both at the configuration at which the reader alone files the most cells. \emph{Hard}: three continuation pages of the 1888 Iowa returns, one-bit scans at 118 pixels per inch with no row labels (Section~\ref{sec:human}); each reader's own output, and one stored pass of TableVision's reader, without the second witness or review (Section~\ref{sec:witness}). Readers: Tesseract 5.3, EasyOCR 1.7, PaddleOCR PP-OCRv4, docTR 1.1. \emph{\% of Digits Correct}: reference digits reproduced anywhere on the page. \emph{\% of Numbers in Correct Cell}: read correctly and in their own row and column. \emph{County Columns 100\% Correct}: every cell and the printed total correct. On four clean keyed pages Textract places 99.9\% of the numbers correctly, as does the better of two typists, and on the hard pages TableVision's comb lifts the open-source words to 13--66\% (Tables~S2 and~S4).}
\end{table}

Imperfection is the rule in a historical register. A register has \emph{gaps}: cells destroyed in the printing or the scan. It invites \emph{skips}: a county that begins and ends mid-page without repeating its header, a parent place name left on the previous page, a printed row easily taken for a heading. It has \emph{mistakes}: column totals that do not equal the sum of the corresponding rows, a broken glyph that a human and a machine both read as another digit (Fig.~\ref{fig:hard}). And when a column grid slides one window over, it files one candidate's figures under another's name with every sum intact. None can be seen from one cell; Table~\ref{tab:bench} measures what an OCR framing misses, on clean pages and on hard ones. Given the same clean panels from three volumes, open-source recognizers~\cite{tesseract,easyocr,ppocr,doctr} reproduce up to 99.0\% of the printed digits, yet in the tables they return, at most 83.7\% of the figures land in their own row and column and at most 11 of 30 county columns are correct, and the table extractors img2table and SLANet-plus~\cite{img2table,ppstructure} file fewer than half the figures in place (Table~S2 of the supplement). A commercial table reader, Amazon Textract~\cite{textract}, does far better on clean pages, misreading one of 1356 keyed figures, but that one is the figure a typist also misread. On hard pages the commercial reader fails worst. The 1888 Iowa returns continue each county's rows on a facing page that prints no row labels, scanned in one bit at 118 pixels per inch; Textract read 54 of that volume's 58 such pages sideways, and on the three in Table~\ref{tab:bench} it files 1 of 1494 figures in its cell, against 4--19\% for the open-source readers' own lines. A recognizer, learned or human, sees one cell and returns one value; given a column with a missing row it returns a complete-looking column, and a generative model asked to complete the table invents figures that are fluent rather than attested~\cite{ji}. The errors that matter are silent and structural: they concern which rows belong to which county under which name.

So we treat the recovery of a table's structure as the problem. A column's residual splits into a source term, a scope term, and a reading term (Eq.~\eqref{eq:decomp}); a recognizer, however good, removes only the last, and on legible pages the last is already small, so the errors that remain live in the other two, which are properties of the table and not of any cell. The table carries its own check, a total line that must equal the sum of the rows above it, and TableVision uses that arithmetic as evidence, never as a target to fit. It locates county blocks by a dynamic program over the whole volume that scores every candidate block by how many of its columns foot (sum exactly) to its last line, and reads cells with witnesses of different kinds. It releases a column only through an exact-total gate that no digit is altered to satisfy, so every column ends exactly reconciling, flagged because two readers judged the cells faithful to a page whose arithmetic does not close, or withheld. Filed by a column comb in TableVision's manner, the very words that docTR files at 83.7\% put 97.9\% of the figures and 24 of the 30 county columns in place (Table~\ref{tab:bench}), and of the 20 such columns that close on their panel, the exact-total check releases the 16 the comb got correct and withholds the 4 it got wrong (Table~S2 of the supplement). On the hard pages a single stored pass of the workflow's own reader files 99.2\% of the figures and 67 of the 77 county columns in place, and the check releases 51 columns, all exact: every column that closes on its page and that the pass read correctly, and none of the 8 closing columns it misread. A sum certifies what a reader nearly got correct; it cannot make a table out of noise: on the hard pages, where the open-source readers at their reported configurations file no county column correctly, all 23 columns the check releases from their output over every configuration run are wrong. A single sum is one check equation in the sense of error-detecting codes~\cite{hamming} and one pool in a group test~\cite{dorfman}; Section~\ref{sec:theory} says what it can and cannot see, and Section~\ref{sec:eval} tests each quantitative statement the test bed can reach, on 2201 county-by-candidate columns from five Wisconsin Blue Books, against evidence the workflow never saw.

The contributions are the formulation (Section~\ref{sec:problem}), in which only one term of a column's residual is a recognition error; the workflow, stated as constraints, predicates, and an exact segmentation recursion (Section~\ref{sec:workflow}); the theory of the sum check (Section~\ref{sec:theory}); and an evaluation against official returns, human transcription, and off-the-shelf readers rather than the workflow's own outputs (Section~\ref{sec:eval}). None of the tools is new. Three things are new. The formulation makes the book's arithmetic both the objective of segmentation and the certificate of release. The theory proves the cut a maximum-likelihood estimate under a stated model of the page, the gate's silent error at most second order in the defect rate, and the work of verification falling toward the information-theoretic limit as printed subtotals nest; its testable predictions (a one-parameter failure curve, about one defective total among 51 failures, a single-digit signature in the flagged residuals) the test bed bears out. The design conclusion, which no archival transcription we know of follows, is to choose the verification block by the defect rate among the sums a book prints, not by the county boundary. The precinct-level dataset is the subject of a companion data descriptor and serves here as the test bed.

TableVision was developed with a particular problem in mind, historical election data, but the problem of table structure is much more general. The same loss of structure meets anyone who scans a printed table: government statistics, census tables of demographic and socioeconomic information, education statistics such as school spending and teachers' salaries, historical chemical formulas, newspaper pages of stock returns, and the obscure price series nobody was ever going to key by hand. Massive amounts of economic, demographic, social, financial, and other records are stored in archives and libraries around the world; scanned, read, and put back into proper tables, they become data for social scientists and historians, as do tables of experimental results and physical measurements. TableVision applies to any scanned image of a printed table whose blocks close on a total, and its geometry, typing, constraints, and provenance trace apply to tables without totals, so its potential uses are many; Section~\ref{sec:disc} discusses this scope and gives one specific example.

\begin{figure*}[t]
\centering
\includegraphics[width=0.58\textwidth]{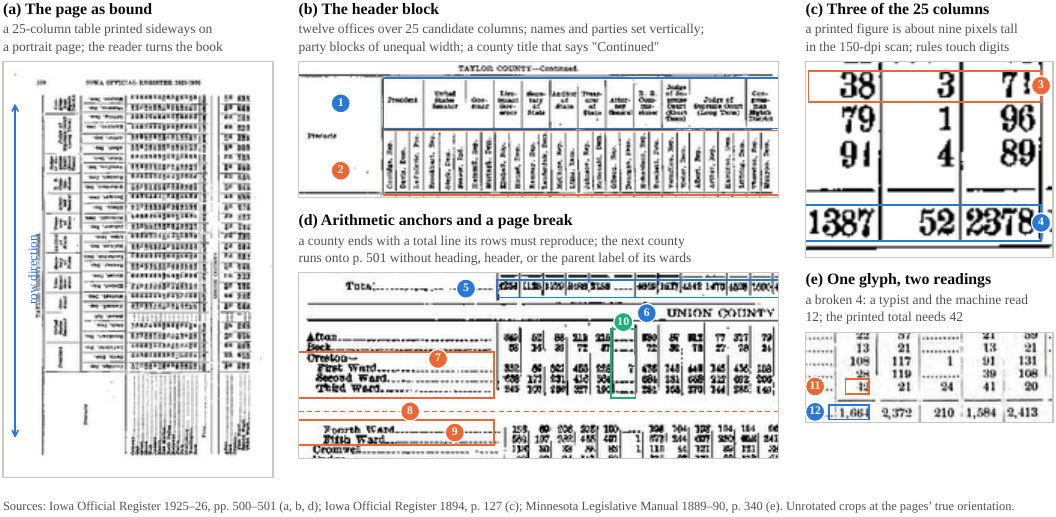}
\caption{What a per-cell reader cannot see. (a) Iowa Official Register 1925--26, p.~500, as bound: a 25-column table printed sideways. (b) Its header block: \mk{1} 12 office headers over \mk{2} 25 candidate columns in party blocks of unequal width, under a title that says ``Continued.'' (c) Three Delaware County columns, 1894 register: \mk{3} the Union township row taken for a heading and \mk{4} the total line each column must reproduce; a figure is about nine pixels tall. (d) The foot of p.~500 and head of p.~501: \mk{5} Taylor County's total line; \mk{6} the next county's heading; \mk{7} a parent label; \mk{8} a page break with no county heading, column header, or parent label, and \mk{9} wards continuing overleaf under that label; \mk{10} leader dots that mean blank, not zero. (e) Minnesota 1889--90, p.~340: \mk{11} a broken 42 that a typist and the machine both read as 12, and \mk{12} the printed total that needs 42.}
\label{fig:hard}
\end{figure*}
\section{Related Work}
Table recognition has its own benchmarks and metrics: PubTabNet and tree-edit-distance similarity (TEDS)~\cite{pubtabnet}, TableFormer~\cite{tableformer}, PubTables-1M and GriTS~\cite{pubtables,grits}; end-to-end models such as Donut~\cite{donut} emit the table, or the page, as a token sequence, and a vision-language model prompted to transcribe a page is the same kind of reader with the same page-level score. All of these score one page against a reference rendering of it, and a character score, even one independent of reading order~\cite{fca}, says nothing about where a figure is filed. None asks whether a released number belongs to the county it is filed under, whether that county is complete, or whether a correctly identified total supports it: a model that returns a well-formed table for a page with a missing row scores well and is wrong.

Using a table's own arithmetic to check it is an old idea. Clausner et al. validate values extracted from historical statistical tables against their totals and route inconsistencies to targeted processing~\cite{clausner}; TableVision also uses the arithmetic before any column is verified, as the objective that locates the county blocks (Section~\ref{sec:segment}), refuses to release a failed column without a second reader's attestation, and states what the check certifies as theorems tested against official returns. Dewulf et al. decode under symbolic constraints and fine-tune with them~\cite{dewulf}; TableVision applies the constraint only afterward, as a gate on release rather than a bias on decoding. The gate is one check equation over the integers, and the questions asked of an error-detecting code~\cite{hamming}, which patterns pass undetected and what protection the check gives, are the ones Section~\ref{sec:theory} answers; verification against nested subtotals is hierarchical group testing in Dorfman's sense~\cite{dorfman}; and locating blocks by a global optimum over the row sequence is dynamic programming~\cite{bellman} with the book's arithmetic as objective.

Abstention runs from Chow's error/reject tradeoff~\cite{chow} to selective prediction with a learned reject option~\cite{selectivenet}; TableVision abstains at the county-by-candidate column on document evidence rather than a confidence score. Constraint-based data cleaning repairs a relation by the minimal change that satisfies its integrity constraints~\cite{ilyas}; we refuse the repair, because a value chosen to satisfy a sum inherits every other error in its column (Proposition~\ref{prop:repair}), and release instead what provenance~\cite{buneman} and data debugging~\cite{checkcell} name: the trace.\section{Problem Formulation}
\label{sec:problem}
\subsection{Input, Output, Constraint, and Objective}
A \emph{volume table} is the sequence of printed lines $\ell_1,\dots,\ell_M$ of one contest's tables in one volume, in reading order (Section~\ref{sec:geom}). Each line carries a label string; a type $\tau(\ell)\in\mathcal T$ assigned by Section~\ref{sec:struct}, where $\mathcal T$ comprises unit, subtotal, total, heading, absentee, comparative, footnote, and other lines; and, for each of the $C$ candidate columns fixed by the table header, a reading $x_{\ell c}\in\mathbb{Z}_{\ge0}\cup\{\bot\}$, the parse of the cell string to a nonnegative integer, $\bot$ when the cell is illegible or does not parse. A cell that prints no figure, such as a run of leader dots, is blank and adds nothing to a sum. The number of counties $K$ is known (99 in Iowa, 71 in Wisconsin), and printed county totals close the county blocks. The output is (i) a cut of the constituent lines into $K$ county blocks; (ii) for every block $k$ and column $c$ a \emph{verification column} with a state in $\{\Exact,\allowbreak\Source,\allowbreak\Hold,\allowbreak\Unverified\}$ (Section~\ref{sec:states}); and (iii) one record per (constituent line, column) of every released column, with the column's state and a provenance trace to its cell where one exists. The constraint is that every released value is a reading of the page, one of the witness readings of that cell (Section~\ref{sec:witness}), never a value imputed from a total, a neighboring column, or a model prior; the objective is to release as many cells as possible subject to that constraint and to a certificate on every released column, at a silent error rate that Section~\ref{sec:theory} bounds.

\subsection{Verification Columns and the Residual}
The unit of everything that follows is the verification column, the vector of constituent-line votes for one candidate in one county, contest, and election. A block that the cut assigns $\hat R$ constituent lines and one total line has readings $\hat V\in(\mathbb{Z}_{\ge0}\cup\{\bot\})^{\hat R\times C}$ and $\hat t_c\in\mathbb{Z}_{\ge0}\cup\{\bot\}$. Column $c$ is \emph{complete}, $\mathrm{Cmp}_c\equiv[\hat t_c\ne\bot]\wedge[\forall i:\hat V_{ic}\ne\bot]$, when every entry and the total are available, and then its residual is
\begin{equation}
 d_c=\sum_{i=1}^{\hat R}\hat V_{ic}-\hat t_c.
 \label{eq:residual}
\end{equation}
The workflow demands $d_c=0$ exactly: counts are integers, and a tolerance would only hide the small errors that still change an observation.

\subsection{Gaps, Skips, and Mistakes}
Let $\mathcal L$ be the set of printed lines of the source, $\mathcal U_k\subseteq\mathcal L$ the $R$ constituent lines that county $k$ prints, $v_{\ell c}$ the figure line $\ell$ prints in column $c$ ($v_{\ell c}=0$ for a line that prints no figure there), and $t_c$ the printed total. The extraction map $\rho:\{1,\dots,\hat R\}\to\mathcal L\cup\{\varnothing\}$ sends extracted row $i$ to the printed line it was cut from; a row assembled from no printed line, such as a subheading that acquired a reading from the column rule, has $\rho(i)=\varnothing$ with $v_{\varnothing c}:=\hat V_{ic}$, so that it counts as a non-constituent line included in the block. Reading errors are $e_{ic}=\hat V_{ic}-v_{\rho(i)c}$ and $e_{tc}=\hat t_c-t_c$. Whenever $\mathrm{Cmp}_c$ holds,
\begin{align}
 d_c={}&\underbrace{\Big(\sum_{\ell\in\mathcal U_k} v_{\ell c}-t_c\Big)}_{b_c\ \text{(source)}}+\underbrace{\Big(\sum_{i=1}^{\hat R} v_{\rho(i)c}-\sum_{\ell\in\mathcal U_k} v_{\ell c}\Big)}_{\sigma_c\ \text{(scope)}}\notag\\
 &+\sum_{i=1}^{\hat R} e_{ic}-e_{tc},
 \label{eq:decomp}
\end{align}
the identity Section~\ref{sec:theory} builds on. The three kinds of defect named in the introduction are the three kinds of term. A \emph{gap} is a cell at $\bot$: $\mathrm{Cmp}_c$ fails and the residual does not exist. A \emph{skip} is a scope error $\sigma_c\ne0$: a constituent line the block lacks or repeats, a non-constituent line it includes, a county cut in the wrong place, with the printed digits intact. A \emph{mistake} is the book's, $b_c\ne0$, or the reader's, $e_{ic}\ne0$ or $e_{tc}\ne0$. Identity errors, a column filed under a neighboring candidate's name or a block under the wrong county, lie outside Eq.~\eqref{eq:decomp}, leave $d_c=0$, and are the business of the constraints of Section~\ref{sec:struct}. Only $e$ is a recognition error, and it is the only term a better recognizer removes.

\subsection{Release States}
\label{sec:states}
Three predicates on evidence decide a column's state: $\mathrm{Cmp}_c$; $\mathrm{Cred}_c$, that the total reading is corroborated (Section~\ref{sec:gate}); and $\mathrm{Agr}_c$, that every cell has been re-read from the page by a second reader, at magnification where disputed, and confirmed (Section~\ref{sec:adjud}). Then
\begin{align}
 \Exact &\iff \underbrace{\mathrm{Cmp}_c}_{\text{(G1)}}\wedge\underbrace{[d_c=0]}_{\text{(G2)}}\wedge\underbrace{\mathrm{Cred}_c}_{\text{(G3)}},\notag\\
 \Source &\iff \mathrm{Cmp}_c\wedge[d_c\ne0]\wedge\mathrm{Agr}_c,\label{eq:states}\\
 \Hold &\iff \text{neither},\notag
\end{align}
and \Unverified\ is the state of material with no printed total to check against (none in the test bed). The release is $\Exact\cup\Source$. An \Exact\ column certifies exactly one thing: its cells sum to a printed county total that $\mathrm{Cred}_c$ corroborates. A \Source\ column certifies that two readers confirmed every cell against a page whose own arithmetic fails by the logged residual; which of its $R+1$ printed numbers is wrong is then a matter of odds (Proposition~\ref{prop:which}). A \Hold\ column certifies nothing and is absent, not zero; its printed total is not released on its own, because a total is a reading that only its rows can check.
\section{The TableVision Workflow}
\label{sec:workflow}
Page geometry (Section~\ref{sec:geom}) turns each page into typed lines (Section~\ref{sec:struct}) with cell slices, which a first reading pass values as $x^{(1)}_{\ell c}$; segmentation (Section~\ref{sec:segment}) runs once per volume table on that sequence and returns the $K$ blocks; the remaining witnesses, the gate, adjudication, and the diagnostics (Sections~\ref{sec:witness}--\ref{sec:diag}) act on the $K\times C$ verification columns, and a block whose missing line is recovered is re-gated. What follows is the procedure, not a particular recognizer.

\subsection{Page Geometry at Native Resolution}
\label{sec:geom}
Each page is taken as the bitonal scan $I$ embedded in the PDF. The Iowa registers we developed on are 150-dpi scans in which a printed figure is about nine pixels tall; enlarging such a glyph adds no information while inventing shape, so first-pass cells are read at native resolution, and a re-read ``at magnification'' is nearest-neighbor upsampling of the same pixels. Orientation is a declared input, and skew is removed before any band is cut: the lag $\Delta$ that best aligns the ink profiles of the left and right halves of the table region, whose centers are a distance $W$ apart, gives $\hat\alpha=\arctan(\Delta/W)$; in the sideways registers $\Delta$ is about 20 pixels, a little over one degree and more than half a row pitch (supplement, Table~S3, row~1). The horizontal profile $P(y)$ of a band then gives row centers $y_1<\dots<y_n$ as the local maxima of its smoothed version at least half a pitch apart, the pitch $s$ taken first from the autocorrelation of $P$ and refined to $\operatorname{med}_r(y_{r+1}-y_r)$, and the vertical profile gives the column rules as maximal runs of inked image columns, the intervals between them forming the \emph{column comb} $\mathcal W_1,\dots,\mathcal W_C$. Two guards protect the comb: (a) the number of windows equals the header's column count $C$, and (b) no rule run is wider than a guard width $g$, deployed at 26 pixels; a violation re-measures the comb, and the page is held until both pass.

\subsection{County-Block Segmentation by Arithmetic}
\label{sec:segment}
The constituent and total lines of a volume table form one sequence of $N$ typed lines that has to be cut into $K$ county blocks before any column can be summed; headings and footnotes are unreliable cues, since either can be missed. Let $x_{rc}\in\mathbb{Z}_{\ge0}\cup\{\bot\}$ be the first-pass reading $x^{(1)}_{rc}$ of line $r$ in column $c$, the superscript dropped in this subsection, $P_c(r)=\sum_{r'\le r}x_{r'c}$ the prefix sums with $\bot$ counted as zero, and $U_c(r)$ the number of $\bot$ among the first $r$ lines. For a candidate block $[i,j]$ whose last line $j$ is taken as its total line, the footing count is
\begin{equation}
 \phi(i,j)=\sum_{c=1}^{C}\big[x_{jc}\ne\bot,\ \Delta U_c=0,\ \Delta P_c=x_{jc}\big],
 \label{eq:foot}
\end{equation}
where $\Delta P_c=P_c(j-1)-P_c(i-1)$ and $\Delta U_c=U_c(j-1)-U_c(i-1)$ are the block's column sum and its count of unread cells. The block score is $S(i,j)=\phi(i,j)+\lambda_h h_i+\lambda_f f_j+\lambda_t\eta_j$, where $h_i=1$ if a heading line immediately precedes line $i$, $f_j=1$ if a footnote follows line $j$, $\eta_j=1$ if the label of line $j$ reads ``Total,'' and the nonnegative weights $\lambda$ are small against one footing column. With $K$ known and every block between two and $L$ lines long, the segmentation is the cut $0=j_0<j_1<\dots<j_K=N$ that maximizes $\sum_kS(j_{k-1}+1,j_k)$, found by
\begin{equation}
 F(k,j)=\max_{i}\big[F(k-1,i)+S(i+1,j)\big],
 \label{eq:dp}
\end{equation}
over $\max(0,j-L)\le i\le j-2$, with $F(0,0)=0$ and $F(k,j)=-\infty$ where no such cut exists, and the cut recovered by backtracking from $F(K,N)$.

\begin{lemma}[Exactness and complexity]
\label{lem:dp}
$F(k,j)$ is the maximum score of a cut of lines $1,\dots,j$ into $k$ blocks of lengths in $[2,L]$, so the recursion returns a global maximizer over all such $K$-cuts, in $O(NL(C+K))$ time and $O(NL+KN)$ space beyond the $O(NC)$ readings and their prefix sums (proof in the supplement).
\end{lemma}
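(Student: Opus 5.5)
The plan is to prove the two claims separately: correctness of the recursion by induction on $k$, then the time and space bounds by counting.

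\emph{Correctness.} The key structural fact is that the objective is additive over blocks: $S(i,j)$ depends only on the endpoints of one block (through $\phi(i,j)$, $h_i$, $f_j$ and $\eta_j$), so the score of a cut is a sum of independent block terms. Let $G(k,j)$ be the true maximum of $\sum_{m\le k}S(j_{m-1}+1,j_m)$ over cuts $0=j_0<\dots<j_k=j$ with every length $j_m-j_{m-1}\in[2,L]$, and $G=-\infty$ when no such cut exists. I will show $F=G$ by induction on $k$.

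The base case is $k=0$: the only cut of zero lines has score $0$, and every $j>0$ has no cut, which matches $F(0,0)=0$ and $F(0,j)=-\infty$. For the step, take any feasible $k$-cut of $1,\dots,j$ and let $i=j_{k-1}$ be its last interior cut point. The length constraint forces $j-L\le i\le j-2$, and $i\ge0$ trivially, so $i$ lies in the range the recursion scans. The prefix is then a feasible $(k-1)$-cut of $1,\dots,i$, and its score is at most $G(k-1,i)$, which equals $F(k-1,i)$ by hypothesis. Hence $G(k,j)\le F(k,j)$. Conversely, an optimal prefix cut for the maximizing $i$, extended by the block $[i+1,j]$, is a feasible $k$-cut achieving $F(k,j)$, so $F(k,j)\le G(k,j)$. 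The $-\infty$ convention covers the infeasible cases on both sides. Applying this at $(K,N)$ and backtracking through stored argmaxes reconstructs a cut attaining $F(K,N)$, which is a global maximizer over all $K$-cuts.

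\emph{Complexity.} The step I expect to need care is evaluating $\phi(i,j)$ in $O(C)$ time rather than $O(C(j-i))$. Precomputing $P_c$ and $U_c$ costs $O(NC)$. After that, each indicator in Eq.~\eqref{eq:foot} is two subtractions and two comparisons, so one $\phi(i,j)$ costs $O(C)$. The indicators $h,f,\eta$ are per-line flags, so the extra terms cost $O(1)$. The admissible block pairs $(i,j)$ number at most $NL$, so tabulating all $S(i,j)$ once costs $O(NLC)$ time and $O(NL)$ space. The table $F$ has $(K+1)(N+1)$ entries, each a maximum over at most $L-1$ predecessors, which costs $O(KNL)$ time. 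The stored argmaxes take $O(KN)$ space and backtracking takes $O(K)$. Summing gives $O(NL(C+K))$ time and $O(NL+KN)$ space beyond the $O(NC)$ readings and prefix sums, as claimed.
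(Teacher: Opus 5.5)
Your proposal is correct and follows the paper's proof. Both rest on the additivity of the cut score over blocks, an induction on $k$ that splits off the last block $[i+1,j]$ with $j-L\le i\le j-2$, and a count of $O(C)$ prefix-sum evaluations of $\phi$ over at most $NL$ pairs plus $KN$ states with $O(L)$ transitions each. Your explicit two-sided inequality and $-\infty$ bookkeeping spell out what the paper states in one line. As the paper remarks, the same additivity argument also covers the weighted score of Theorem~\ref{thm:infer}.
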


The maximizer is the true segmentation when every county block, between two and $L$ lines long, foots in more columns than any block ending off a county's total line, and no other block ending on a county's total line foots in more columns than that county; it stays so for cue weights below $1/(3K)$, and the chance that some coincidental block foots widely is bounded (Theorem~4 of the supplement). Without the cue terms it is also the maximum-likelihood cut under an explicit model of the page, and with them a posterior mode (Theorem~\ref{thm:infer}). Columns with many zero entries threaten the first condition, and the conditions can be checked only from a row-level inventory of readings, which the audited release does not contain. A block that foots in no column is \emph{unanchored}: its position rests on cues alone, and all of its columns enter \Hold\ until the diagnostics of Section~\ref{sec:diag} explain it.

\subsection{Reading With Several Witnesses}
\label{sec:witness}
A cell has up to four witnesses, merged by a fixed rule. Two transcription passes read each column from identical images without seeing each other's output, giving $x^{(1)}_{ic}$ and $x^{(2)}_{ic}$; where they agree, $\hat V_{ic}=x^{(1)}_{ic}$, and where they disagree, or either is $\bot$, the cell goes to a magnified re-read $x^{(m)}_{ic}$ at five to eight times, shown with its contiguous neighbors on the printed grid, and $\hat V_{ic}=x^{(m)}_{ic}$, which is $\bot$ if the magnified glyph is itself illegible. Values are never averaged. A reader here is a vision-language-model pass (Claude-family models in this deployment) prompted to return the table as JSON with its printed total line, and a second pass adds evidence without making errors independent (Section~\ref{sec:human}).

Where the PDF carries a text layer, it serves as a third witness of a different kind: a numeric token is assigned to the nearest fitted row if it lies within half a pitch of it and to the comb window that contains its box, and is discarded otherwise. The layer is admitted only in columns it covers almost completely (on the 563-row Milwaukee County table of the 1940 volume, 557 cells in a three-digit column against 51 in a small one), and there every cell on which it and the readers differ is inspected by eye first, 167 of the 2815 cells in the five admitted columns. The layer slipped rows and misread digits: a witness, never an arbiter.

A fourth signal needs no second reading: the glyph distance $\zeta_{ic}=\min_{d}\|I_{ic}\oplus\Gamma_d\|_0$ of the cell image $I_{ic}$ to per-volume digit templates $\Gamma_d$ ranks a page's columns for review and enters no gate; in the 1940 volume one column had a mean distance of 10 against 3 to 7 for the other 10 and accounted for 10 of the 32 failing county columns outside Milwaukee. For a failed column the same model lists the single-digit substitutions that would close the residual (Lemma~\ref{lem:syndrome}), as places to look, never as values.

\subsection{Typing and Structural Constraints}
\label{sec:struct}
Every line is typed from its label and position: ``Total'' closing a block is a $\mathsf{total}$, a municipality line whose wards follow it or a ``Total'' inside a block a $\mathsf{subtotal}$, absentee and comparative lines go by their labels, a place label with figures is a $\mathsf{unit}$, and a label without figures a $\mathsf{heading}$ or $\mathsf{other}$. Every block must then satisfy six constraints. (S1) \emph{Width}: a $\mathsf{unit}$ line has exactly $C$ parsed cells; a line with more or fewer is flagged, not read. (S2) \emph{Hierarchy}: every ward line has a parent municipality label in scope, inherited from the nearest preceding parent label, across a page break if necessary. (S3) \emph{Continuation}: a page without a header inherits the previous page's header if and only if the volume, the contest title, $C$, and the comb positions (within one pitch) agree. (S4) \emph{Exclusion}: only $\mathsf{unit}$ lines are constituents of a sum. (S5) \emph{Identity}: column $c$ carries the header's $c$th entry under the guarded comb, and a comb that fails a guard carries no identity. (S6) \emph{Edge}: a window whose right boundary falls outside the page's image box is $\bot$ on that page. The constraints exist because plausible arithmetic is not valid arithmetic: a repeated page contains internally consistent county blocks, and a column shifted under an adjacent header keeps its integers, as when a 28-pixel double rule shifted a whole comb and filed one candidate's total under another's name, which the 26-pixel width guard exposed (S5; Table~S3, row~2).

\subsection{The Exact-Total Gate}
\label{sec:gate}
A column enters the release as \Exact\ only through the predicate $G_c\equiv\mathrm{Cmp}_c\wedge[d_c=0]\wedge\mathrm{Cred}_c$ of Eq.~\eqref{eq:states}. (G3) exists because an exact match against a misread total proves nothing, and under (G2) the cells cannot impeach the total, since nonnegative cells that sum to $\hat t_c$ never exceed it. $\mathrm{Cred}_c$ holds if a magnified re-read of the total cell returns $\hat t_c$, or if no re-read was requested because $\hat t_c$ passed the \emph{coherence screen}, lying within the range of the other read totals of its party block (the columns under one party's heading). The screen is a prior, not a reading: in the straight-ticket decades of these registers a party's candidates receive nearly identical county totals, so a total that stands apart from its block is the one to re-read, and the re-read is the evidence.

Two development episodes show the gate refusing what an optimizer would accept: a destroyed cell in Polk County, Iowa, 1916, whose 75 siblings sum to 12\,354 against a printed 12\,621, was left unread rather than written as 267. A shortcut that filled missing cells from a neighboring column of the same party cut one volume's exact columns from 771 to 534, since ticket splitting makes a neighbor a poor predictor, and none of its 2224 substitutions entered a release.

\subsection{Adjudication and Release}
\label{sec:adjud}
A column that fails $G_c$ enters a review whose steps are ordered by the odds of Proposition~\ref{prop:which} and by effort: the row-dominance test, a magnified re-read of the total, the omitted-row test of Section~\ref{sec:diag}, and only then a second reader's re-read of every cell. Row dominance comes first because it needs no new reading and is decisive: if $\max_i\hat V_{ic}>\hat t_c$, the column is certainly defective and the defect is either that cell's reading or the total, so once the cell is confirmed at magnification the total is the defective number, printed or read. In Winnebago County in the 1940 volume, Oshkosh alone casts 77 votes for a candidate whose printed county total is 64. A proposed correction is a magnified reading of the page and nothing else; the arithmetic does not choose it but decides only whether the corrected column is released, since a value chosen to make the column foot would void its certificate (Proposition~\ref{prop:repair}). A column whose every cell two readers have confirmed against the page, and whose residual survives, satisfies $\mathrm{Agr}_c$ and is released as \Source; a column with an unread cell, a disputed cell, or an unexplained residual stays \Hold. The repair log records the readers and the outcome of each investigation (Section~\ref{sec:review}); three applied cell corrections record a single reader.

\subsection{Diagnosing Structural Failures}
\label{sec:diag}
When a whole county fails, the pattern of residuals across its columns tells a missing line (every column short by that line's own figure, Proposition~\ref{prop:signature}) from scattered misreads (columns short by unrelated amounts). For every column with $\hat t_c\ne\bot$ let $\underline d_c=\sum_{i:\hat V_{ic}\ne\bot}\hat V_{ic}-\hat t_c$ be the residual of its readable cells, equal to $d_c$ when the column is complete. The single-line screens, over the columns with a read total, are
\begin{align}
 \mathrm{Omit}_1&\equiv[\forall c:\underline d_c\le0]\wedge[\exists c:\underline d_c<0]\wedge\mathrm{Gap},\notag\\
 \mathrm{Dup}_1&\equiv[\forall c:\underline d_c\ge0]\wedge[\exists c:\underline d_c>0],\label{eq:omit}
\end{align}
where $\mathrm{Gap}$ holds when two consecutive detected rows of the block sit more than $1.4$ pitches apart with a label line assembled between them. They are screens only: a candidate line is restored when it is a constituent line on the page and its magnified readings $u_c$ equal $-d_c$ in the complete columns that then foot, one exact equality per column; a column that still fails after the restoration has a second defect and stays \Hold. In the 1894 Iowa register four counties with nothing released were each missing one printed line that the row classifier had taken for a heading or never detected. In Delaware County the Union township line read off printed pp.~126--127 carries exactly the 38, 3, and 71 by which three columns were short, 10 of the county's 25 columns then foot, and the other 15 stay withheld on their own evidence.
\section{A Theory of the Sum Check}
\label{sec:theory}
The gate certifies one integer equation per column. This section says what that certificate is worth and what it cannot see, why the same arithmetic finds the county blocks, and how verification should be laid out; Section~\ref{sec:eval} tests the predictions the test bed can reach, and the supplement proves every result in full.

The names come from Section~\ref{sec:problem}, with $v_c\in\mathbb{Z}_{\ge0}^{R}$ the vector of printed figures over the county's $R$ constituent lines; Eq.~\eqref{eq:decomp} reads $d_c=b_c+\sigma_c+\sum_ie_{ic}-e_{tc}$ whenever $\mathrm{Cmp}_c$ holds.

\subsection{What One Sum Can See}
\label{sec:kernel}
Fix a correctly scoped column, one for which $\rho$ is a bijection of $\{1,\dots,\hat R\}$ onto $\mathcal U_k$ (so $\hat R=R$ and $\sigma_c=0$), with consistent printed arithmetic ($b_c=0$), and write the readings as $v_c+e$, $t_c+e_t$ with $(e,e_t)\in\mathbb{Z}^{R}\times\mathbb{Z}$.

\begin{proposition}[Kernel]
\label{prop:kernel}
Condition (G2) holds if and only if $\mathbf 1^{\mathsf T}e=e_t$. The undetectable patterns therefore form the hyperplane $H=\{(e,e_t):\mathbf 1^{\mathsf T}e=e_t\}$, a sublattice of rank $R$ in $\mathbb{Z}^{R+1}$: every pattern with exactly one nonzero component is detected, and every permutation of the entries of $v_c$ is undetectable.
\end{proposition}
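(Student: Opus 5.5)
The plan is to specialize the decomposition of Eq.~\eqref{eq:decomp} to the setting fixed before the statement and then read off each claim in turn. Under the hypotheses, $\rho$ is a bijection onto $\mathcal U_k$, so $\sigma_c=0$, and $b_c=0$ by assumption. The readings are $v_c+e$ and $t_c+e_t$ with integer entries. Provided $\mathrm{Cmp}_c$ holds, which is implicit since the readings are integers, the residual is $d_c=\sum_i(v_{ic}+e_i)-(t_c+e_t)$. Using $\mathbf 1^{\mathsf T}v_c=t_c$, which is exactly $b_c=0$, this collapses to $d_c=\mathbf 1^{\mathsf T}e-e_t$. Condition (G2), $d_c=0$, is therefore equivalent to $\mathbf 1^{\mathsf T}e=e_t$. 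This gives the first claim and identifies the undetectable set as $H$.

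For the lattice claim, I would write $H$ as the kernel of the surjective homomorphism $\mathbb{Z}^{R+1}\to\mathbb{Z}$, $(e,e_t)\mapsto\mathbf 1^{\mathsf T}e-e_t$. Kernels are subgroups, hence sublattices. To get rank exactly $R$, I would exhibit the explicit basis $\{(\mathbf e_i,1):i=1,\dots,R\}$. It spans $H$ because any $(e,e_t)\in H$ equals $\sum_i e_i(\mathbf e_i,1)$, whose last coordinate is $\mathbf 1^{\mathsf T}e=e_t$. It is independent because its projection onto the first $R$ coordinates is the identity. One could alternatively argue from the quotient $\mathbb{Z}^{R+1}/H\cong\mathbb{Z}$, but the basis is more concrete.

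For single-component patterns there are two cases. If only $e_i\ne0$, then $\mathbf 1^{\mathsf T}e=e_i\ne0=e_t$. If only $e_t\ne0$, then $\mathbf 1^{\mathsf T}e=0\ne e_t$. In both cases $d_c\ne0$ and the pattern is detected. For permutations, a reading $\pi(v_c)$ with the total read correctly has $e=\pi(v_c)-v_c$ and $e_t=0$, and $\mathbf 1^{\mathsf T}e=\mathbf 1^{\mathsf T}\pi(v_c)-\mathbf 1^{\mathsf T}v_c=0$ because summation is invariant under reordering, so the pattern lies in $H$.

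No step is a genuine obstacle. The only care needed is bookkeeping: the statement is relative to the fixed correct scope and $b_c=0$, so the permutation claim means the total is read correctly while the cells are reassigned among the constituent rows, and the figures remain nonnegative integers (the kernel claim itself ignores nonnegativity).
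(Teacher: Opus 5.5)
Your proposal is correct and follows the paper's own argument. You specialize Eq.~\eqref{eq:decomp} with $\sigma_c=b_c=0$ to get $d_c=\mathbf 1^{\mathsf T}e-e_t$, then check the single-component and permutation cases directly; your explicit basis $\{(\mathbf e_i,1)\}_{i=1}^{R}$ justifies the rank-$R$ claim, which the paper only asserts.
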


\begin{corollary}[Second constraints]
\label{cor:subtotal}
Partition the lines into groups $g\in\mathcal G$ with separately printed subtotals, read with errors $e_{s_g}$, and keep the county total. A pattern $(e,(e_{s_g})_g,e_t)$ is undetectable if and only if $\mathbf 1_g^{\mathsf T}e=e_{s_g}$ for every $g$ and $\mathbf 1^{\mathsf T}e=e_t$, so the undetectable set has codimension $|\mathcal G|+1$ in $\mathbb{Z}^{R+|\mathcal G|+1}$ instead of one; with the subtotals read correctly, a permutation is detected unless it preserves every group sum, and a compensating pair unless both members share a group.
\end{corollary}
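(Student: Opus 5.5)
The plan is to extend the argument of Proposition~\ref{prop:kernel}: each printed sum contributes one linear residual, and a pattern is undetectable exactly when every residual vanishes. Fix a correctly scoped column with consistent printed arithmetic at every level, so $\sum_{\ell\in g}v_{\ell c}=s_g$ for each group and $\sum_g s_g=t_c$. The subtotal checks then read $d_g=\mathbf 1_g^{\mathsf T}(v_c+e)-(s_g+e_{s_g})=\mathbf 1_g^{\mathsf T}e-e_{s_g}$. The total check keeps the form $d_c=\mathbf 1^{\mathsf T}e-e_t$, because the total is checked against the sum of the constituent lines. A pattern passes every check if and only if $\mathbf 1_g^{\mathsf T}e=e_{s_g}$ for all $g$ and $\mathbf 1^{\mathsf T}e=e_t$. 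This gives the characterization.

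For the codimension, I will write the undetectable set as the kernel of the integer map $A:\mathbb{Z}^{R+|\mathcal G|+1}\to\mathbb{Z}^{|\mathcal G|+1}$. The map sends $(e,e_s,e_t)$ to $\big((\mathbf 1_g^{\mathsf T}e-e_{s_g})_g,\ \mathbf 1^{\mathsf T}e-e_t\big)$. The columns of $A$ belonging to the variables $e_{s_g}$ and $e_t$ form $-I_{|\mathcal G|+1}$. So $A$ is surjective over $\mathbb{Z}$ and has rank $|\mathcal G|+1$, and its kernel is a sublattice of rank $R$.

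The kernel can also be described explicitly. Choose $e\in\mathbb{Z}^R$ freely; then $e_{s_g}$ and $e_t$ are determined by $e$. The kernel is therefore the graph of a linear map on $\mathbb{Z}^R$, which is a saturated sublattice of codimension $|\mathcal G|+1$. This is the main point of the comparison: the undetectable set is again parametrized by $e$ alone, as in Proposition~\ref{prop:kernel}. The difference is that it now consists of the reading errors $e$ that produce a consistent set of reported sums at every level of the hierarchy.

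The remaining claims specialize to $e_{s_g}=0$ for all $g$. In that case undetectability means $\mathbf 1_g^{\mathsf T}e=0$ for every $g$, and the total condition follows because the groups partition the lines.

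For a permutation $\pi$, the readings are $v_{\pi(\ell)}$, so $e_\ell=v_{\pi(\ell)}-v_\ell$. The group conditions say that each group's sum of readings equals its printed subtotal. Undetectability therefore holds exactly when $\pi$ preserves every group sum, and a permutation that changes some group's sum is caught by that group's subtotal.

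A compensating pair has $e=a(\mathbf e_i-\mathbf e_j)$ with $a\ne0$. If $i$ and $j$ lie in different groups, then $\mathbf 1_{g(i)}^{\mathsf T}e=a\neq0$ and the pair is detected. If they share a group, every group sum is zero, so the pair is undetectable.

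The argument is mostly bookkeeping. The step that needs care is the claim that the rank of the undetectable set drops by $|\mathcal G|$ and not by fewer. The identity block on the sum-reading coordinates settles it, even though $\sum_g\mathbf 1_g=\mathbf 1$ makes the constraints on $e$ alone dependent. The dependence disappears once $e_t$ and the $e_{s_g}$ are separate coordinates.
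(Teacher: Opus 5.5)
Your proof is correct and follows the paper's argument: one linear residual per printed sum, and independence of the $|\mathcal G|+1$ forms because each contains its own sum-reading variable ($e_{s_g}$ or $e_t$). The permutation and compensating-pair cases are then read off group by group, as in the paper. One small wording point: with only $e_{s_g}=0$ imposed, the total condition does not follow from the group conditions but reduces to $e_t=0$; this holds for the cell permutations and cell pairs you then treat, so nothing downstream is affected.
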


Typist~A's most frequent error, a displacement within the column, lies in $H$ whenever the displaced values stay inside the column (Section~\ref{sec:human}), which is why the sum check is composed with the constraints of Section~\ref{sec:struct} rather than trusted alone. Suppose now that two witnesses read the column as $x^{(1)}$ and $x^{(2)}$, disagreeing on $D=\{i:x^{(1)}_i\ne x^{(2)}_i\}$ with $\delta_i=x^{(2)}_i-x^{(1)}_i$.

\begin{proposition}[Arbitration by arithmetic]
\label{prop:arbitration}
Assume $\sigma_c=b_c=0$, that the total is read correctly ($e_t=0$), and that for every $i$ at least one witness has the printed figure. Then the true column is $x^{(1)}$ with $x^{(2)}$ substituted on some $T\subseteq D$, and $T$ satisfies $\sum_{i\in T}\delta_i=t_c-\mathbf 1^{\mathsf T}x^{(1)}$. The sum identifies $T$, and hence the column, if and only if that residual has a unique representation as a subset sum of $\{\delta_i\}_{i\in D}$. This holds for every residual when the $|\delta_i|$, sorted increasingly, are superincreasing (each exceeds the sum of all smaller ones), and in particular whenever $|D|=1$.
\end{proposition}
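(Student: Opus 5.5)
The plan is to reduce the statement to a bookkeeping identity and then treat identifiability as a question about the injectivity of the subset-sum map. \emph{First}, fix the true column $v_c$. By hypothesis, each $i$ has at least one witness equal to $v_{ic}$. Define $T=\{i\in D: x^{(1)}_i\ne v_{ic}\}$. For $i\notin D$ the two witnesses agree, so both equal $v_{ic}$. For $i\in T$ the second witness must equal $v_{ic}$. For $i\in D\setminus T$ the first witness does. Hence $v_c=x^{(1)}+\sum_{i\in T}\delta_i\mathbf e_i$, which is the claimed form. \emph{Second}, since $\sigma_c=b_c=0$ and $e_t=0$, Eq.~\eqref{eq:decomp} (equivalently Proposition~\ref{prop:kernel} applied to the true column) gives $\mathbf 1^{\mathsf T}v_c=t_c$. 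Summing the expression above then yields $\sum_{i\in T}\delta_i=t_c-\mathbf 1^{\mathsf T}x^{(1)}$.

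For the ``if and only if'', I would define the set of admissible columns $\mathcal C=\{x^{(1)}+\sum_{i\in T'}\delta_i\mathbf e_i: T'\subseteq D\}$. I would note that the map $T'\mapsto$ column is injective, because each $\delta_i\ne0$ for $i\in D$, so distinct subsets change distinct coordinates. The information available to the checker is exactly $x^{(1)},x^{(2)}$ and $t_c$, so a candidate $T'$ is consistent with all of it precisely when $\sum_{i\in T'}\delta_i$ equals the residual $r=t_c-\mathbf 1^{\mathsf T}x^{(1)}$. If the representation of $r$ is unique, only $T$ is consistent, and the column is determined. If some $T'\ne T$ also represents $r$, consider a second world with printed column $x^{(1)}+\sum_{T'}\delta_i\mathbf e_i$ and the same total. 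The same witnesses satisfy every hypothesis in that world, and the two worlds differ in some cell, so the sum cannot identify the column. The one point to take care with is that the alternative world must still meet the hypotheses. It does: each cell in that world agrees with one of the two witnesses by construction. Nonnegativity also holds automatically, since the entries come from the witness readings, which lie in $\mathbb{Z}_{\ge0}$.

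\emph{Finally}, the superincreasing case. Suppose two distinct subsets $A\ne B$ of $D$ have equal sums. Removing $A\cap B$ leaves disjoint nonempty-union sets $A',B'$ with $\sum_{A'}\delta_i=\sum_{B'}\delta_i$, that is, $\sum_{i\in A'\cup B'}\pm\delta_i=0$ with every sign fixed and nonzero. Let $m$ be the index in $A'\cup B'$ with the largest $|\delta_m|$. Then $|\delta_m|\le\sum_{i\in A'\cup B',\,i\ne m}|\delta_i|$, which is at most the sum of all smaller magnitudes and contradicts superincreasingness. Ties in magnitude are already excluded, because superincreasing sorted magnitudes are strictly increasing. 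So the subset sums are pairwise distinct and every residual has at most one representation. When $|D|=1$, the subsets $\emptyset$ and $\{i\}$ have sums $0$ and $\delta_i\ne0$, so the sequence is trivially superincreasing and the same conclusion holds. I expect no real obstacle here; the step needing care is the converse direction, where the alternative world has to be shown admissible.
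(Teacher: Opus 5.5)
Your proof is correct and follows the paper's route. You take $T$ as the set where $x^{(1)}$ is wrong, get the sum identity from $b_c=0$, show that a second representation yields a second footing column, and settle the superincreasing case with the largest-magnitude index in the symmetric difference. Your explicit check that the alternative column meets every hypothesis, is nonnegative, and differs from the true one (since $\delta_i\neq0$ on $D$) spells out what the paper compresses into ``and conversely.''
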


A shared error, $x^{(1)}_i=x^{(2)}_i\ne v_{ic}$, lies outside $D$ and is caught only if uncompensated; when disagreements share a magnitude, the cells go to the image (Section~\ref{sec:adjud}).

\subsection{What a Residual Says}
\label{sec:signature}
\begin{proposition}[Signature of omitted rows]
\label{prop:signature}
If a set $\Omega$ of constituent lines is omitted, none duplicated, and all other terms of Eq.~\eqref{eq:decomp} vanish in every column, the vector of residuals across the county's columns is $\boldsymbol d=-\sum_{\ell\in\Omega}\boldsymbol v_\ell$. A single omitted line is detected if and only if it prints a nonzero figure in some column, and then its figures are determined; which printed line carries them is not, and must be read from the page. From $\boldsymbol d$ alone, a single omitted line valued $-\boldsymbol d$, any set $\Omega$ with $\sum_{\ell\in\Omega}\boldsymbol v_\ell=-\boldsymbol d$, and reading errors with $\sum_ie_{ic}-e_{tc}=d_c$ in every column are observationally equivalent.
\end{proposition}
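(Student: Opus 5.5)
The plan is to specialize the decomposition Eq.~\eqref{eq:decomp} column by column. By hypothesis $b_c=0$ and every reading error vanishes, so in each column $c$ only the scope term is left: $d_c=\sigma_c$. The assumptions fix the extraction map $\rho$: it is injective (no line is duplicated) and, among constituent lines, its image is exactly $\mathcal U_k\setminus\Omega$. The premise that all other terms vanish also rules out non-constituent rows, including rows with $\rho(i)=\varnothing$. Then $\sum_i v_{\rho(i)c}=\sum_{\ell\in\mathcal U_k\setminus\Omega}v_{\ell c}$, so $\sigma_c=-\sum_{\ell\in\Omega}v_{\ell c}$. Stacking over $c$ gives $\boldsymbol d=-\sum_{\ell\in\Omega}\boldsymbol v_\ell$. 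One caveat must be made explicit: the residual exists only where $\mathrm{Cmp}_c$ holds. With no gaps assumed it holds in every column, and I will state that reading.

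Detection of a single omitted line $\ell$ then follows directly. Here $\boldsymbol d=-\boldsymbol v_\ell$, and since the entries of $\boldsymbol v_\ell$ are nonnegative integers, $\boldsymbol d\neq\mathbf 0$ if and only if $v_{\ell c}\neq0$ for some $c$. In that case $\boldsymbol v_\ell=-\boldsymbol d$, so the line's figures are determined. The claim that the line's identity is not determined calls for a witness construction. The residual vector depends on $\ell$ only through $\boldsymbol v_\ell$, so any two constituent lines printing identical figure vectors give the same $\boldsymbol d$. A line can also be relocated without changing any figure, for instance by permuting line labels while keeping the figure vectors. Either way, no function of $\boldsymbol d$ recovers the label, and the label has to be read from the page.

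For observational equivalence I will fix the observed residual vector $\boldsymbol d$ and exhibit three hypotheses that produce it, each via Eq.~\eqref{eq:decomp}.
\begin{enumerate}
\item A single omitted line whose figures are $-\boldsymbol d$. This is admissible only when $-\boldsymbol d\ge0$ componentwise, which the omission hypothesis already forces, and it gives $\boldsymbol d$ by the first part.
\item Any $\Omega$ with $\sum_{\ell\in\Omega}\boldsymbol v_\ell=-\boldsymbol d$. By the first part this produces the same vector.
\item Full scope with $\sigma=0$ and reading errors satisfying $\sum_ie_{ic}-e_{tc}=d_c$. The decomposition returns $d_c$ directly. To show such errors exist, take $e_{tc}=-d_c$ with all $e_{ic}=0$, and check that the implied total reading $t_c+d_c\ge0$ is feasible.
\end{enumerate}
The word ``equivalent'' needs to be made precise as: equal residual vectors, with the column readings themselves not compared. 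I expect this to be the main obstacle. The readings do differ across the three hypotheses, for example in the number of rows $\hat R$, so the statement has to be scoped to what the residual alone reveals. I will say so explicitly, so that the later use of Gap and magnified reading in Section~\ref{sec:diag} is consistent with the claim.
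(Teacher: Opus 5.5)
Your proposal is correct and follows the paper's own argument: substitute the hypotheses into Eq.~\eqref{eq:decomp} so that only $\sigma_c=-\sum_{\ell\in\Omega}v_{\ell c}$ survives, read detection off $\boldsymbol d=-\boldsymbol v_\ell$, and treat the equivalence claim as the observation that the three hypotheses yield the same $\boldsymbol d$. Your feasibility checks and witness constructions only make explicit what the paper leaves implicit. One small sign slip: with $e_{ic}=0$ and $e_{tc}=-d_c$ the implied total reading is $\hat t_c=t_c-d_c$, not $t_c+d_c$, and it is nonnegative automatically because $d_c\le0$.
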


Within a column, decimal notation shapes the residual. Write each figure by its digits, place $p$ carrying $10^p$.
\begin{lemma}[Place-value syndromes]
\label{lem:syndrome}
Let a correctly scoped column have exactly one wrong number, a cell or the total, misread or misprinted. (i)~If a digit $a$ at place $p$ is read or printed as $a'$, then $d_c=\pm(a'-a)10^p$ (plus for a cell, minus for the total), so $|d_c|$ has exactly one nonzero digit, which gives $|a'-a|$ and $p$. (ii)~If the number's digits appear in another order, $9\mid d_c$; adjacent digits $b,a$ at places $p+1,p$ appearing as $a,b$ give $d_c=\pm9(a-b)10^p$. (iii)~Two digit substitutions alone in a column cancel if and only if they fall at the same place and move $d_c$ by opposite amounts.
\end{lemma}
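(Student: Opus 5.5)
The whole lemma follows from the residual identity Eq.~\eqref{eq:decomp} once we fix a correctly scoped column with a single wrong number. With $\sigma_c=0$, $b_c=0$ and exactly one number wrong, the identity collapses to a single term. If the wrong number is cell $i$, then $d_c=e_{ic}=\hat V_{ic}-v_{\rho(i)c}$. If it is the total, then $d_c=-e_{tc}=-(\hat t_c-t_c)$.

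A misprint rather than a misread enters instead through $b_c$. Take the intended column, whose cells sum to the intended total; relative to it, $b_c$ is again the difference between the printed and intended figure, with the same sign convention. So in every case $d_c=\pm(\text{shown}-\text{true})$, plus for a cell and minus for the total. Each part then becomes a statement about the difference of two decimal numerals.

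For (i), write the true number as $\sum_q a_q10^q$. Changing only the digit at place $p$ from $a$ to $a'$ gives difference $(a'-a)10^p$. Since $1\le|a'-a|\le9$, the number $|d_c|$ has the single nonzero digit $|a'-a|$, at place $p$, and both can be read off it. The sign of $d_c$ tells the direction of the change once we know whether a cell or the total is wrong.

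For (ii), any number is congruent to its digit sum modulo $9$, because $10\equiv1 \pmod 9$. A reordering of the digits preserves the digit sum, so the two numbers are congruent mod $9$ and $9\mid d_c$. For the adjacent transposition, the shown minus true difference is
\[
(a\cdot10^{p+1}+b\cdot10^p)-(b\cdot10^{p+1}+a\cdot10^p)=9(a-b)10^p .
\]
Attaching the cell/total sign gives $\pm9(a-b)10^p$.

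Part (iii) is the only step where the hypotheses need care. Here the column has two substitutions, and the single-wrong-number hypothesis of the lemma is relaxed for this part. Two substitutions contribute $s_1=\epsilon_1k_110^{p}$ and $s_2=\epsilon_2k_210^{q}$, where $1\le k_1,k_2\le9$ and each $\epsilon_j=\pm1$ absorbs both the direction of the digit change and the cell/total sign. They cancel iff $s_1+s_2=0$.

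If $p=q$, cancellation holds iff $\epsilon_1k_1=-\epsilon_2k_2$, that is, they move $d_c$ by opposite amounts. If $p<q$, then $|s_2|\ge10^{q}\ge10^{p+1}>9\cdot10^p\ge|s_1|$, so $s_1+s_2\neq0$. For this bound to work, a substitution must change one digit of one numeral, keeping $|\Delta|\le9$ at its place. One subtlety is two substitutions in the same number at different places; the same magnitude bound still rules out cancellation. The converse is immediate: equal places with opposite amounts sum to zero.
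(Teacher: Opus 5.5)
Your proposal is correct and follows the paper's proof essentially step for step. Both reduce Eq.~\eqref{eq:decomp} to $d_c=\pm(\text{shown}-\text{true})$, with a misprint entering through $b_c$. Both then use the single-digit difference $(a'-a)10^p$, casting out nines together with the explicit adjacent-transposition computation, and the magnitude bound $10^{q}>9\cdot 10^{p}$ for $p<q$ in part (iii), including the case of two substitutions within one number.
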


A residual also bears odds on which printed number is wrong.
\begin{proposition}[One among $R+1$]
\label{prop:which}
Suppose each of the $R+1$ printed numbers of a column, the $R$ cells and the total, is defective independently, with probability $\beta_b$ for a cell and $\beta_t$ for the total. Conditional on the column containing exactly one defective number, the probability that it is the total is
\begin{equation}
 \frac{\beta_t(1-\beta_b)}{\beta_t(1-\beta_b)+R\beta_b(1-\beta_t)}\approx\frac{\beta_t}{\beta_t+R\beta_b},
 \label{eq:which}
\end{equation}
which is $1/(R+1)$ when $\beta_t=\beta_b$. Over failed columns with lengths $R_c$ and one defect each at equal rates, the expected number whose total is the defective figure is $\sum_c1/(R_c+1)$.
\end{proposition}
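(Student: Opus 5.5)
This is a Bayes calculation over the $2^{R+1}$ independent defect indicators, followed by linearity of expectation. Let $Z_0$ indicate that the total is defective and $Z_1,\dots,Z_R$ that the cells are. These are independent Bernoulli variables with parameters $\beta_t$ and $\beta_b$.

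The event ``exactly one defective number'' is the disjoint union of $R+1$ elementary events $E_j=\{Z_j=1,\ Z_{j'}=0\ \forall j'\ne j\}$. By independence:
\begin{align*}
\Pr(E_0)&=\beta_t(1-\beta_b)^R,\\
\Pr(E_i)&=\beta_b(1-\beta_b)^{R-1}(1-\beta_t)\quad\text{for each } i\ge1.
\end{align*}
The conditional probability that the defective number is the total is $\Pr(E_0)/\bigl(\Pr(E_0)+\sum_{i=1}^R\Pr(E_i)\bigr)$. Dividing numerator and denominator by the common factor $(1-\beta_b)^{R-1}$ gives exactly the left side of Eq.~\eqref{eq:which}.

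The approximation follows by dropping the factors $(1-\beta_b)$ and $(1-\beta_t)$. To make this precise, I would note that their ratio $(1-\beta_b)/(1-\beta_t)$ equals $1+O(\beta_b+\beta_t)$, so the relative error is first order in the defect rates. Setting $\beta_t=\beta_b=\beta$ makes the common factor $\beta(1-\beta)$ cancel, which gives $1/(R+1)$ exactly, with no approximation.

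For the aggregate claim, I read ``one defect each'' as conditioning each failed column $c$ on exactly one defective number. I treat the columns as independent, or at least note that linearity of expectation does not need independence. Let $Y_c$ indicate that column $c$'s defect is its total. By the equal-rate case, $\mathbb E[Y_c]=1/(R_c+1)$. Summing over the failed columns gives $\mathbb E\sum_cY_c=\sum_c1/(R_c+1)$.

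There is no real obstacle here. The only point needing care is stating clearly that we condition on exactly one defect, not on the column failing. A column with two defects may fail or may cancel (Lemma~\ref{lem:syndrome}), and the proposition deliberately sets that case aside. Under that conditioning, a single defect is always detected, by Proposition~\ref{prop:kernel}, so in this idealized model ``failed'' and ``exactly one defect'' coincide.
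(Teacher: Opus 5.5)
Your proof is correct and is essentially the paper's: the same Bayes computation with $\beta_t(1-\beta_b)^R$ against $R\beta_b(1-\beta_t)(1-\beta_b)^{R-1}$, the approximation obtained by dropping the $1-\beta$ factors, and, at equal rates, the uniform position of the single defect among $R+1$ exchangeable numbers, summed by linearity of expectation. Only your closing remark overreaches: one defect implies failure, but columns with two or more non-cancelling defects also fail, so ``failed'' and ``exactly one defect'' are not the same event; this does not affect your argument, because you condition on exactly one defect throughout.
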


The prior odds are $R$ to one against the total, which is why a residual is read as a body problem until the total has been reinspected; Section~\ref{sec:statewide} tests the rule against the canvass.

\subsection{Detection, False Acceptance, and Silent Error}
\label{sec:fa}
Call a cell \emph{defective} if it is unreadable, misread, or misprinted relative to the canvass (the official county returns of Section~\ref{sec:statewide}), and suppose defects occur independently with probability $\theta$ per cell. A numeric defect changes the cell by a nonzero amount drawn, independently of everything else, from a law $q$ on $\mathbb{Z}\setminus\{0\}$; let $\theta_e\le\theta$ be the rate of numeric defects and
\begin{equation}
 \kappa=\max_{k\ge2}\Pr\Big[\textstyle\sum_{i=1}^{k}\varepsilon_i=0\Big],
 \label{eq:kappa}
\end{equation}
the cancellation probability, where the $\varepsilon_i$ are independent with law $q$. Assume the total is read correctly and is the canvass figure, and that $\mathrm{Cred}_c$ holds.

\begin{lemma}[Cancellation]
\label{lem:kappa}
For every law $q$, $\kappa\le\tfrac12$. If $q$ is symmetric, $\kappa=\Pr[\varepsilon_1+\varepsilon_2=0]=\sum_mq(m)^2$, and the maximum in Eq.~\eqref{eq:kappa} is attained at $k=2$.
\end{lemma}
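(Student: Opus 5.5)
The plan is to treat the general bound $\kappa\le\tfrac12$ and the symmetric identity separately. The general case uses a one-step conditioning recursion on the partial sums $S_j=\sum_{i\le j}\varepsilon_i$. The symmetric case uses the characteristic function $\varphi(t)=\sum_m q(m)e^{\mathrm{i}mt}$ and the inversion formula $\Pr[S_k=0]=\frac1{2\pi}\int_{-\pi}^{\pi}\varphi(t)^k\,dt$.

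\emph{General law.} Let $p=\max_m q(m)$, attained at some $a\ne0$. Conditioning on $S_{k-1}$ gives $\Pr[S_k=0]=\Pr[\varepsilon_k=-S_{k-1}]\le p$, so the bound is immediate when $p\le\tfrac12$. Assume now that $p>\tfrac12$. Then every other atom has mass at most $1-p<\tfrac12$. I will prove by induction on $j\ge1$ the stronger claim
\[
\Pr[S_j=c]\le\tfrac12\quad\text{for every integer } c\ne ja .
\]
This suffices, because $0\ne ka$ for every $k\ge2$.

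The base case $j=1$ is immediate: $c\ne a$ gives $q(c)\le 1-p<\tfrac12$. For the step, split on whether $\varepsilon_j=a$ and write $x=\Pr[S_{j-1}=c-a]$. The events $\{S_{j-1}=c-m\}$ for different $m$ are disjoint, and each $q(m)$ with $m\ne a$ is at most $1-p$. Hence
\[
\Pr[S_j=c]\le p\,x+(1-p)(1-x)=(1-p)+x(2p-1).
\]
Since $c\ne ja$, we have $c-a\ne (j-1)a$, so the inductive hypothesis gives $x\le\tfrac12$. (When $j-1=0$, the fact $c-a\ne0$ gives $x=0$.) Because $2p-1>0$, the right side is at most $(1-p)+\tfrac12(2p-1)=\tfrac12$, which closes the induction.

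The main obstacle is finding the right inductive invariant. The natural attempt, bounding $\Pr[S_j=c]$ by $1-p$, does not propagate. The bound $\tfrac12$ for all targets off the ``all-$a$'' diagonal is exactly self-reproducing under the linear recursion above, and that is why I choose it.

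\emph{Symmetric law.} If $q(m)=q(-m)$, then $\varphi$ is real-valued with $|\varphi|\le1$. For $k\ge2$ this gives $\varphi(t)^k\le|\varphi(t)|^k\le\varphi(t)^2$ pointwise. Integrating and applying Parseval,
\[
\Pr[S_k=0]\le\frac1{2\pi}\int_{-\pi}^{\pi}\varphi(t)^2\,dt=\sum_m q(m)^2 .
\]
The right side also equals $\Pr[\varepsilon_1+\varepsilon_2=0]=\sum_m q(m)q(-m)=\sum_m q(m)^2$, again by symmetry. So the maximum in Eq.~\eqref{eq:kappa} is attained at $k=2$, and $\kappa=\sum_m q(m)^2$. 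As a consistency check, symmetry and $q(0)=0$ force $p\le\tfrac12$, so $\sum_m q(m)^2\le p\le\tfrac12$, in agreement with the general bound.
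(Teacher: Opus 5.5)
Your proof is correct and follows the paper's route. For the general bound you use the same one-step recursion $\Pr[S_j=c]\le(1-p)+(2p-1)\Pr[S_{j-1}=c-a]$; for the symmetric case you use the same pointwise comparison $\varphi^k\le\varphi^2$ of the real characteristic function. The only difference is packaging. The paper unrolls the recursion explicitly along the targets $-(k-r)m_0$ down to $\Pr[S_0=-km_0]=0$, which gives the slightly sharper $\Pr[S_k=0]\le(1-\varrho^k)/2<\tfrac12$, and it treats $M=1$ separately. Your fixed-point invariant $\tfrac12$ off the diagonal $c=ja$ gives $\le\tfrac12$ and covers $p=1$ without a separate case.
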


\begin{corollary}[Place and digit]
\label{cor:place}
If every numeric defect is one substitution, at place $p$ with probability $w_p$ and changing the cell by $\delta10^p$ with $\delta$ drawn from a symmetric law $P$ independent of $p$, then $\kappa=\big(\sum_pw_p^2\big)\big(\sum_\delta P(\delta)^2\big)$: the collision entropy $-\log\kappa$ of the error law is the sum of those of place and digit. A uniform digit misread as a uniformly chosen other digit has $\sum_\delta P(\delta)^2=19/270$.
\end{corollary}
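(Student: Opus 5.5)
The plan is to reduce the corollary to Lemma~\ref{lem:kappa} and then compute a collision probability by counting. Under the stated model a numeric defect is $\varepsilon=\delta\,10^p$, with $p\sim w$ and $\delta\sim P$ independent. The law $P$ is symmetric, so $-\varepsilon=(-\delta)10^p$ has the same law as $\varepsilon$, and the induced law $q$ is symmetric. Lemma~\ref{lem:kappa} then gives $\kappa=\Pr[\varepsilon_1+\varepsilon_2=0]=\sum_m q(m)^2$, with the maximum attained at $k=2$. It remains to evaluate $\sum_m q(m)^2$.

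The key step is to show that the map $(p,\delta)\mapsto\delta\,10^p$ is injective on the support. A single digit substitution changes a digit $a$ to $a'\ne a$ with $a,a'\in\{0,\dots,9\}$. Hence $\delta=a'-a\in\{\pm1,\dots,\pm9\}$, and $|\delta\,10^p|$ has exactly one nonzero decimal digit, namely $|\delta|$ at place $p$. This is the observation already behind Lemma~\ref{lem:syndrome}(i). Both $p$ and $\delta$ are therefore read off from $m=\delta\,10^p$, which gives $q(\delta10^p)=w_pP(\delta)$ and no other mass. Summing,
\begin{equation*}
\sum_m q(m)^2=\sum_p\sum_\delta w_p^2P(\delta)^2=\Big(\sum_pw_p^2\Big)\Big(\sum_\delta P(\delta)^2\Big).
\end{equation*}
Taking $-\log$ turns the product into a sum of the collision entropies of place and digit. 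This injectivity is the only point that needs care. If $|\delta|$ could reach $10$, distinct pairs $(p,\delta)$ could collide and the factorization would fail; I will make explicit that the bound $|\delta|\le9$ comes from $\delta$ being a difference of two decimal digits.

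For the numerical claim, take $a$ uniform on $\{0,\dots,9\}$ and $a'$ uniform on the other nine digits. The $90$ ordered pairs $(a,a')$ are then equally likely. The number of pairs with $a'-a=d$ is $10-|d|$ for $1\le|d|\le9$, so $P(\delta)=(10-|\delta|)/90$, which is symmetric as the first part requires. Then
\begin{equation*}
\sum_\delta P(\delta)^2=\frac{2\sum_{k=1}^{9}k^2}{8100}=\frac{570}{8100}=\frac{19}{270}.
\end{equation*}
No step beyond the injectivity argument seems to pose an obstacle; the rest is bookkeeping.
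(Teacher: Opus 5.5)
Your proposal is correct and follows essentially the same route as the paper. Both arguments inherit the symmetry of $q$ from $P$ and use Lemma~\ref{lem:kappa} to get $\kappa=\sum_m q(m)^2$. Both use the single-nonzero-digit observation of Lemma~\ref{lem:syndrome}(i) to make $(p,\delta)\mapsto\delta10^p$ injective on $\{0,1,\dots\}\times\{\pm1,\dots,\pm9\}$, factor the sum, and count $P(\delta)=(10-|\delta|)/90$ to reach $570/8100=19/270$. Your explicit remark that the bound $|\delta|\le9$ is what rules out collisions such as $10\cdot10^0=1\cdot10^1$ is a useful clarification that the paper leaves implicit.
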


\begin{proposition}[Detection and false acceptance]
\label{prop:detect}
For a column of $R$ cells,
\begin{equation}
 1-(1-\theta)^R-\tbinom R2\theta_e^2\kappa\;\le\;\Pr[\text{fail}]\;\le\;1-(1-\theta)^R,
 \label{eq:detect}
\end{equation}
and the probability that a column passes while containing a defect is at most $\binom R2\theta_e^2\kappa\le(R\theta_e)^2\kappa/2$, or $\binom R2\theta_e^2\kappa/(1-\theta)^R$ conditional on passing.
\end{proposition}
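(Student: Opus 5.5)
The approach is to work with the event structure of a single column under the independence model of Section~\ref{sec:fa}. Let $A$ be the event that the column contains at least one defective cell. Under independence, $\Pr[A]=1-(1-\theta)^R$. We assume a correctly read total, $\mathrm{Cred}_c$, and correct scope and source ($\sigma_c=b_c=0$, as in Section~\ref{sec:kernel}). Under these assumptions a column with no defective cell is complete and has $d_c=0$, so it passes. Hence $\{\text{fail}\}\subseteq A$, which gives the upper bound in Eq.~\eqref{eq:detect} at once. A defective cell that is unreadable makes $\mathrm{Cmp}_c$ fail, so the column fails. It follows that a column that passes while defective contains only numeric defects, at least one of them, whose changes sum to zero.

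For the lower bound we have $\Pr[\text{fail}]=\Pr[A]-\Pr[A\wedge\text{pass}]$, so it suffices to bound $\Pr[A\wedge\text{pass}]$ by $\binom R2\theta_e^2\kappa$. A column in $A\wedge\text{pass}$ has at least one defect, and every defect is numeric. If exactly one cell is defective, Proposition~\ref{prop:kernel} shows the residual is the nonzero change, so the column fails. So passing requires a set $S$ of numeric-defect cells with $|S|=k\ge2$ and $\sum_{i\in S}\varepsilon_i=0$.

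We will condition on $S$. Given the set of numeric-defect cells, the $\varepsilon_i$ are i.i.d.\ with law $q$, so by Eq.~\eqref{eq:kappa} the cancellation probability is at most $\kappa$ for every $k\ge2$. Summing over $k$ gives
\[
\Pr[A\wedge\text{pass}]\le\kappa\,\Pr[\text{at least two numeric defects}].
\]
It remains to bound $\Pr[N\ge2]$ by $\binom R2\theta_e^2$, where $N$ counts numeric defects. The clean way is a union bound over pairs: the event $N\ge2$ implies that some pair of cells is jointly numeric-defective, so $\Pr[N\ge2]\le\binom R2\theta_e^2$.

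We expect this pairing step to be the main obstacle, because conditioning on the whole of $S$ and then union-bounding pairs must not double-count. The way around it is to avoid conditioning on all of $S$ and instead fix a pair $\{i,j\}$ of numeric defects. Then $\kappa$ must bound the chance that the remaining defects cancel these two. The concern is that cancellation of an arbitrary $k$-sum might not be controlled by $\kappa$ for the pair alone.

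The cleaner route keeps the conditioning: write
\[
\Pr[A\wedge\text{pass}]=\sum_{k\ge2}\Pr[N=k]\,\Pr\Big[\textstyle\sum_{i=1}^{k}\varepsilon_i=0\Big]\le\kappa\Pr[N\ge2],
\]
which rests on the independence of the changes from the defect locations. Then use $\Pr[N\ge2]\le\mathbb E\binom N2=\binom R2\theta_e^2$. This is exact because $N\sim\mathrm{Bin}(R,\theta_e)$, so the moment bound holds without any union-bound subtlety.

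The remaining statements follow directly. The bound $\binom R2\le R^2/2$ gives $(R\theta_e)^2\kappa/2$. For the conditional form, $\Pr[\text{pass}]\ge\Pr[\neg A]=(1-\theta)^R$, because every defect-free column passes; dividing by this yields $\binom R2\theta_e^2\kappa/(1-\theta)^R$.
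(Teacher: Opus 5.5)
Your proof is correct and is essentially the paper's argument: a defect-free column passes, so $\Pr[\text{fail}]\le 1-(1-\theta)^R$; an unreadable cell fails (G1) and a lone numeric defect fails (G2); a passing defective column therefore needs $k\ge2$ numeric defects, which cancel with probability at most $\kappa$; and your bound $\Pr[N\ge2]\le\mathbb E\binom N2=\binom R2\theta_e^2$ is the paper's union bound over pairs. One small fix: in your displayed identity the weight on $\Pr[\sum_{i=1}^k\varepsilon_i=0]$ should be the probability that $N=k$ \emph{and} no cell is unreadable, not $\Pr[N=k]$, so as written the equality holds only as an inequality, which is all the bound requires.
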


A user of the data cares about cells, not columns. Let $Z$ be the number of wrong cells in a column and call $s=\mathbb E[Z\,\mathbf 1\{\text{pass}\}]/(R\Pr[\text{pass}])$ the \emph{silent error rate} of the cells of \Exact\ columns; without the gate it is $\theta_e$.

\begin{theorem}[Silent error]
\label{thm:silent}
Let $p_2=\sum_mq(m)q(-m)$ and $\rho=\theta_e/(1-\theta)$. For every $R$,
\begin{equation}
 (R-1)\,p_2\,\theta_e^{2}(1-\theta)^{R-2}\;\le\;s\;\le\;(R-1)\,\kappa\,\rho^{2}(1+\rho)^{R-2}.
 \label{eq:silent}
\end{equation}
If $q$ is symmetric, $p_2=\kappa$, the bounds differ by a factor of at most $(1-\theta)^{-2(R-1)}$, and $s=(R-1)\kappa\theta_e^2\,(1+O(R\theta))$ for $R\theta\le1$.
\end{theorem}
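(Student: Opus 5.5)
The plan is to write both the numerator and the denominator of $s$ as explicit sums over the number of numeric defects in the column, and then to bound the terms one at a time.

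\emph{Setup.} Under the model of Section~\ref{sec:fa} each cell independently falls into one of three classes: correct with probability $1-\theta$, numerically defective with probability $\theta_e$ (shifted by $\varepsilon\sim q$), or non-numerically defective with probability $\theta-\theta_e$. Since the total is correct and $\mathrm{Cred}_c$ holds, I will take the non-numeric defects to be the unreadable cells, so that they make the column incomplete and (G1) fails. A column therefore passes iff it has no non-numeric defect and its $Z=k$ numeric shifts satisfy $S_k=\sum_{i\le k}\varepsilon_i=0$.

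Only $k\ge2$ contributes to $\mathbb E[Z\mathbf 1\{\text{pass}\}]$, since $k=0$ has $Z=0$ and $k=1$ is detected by Proposition~\ref{prop:kernel}. Hence
\[
\mathbb E[Z\mathbf 1\{\text{pass}\}]=\sum_{k\ge2}k\tbinom Rk\theta_e^k(1-\theta)^{R-k}\Pr[S_k=0].
\]
For the denominator, $\Pr[\text{pass}]$ lies between $(1-\theta)^R$ (the defect-free event) and $1$.

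\emph{Lower bound.} I keep only the $k=2$ term, where $\Pr[S_2=0]=\sum_m q(m)q(-m)=p_2$, and I use $\Pr[\text{pass}]\le1$. This gives
\[
s\ge\frac{2\binom R2 p_2\theta_e^2(1-\theta)^{R-2}}{R}=(R-1)p_2\theta_e^2(1-\theta)^{R-2}.
\]

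\emph{Upper bound.} I bound $\Pr[S_k=0]\le\kappa$ by the definition in Eq.~\eqref{eq:kappa}, and $\Pr[\text{pass}]\ge(1-\theta)^R$. Dividing each term by $(1-\theta)^R$ turns $\theta_e^k(1-\theta)^{-k}$ into $\rho^k$. The identity $k\binom Rk=R\binom{R-1}{k-1}$ cancels the factor $R$, which leaves
\[
s\le\kappa\sum_{k\ge2}\tbinom{R-1}{k-1}\rho^k=\kappa\rho\big[(1+\rho)^{R-1}-1\big].
\]
The one step needing care is the last bound. By the mean value theorem, $(1+\rho)^{R-1}-1\le(R-1)\rho(1+\rho)^{R-2}$, which yields the stated right-hand side $(R-1)\kappa\rho^2(1+\rho)^{R-2}$. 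I expect this to be the only nontrivial estimate.

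\emph{Symmetric case.} If $q$ is symmetric, then $p_2=\sum_m q(m)^2$, and this equals $\kappa$ by Lemma~\ref{lem:kappa}. The ratio of the upper to the lower bound is
\[
(1-\theta)^{-2}\big[(1+\rho)/(1-\theta)\big]^{R-2}.
\]
Since $\theta_e\le\theta$, we have $1+\rho\le 1+\theta/(1-\theta)=(1-\theta)^{-1}$. The ratio is therefore at most $(1-\theta)^{-2}(1-\theta)^{-2(R-2)}=(1-\theta)^{-2(R-1)}$.

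\emph{Asymptotics.} For $R\theta\le1$, both $(1-\theta)^{R-2}$ and $(1-\theta)^{-2}(1+\rho)^{R-2}\le(1-\theta)^{-R}$ equal $1+O(R\theta)$. Both bounds are then $(R-1)\kappa\theta_e^2(1+O(R\theta))$, and $s$ is sandwiched between them, which gives the stated expansion.
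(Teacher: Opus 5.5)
Your proposal is correct and follows the paper's proof essentially step for step. You make the same expansion of $\mathbb E[Z\,\mathbf 1\{\text{pass}\}]$ by the number of numeric defects, keep the $k=2$ term for the lower bound, and use $p_k\le\kappa$ with $\Pr[\text{pass}]\ge(1-\theta)^R$ and the mean value theorem for the upper bound. The symmetric-case ratio is bounded the same way, via $1+\rho\le(1-\theta)^{-1}$.
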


The gate turns a first-order error rate into one of at most second order, exactly second order with constant $(R-1)\kappa$ when errors are symmetric, but the constant grows with $R$: the sum protects a 45-row column far better than a 700-row one. Two channels lie outside the model: a misread total matched exactly by one cell defect, of probability about $R\theta_e\pi_t\kappa$ for a total-misread rate $\pi_t$ the test bed does not measure, and the sum-preserving patterns of Proposition~\ref{prop:kernel}. The gate also refuses to repair a column from its total, and the refusal is arithmetic, not caution.

\begin{proposition}[Repair by arithmetic]
\label{prop:repair}
Let one cell of a column be unreadable, the other $R-1$ carry numeric defects independently with probability $\theta_e$, and the total be misread with probability $\pi_t$. Writing into the unreadable cell the value that makes the column foot satisfies (G2) whatever the errors, and the written value is wrong exactly when the other $R$ numbers carry a nonzero net error; the expected share of wrong cells lies between $(R-1)\theta_e\big(1+(1-\theta_e)^{R-2}(1-\pi_t)\big)/R$ and $\big(2(R-1)\theta_e+\pi_t\big)/R$.
\end{proposition}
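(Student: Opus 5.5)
The plan is to write the filled-in value explicitly as the true figure plus a net error. Then (G2) follows as an identity, and both bounds reduce to counting the wrong cells among the $R-1$ read cells plus one indicator.

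\emph{Setup and the filled value.} Let $j$ be the unreadable cell. Work in the correctly scoped, arithmetically consistent setting of Section~\ref{sec:kernel}, where $\sigma_c=b_c=0$, so $t_c=\sum_i v_{ic}$. Write the readings as $\hat V_{ic}=v_{ic}+e_{ic}$ for $i\ne j$ and $\hat t_c=t_c+e_{tc}$. Each $e_{ic}$ is $0$ or, with probability $\theta_e$, a nonzero draw from $q$. Likewise $e_{tc}\ne0$ with probability $\pi_t$. All of these are independent.

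The value that makes the column foot is $w=\hat t_c-\sum_{i\ne j}\hat V_{ic}$. Substituting gives
\[
w=v_{jc}+\Big(e_{tc}-\sum_{i\ne j}e_{ic}\Big).
\]
With $w$ written in, $d_c=\sum_{i\ne j}\hat V_{ic}+w-\hat t_c=0$ identically, so (G2) holds whatever the errors. Also $w\ne v_{jc}$ exactly when the net error $N=e_{tc}-\sum_{i\ne j}e_{ic}$ over the other $R$ printed numbers is nonzero. This proves the first two claims.

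\emph{The expected share.} The released column has $R$ cells. The number of wrong cells is $Z'+\mathbf 1\{N\ne0\}$, where $Z'$ counts defective cells among the $R-1$ read ones. Since $\mathbb E Z'=(R-1)\theta_e$, the expected share is $\big((R-1)\theta_e+\Pr[N\ne0]\big)/R$. It remains to bound $\Pr[N\ne0]$ on both sides.

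For the upper bound, $N\ne0$ requires at least one of the $R$ other numbers to be in error. A union bound gives $\Pr[N\ne0]\le(R-1)\theta_e+\pi_t$, which yields $\big(2(R-1)\theta_e+\pi_t\big)/R$.

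For the lower bound, restrict to a subevent: exactly one of the $R-1$ read cells is defective and the total is read correctly. Then $N$ equals minus a single draw from $q$, which is nonzero because $q$ lives on $\mathbb{Z}\setminus\{0\}$. By independence this event has probability $(R-1)\theta_e(1-\theta_e)^{R-2}(1-\pi_t)$. Adding $(R-1)\theta_e$ and dividing by $R$ gives the stated lower bound. Other routes to $N\ne0$, such as a misread total alone or uncancelled multiple defects, are simply dropped, which is why this is only a lower bound.

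\emph{Main obstacle.} There is little real difficulty here. The step needing care is the lower bound. One must choose a subevent on which $N\ne0$ holds deterministically, and only single-defect configurations qualify, since multiple defects can cancel with probability up to $\kappa$ (Lemma~\ref{lem:kappa}). One must also use that cell and total errors are independent so the probabilities multiply.
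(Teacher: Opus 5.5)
Your proposal is correct and follows the paper's proof essentially step for step. Like the paper, you write the filled value as $w=v_{jc}+N$ with $N=e_{tc}-\sum_{i\ne j}e_{ic}$, which gives (G2) identically, and you count wrong cells as $Z'+\mathbf 1\{N\ne0\}$. You then bound $\Pr[N\ne0]$ above by the union bound and below by the subevent of exactly one defective read cell with a correctly read total.
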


A repaired county-sized column carries nearly twice the error rate of unchecked readings behind an empty certificate; a gated one, the rate of Theorem~\ref{thm:silent}.

\subsection{Where the Blocks Are}
\label{sec:infer}
The footing count of Eq.~\eqref{eq:foot} has an exact statistical meaning under a plain model $\mathcal M$ of a printed table: given the cut, each constituent-line reading in column $c$ is drawn independently from a law $f_c$ on $\mathbb{Z}_{\ge0}$, positive at every reading, and in each column the total line of each block reproduces the block's column sum with probability $1-\varepsilon_c\in(0,1)$ and is otherwise drawn from $f_c$, like a constituent figure, all independently.

\begin{theorem}[Segmentation as inference]
\label{thm:infer}
Under $\mathcal M$, for every admissible $K$-cut $\mathbf j$ of fully read lines, with $B_k$ its $k$th block,
\begin{equation}
 \log\Pr[x\mid\mathbf j]=\mathrm{const}(x)+\sum_{k=1}^{K}\sum_{c=1}^{C}\omega_c(x_{j_kc})\,\mathbf 1\{B_k\text{ foots in }c\},
 \label{eq:loglik}
\end{equation}
where $\omega_c(u)=\log\big(1+(1-\varepsilon_c)/(\varepsilon_cf_c(u))\big)$. (i)~Eq.~\eqref{eq:dp} with block score $\sum_c\omega_c(x_{jc})[\text{foot}_c]$ returns a maximum-likelihood cut; if the $\varepsilon_c$ are equal and the $f_c$ uniform on ranges of one length holding every reading, that score is a constant multiple of $\phi$. (ii)~Under a uniform prior on admissible cuts, Eq.~\eqref{eq:dp} with $(\max,+)$ replaced by $(+,\times)$, run forward and backward, gives the posterior probability that line $j$ closes block $k$, for all $j$ and $k$, in the time of Lemma~\ref{lem:dp}. (iii)~A block whose last line is not a total line foots in column $c$ with probability at most $\max_uf_c(u)$, independently across columns.
\end{theorem}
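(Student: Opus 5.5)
The plan is to write the likelihood of the readings given a cut as a product over columns and blocks, then split each factor into a part that does not depend on the cut and a part that does only through the footing indicators. Fix an admissible cut $\mathbf j$ of fully read lines. Under $\mathcal M$, every line is either a constituent line of some block or the last line $j_k$ of block $B_k$. The readings are independent across columns and lines given the cut and the block sums. So
\[
\Pr[x\mid\mathbf j]=\prod_{c}\prod_{k}\Big[\prod_{r\in B_k,\,r\neq j_k}f_c(x_{rc})\Big]\Big[(1-\varepsilon_c)\mathbf 1\{\text{foot}_c\}+\varepsilon_cf_c(x_{j_kc})\Big],
\]
where $\text{foot}_c$ means $x_{j_kc}$ equals the block's column sum. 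This step is mechanical.

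The key step is to multiply and divide each total-line factor by $f_c(x_{j_kc})$. Every line then contributes $f_c(x_{rc})$, and the product $\prod_c\prod_{r=1}^{N}f_c(x_{rc})$ is independent of $\mathbf j$. What remains for each block is the factor $\varepsilon_c+(1-\varepsilon_c)\mathbf 1\{\text{foot}_c\}/f_c(x_{j_kc})$. I would write this as $\varepsilon_c\big(1+\mathbf 1\{\text{foot}_c\}(1-\varepsilon_c)/(\varepsilon_cf_c(x_{j_kc}))\big)$, using $f_c>0$ on every reading. The factor $\varepsilon_c^{K}$ is again cut-independent because $K$ is fixed. Taking logs gives Eq.~\eqref{eq:loglik}, with $\omega_c$ as stated and $\mathrm{const}(x)=\sum_{c,r}\log f_c(x_{rc})+K\sum_c\log\varepsilon_c$.

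Part (i) then follows by applying Lemma~\ref{lem:dp} to the additive block score $S(i,j)=\sum_c\omega_c(x_{jc})[\text{foot}_c(i,j)]$. The exactness argument in Lemma~\ref{lem:dp} uses only additivity of the score over blocks, not the specific form of $\phi$. In the special case $f_c\equiv1/m$ on a common range and $\varepsilon_c\equiv\varepsilon$, $\omega_c$ equals the constant $\log(1+(1-\varepsilon)m/\varepsilon)$, so $S$ is that constant times $\phi$. For the $\bot$ convention in $\phi$, I restrict to fully read lines, as the statement does.

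For (ii), the posterior under a uniform prior is proportional to $\prod_k\exp S(B_k)$. I would define forward sums $A(k,j)=\sum_iA(k-1,i)\,e^{S(i+1,j)}$ and backward sums $B(k,j)$ symmetrically over the same range of $i$. Then $\Pr[\text{line }j\text{ closes block }k\mid x]=A(k,j)B(k,j)/A(K,N)$, which is the standard semiring substitution in the same loop structure and so has the same complexity. The one point to note is that $e^{S}$ requires the $\omega$-weighted score. For (iii), take a block whose last line is a constituent line. Its reading $x_{jc}$ is drawn from $f_c$ independently of the preceding lines in the block, so it equals their (fixed) sum with probability $f_c(\Delta P_c)\le\max_uf_c(u)$. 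Conditioning on the other lines and using independence across columns gives the product bound.

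I expect the main obstacle to be (iii) when the block straddles a true total line, so that the preceding sum itself involves a total-line reading that is correlated with its own block. I would handle this by conditioning on all readings except line $j$. Since $j$ is a constituent line, its reading is independent of everything else given the cut, so the bound still holds.
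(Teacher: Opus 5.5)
Your factorization, the multiply-and-divide by $f_c(x_{j_kc})$, the constant $\sum_{c,r}\log f_c(x_{rc})+K\sum_c\log\varepsilon_c$, and your arguments for (i) and (ii) match the paper's proof step for step. The problem is your last paragraph, on (iii). The claim that a constituent line's reading ``is independent of everything else given the cut'' is false. If $j$ is a body line of the true block $B_m$, the total line $j^\star_m>j$ of that block equals, with probability $1-\varepsilon_c$, a body sum that contains $x_{jc}$.

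Conditioning on all readings except line $j$ therefore conditions on that later total $t=x_{j^\star_mc}$ and on the other body lines of $B_m$ (sum $s'$, say). The conditional law of $x_{jc}$ is then no longer $f_c$: for $u_0=t-s'\ge0$ with $f_c(u_0)>0$,
\[
\Pr[x_{jc}=u_0\mid\text{all other readings}]=\frac{f_c(u_0)\big((1-\varepsilon_c)+\varepsilon_cf_c(t)\big)}{(1-\varepsilon_c)f_c(u_0)+\varepsilon_cf_c(t)},
\]
which tends to $1$ as $\varepsilon_c\to0$. Whenever the candidate sum $\sum_{r=i}^{j-1}x_{rc}$ happens to equal $u_0$, the conditional footing probability exceeds $\max_uf_c(u)$. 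So the bound does not hold under the conditioning you propose.

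The obstacle you were trying to remove is not there. Condition only on the past, lines $1,\dots,j-1$, as the paper does. Under $\mathcal M$ every total line depends only on earlier lines of its own block, so the only reading that depends on line $j$ comes after it, and line $j$ is independent of lines $1,\dots,j-1$ jointly. The candidate sum is a function of those lines, whatever correlations a straddled true total creates among them. Hence $\Pr\big[x_{jc}=\textstyle\sum_{r=i}^{j-1}x_{rc}\mid x_1,\dots,x_{j-1}\big]=f_c\big(\textstyle\sum_{r=i}^{j-1}x_{rc}\big)\le\max_uf_c(u)$, and independence across columns gives the product.

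Your first paragraph on (iii) was already this argument, provided the conditioning there is on the preceding lines only. The proposed fix should simply be dropped.
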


Part~(i) makes the footing count, the deployed objective without its small cue terms, which act as a log prior, exact maximum likelihood when footing columns carry equal evidence, and weighs each: footing on a total $u$ multiplies the likelihood by $1+(1-\varepsilon_c)/(\varepsilon_cf_c(u))$, large for an improbable figure and small for a common one, so a block that foots only on zeros and small figures is weakly anchored. By~(ii), a cut that foots in $g$ fewer columns at equal weights $\omega$ is $\mathrm e^{\omega g}$ times less probable; (iii)~is the hypothesis of the recovery theorem (Theorem~4 of the supplement).

\subsection{Withholding and the Verification Design}
\label{sec:design}
\begin{proposition}[Size bias]
\label{prop:sizebias}
Let columns $c=1,\dots,C$ have lengths $R_c$ and fail independently with probability $p(R_c)$. The expected share of cells in failed columns is the size-biased mean $\Lambda=\sum_cR_c\,p(R_c)/\sum_cR_c$; whenever $p$ is nondecreasing, $\Lambda\ge\frac1C\sum_cp(R_c)$, the expected share of columns that fail. For $p(R)=1-(1-\theta)^R$, $\Lambda\le\theta\bar R_{\mathrm{sb}}$, with $\bar R_{\mathrm{sb}}=\sum_cR_c^2/\sum_cR_c\ge\bar R=\frac1C\sum_cR_c$, and $\Lambda\approx\theta\bar R_{\mathrm{sb}}$ when $\theta R_{\max}\ll1$.
\end{proposition}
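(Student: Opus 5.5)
The plan is to handle the four claims of Proposition~\ref{prop:sizebias} in turn, each by an elementary calculation on a finite weighted average.

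\emph{Expected share of cells in failed columns.} Column $c$ fails with probability $p(R_c)$ and then contributes all $R_c$ of its cells. By linearity of expectation, the expected number of cells in failed columns is $\sum_c R_c\,p(R_c)$. The total cell count $\sum_c R_c$ is deterministic, so dividing by it gives $\Lambda$; independence of the columns is not even needed here.

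\emph{$\Lambda\ge\frac1C\sum_cp(R_c)$ when $p$ is nondecreasing.} This is a Chebyshev sum inequality. Order the columns so that $R_1\le\dots\le R_C$; then $p(R_1)\le\dots\le p(R_C)$, so the two sequences are similarly ordered. Chebyshev's inequality then gives
\[
\tfrac1C\sum_cR_cp(R_c)\ge\big(\tfrac1C\sum_cR_c\big)\big(\tfrac1C\sum_cp(R_c)\big),
\]
and dividing by $\frac1C\sum_cR_c$ yields the claim. Equivalently, $\Lambda$ is the mean of $p(R)$ under the size-biased law $\Pr[c]\propto R_c$, which stochastically dominates the uniform law on columns, so the mean of a nondecreasing function can only increase. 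I would write out the one-line pairwise identity
\[
\sum_{c,c'}(R_c-R_{c'})\big(p(R_c)-p(R_{c'})\big)\ge0
\]
to keep the argument self-contained.

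\emph{Upper bound for $p(R)=1-(1-\theta)^R$.} Bernoulli's inequality gives $(1-\theta)^R\ge1-R\theta$, hence $p(R)\le\theta R$. Substituting,
\[
\Lambda\le\theta\sum_cR_c^2/\sum_cR_c=\theta\bar R_{\mathrm{sb}}.
\]
The comparison $\bar R_{\mathrm{sb}}\ge\bar R$ is Cauchy--Schwarz, $\big(\sum_cR_c\big)^2\le C\sum_cR_c^2$, rearranged.

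\emph{Approximation when $\theta R_{\max}\ll1$.} The matching lower bound is $(1-\theta)^R\le e^{-\theta R}\le1-\theta R+\theta^2R^2/2$, which gives
\[
p(R)\ge\theta R\big(1-\theta R/2\big)\ge\theta R\big(1-\theta R_{\max}/2\big).
\]
Hence $\theta\bar R_{\mathrm{sb}}(1-\theta R_{\max}/2)\le\Lambda\le\theta\bar R_{\mathrm{sb}}$, so the relative error is $O(\theta R_{\max})$.

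None of these steps is a real obstacle. The only care needed is to state in what sense ``$\approx$'' holds; the sandwich bound above makes it precise as a relative error of at most $\theta R_{\max}/2$.
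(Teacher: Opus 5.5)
Your proof is correct and follows the paper's argument: linearity for $\Lambda$, the pairwise Chebyshev identity for the comparison with the column share, Bernoulli's inequality for $p(R)\le\theta R$, and Cauchy--Schwarz for $\bar R_{\mathrm{sb}}\ge\bar R$. The one addition is your explicit lower bound $p(R)\ge\theta R(1-\theta R/2)$, which turns the paper's statement that the bound is an equality ``to first order'' into a relative error of at most $\theta R_{\max}/2$.
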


Verifying against a printed total is a group test in Dorfman's sense~\cite{dorfman}: a block of cells is pooled, the pool is positive when its sum fails, and only positive pools are examined further. Books print sums at several levels, municipality, county, and state, and a top-down protocol uses them all: corroborate each top block's total, corroborate the totals inside a block only when it fails, and review or withhold the cells of each failing bottom block. Count one unit of work per total corroborated and $\gamma$ per cell reviewed or withheld, and let a block fail exactly when it holds a defect, as in Proposition~\ref{prop:detect} up to cancellations, which lower the work but are silent errors.

\begin{theorem}[Nested printed sums]
\label{thm:nested}
With $h$ levels of blocks of $M_1\mid M_2\mid\dots\mid M_h$ cells and $p(M)=1-(1-\theta)^M$, the expected work per cell is
\begin{equation}
 E=\frac1{M_h}+\sum_{k=1}^{h-1}\frac{p(M_{k+1})}{M_k}+\gamma\,p(M_1),
 \label{eq:nested}
\end{equation}
and $(1-\theta)^{M_h}B\le E\le B$, where $B=1/M_h+\theta\sum_{k<h}M_{k+1}/M_k+\gamma\theta M_1\ge B_h^\star=(h+1)(\gamma\theta^h)^{1/(h+1)}$, with equality at the real design $M_k=(\gamma/\theta)^{k/(h+1)}/\gamma$; the least $E$ over integer designs is $B_h^\star(1+o(1))$ as $\theta\to0$. At $\gamma=1$, blocks of $3,9,27,\dots$ cells over $\lceil\log_3\theta^{-1}\rceil$ levels take $E\le\theta(4+\frac{3}{\ln3}\ln\theta^{-1})$, and any protocol that finds the defective cells from pass--fail outcomes takes on average at least $H_2(\theta)$ steps per cell, $H_2$ the binary entropy in bits.
\end{theorem}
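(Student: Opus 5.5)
The plan is to derive Eq.~\eqref{eq:nested} by linearity of expectation over the top-down protocol. I then sandwich $E$ between $(1-\theta)^{M_h}B$ and $B$ term by term, minimize $B$ by AM--GM, and treat the integer design, the ternary design and the entropy bound separately.

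\emph{Expected work.} Per cell, the $1/M_h$ term counts the corroboration of each top block's total, which is always done. A level-$k$ block of $M_k$ cells has its total corroborated exactly when its parent block of $M_{k+1}$ cells fails, which happens with probability $p(M_{k+1})$. This gives $p(M_{k+1})/M_k$ per cell. Each bottom-level cell is reviewed or withheld when its level-1 block fails, giving $\gamma p(M_1)$. Divisibility $M_k\mid M_{k+1}$ makes the blocks nest, so the counts are exact.

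\emph{The sandwich.} For the upper bound, $p(M)\le\theta M$ by Bernoulli's inequality, applied term by term, gives $E\le B$. For the lower bound I will use $1-(1-\theta)^M\ge\theta M(1-\theta)^{M-1}$, which follows from the mean value theorem since $1-(1-\theta)^M=\int_0^\theta M(1-x)^{M-1}dx$. Because $M\le M_h$, each term is then at least $(1-\theta)^{M_h}$ times its counterpart in $B$, and the $1/M_h$ term trivially satisfies the same inequality.

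\emph{Optimizing $B$.} $B$ is a sum of $h+1$ positive terms: $1/M_h$, the ratios $\theta M_{k+1}/M_k$, and $\gamma\theta M_1$. Their product telescopes to $\gamma\theta^h$. AM--GM then gives $B\ge(h+1)(\gamma\theta^h)^{1/(h+1)}$, with equality when all terms are equal. Solving $\gamma\theta M_1=\theta M_2/M_1=\dots=1/M_h$ gives the stated geometric design.

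\emph{Integer designs.} For the claim that the least $E$ over integer designs is $B_h^\star(1+o(1))$, I take the lower bound from $E\ge(1-\theta)^{M_h}B\ge(1-\theta)^{M_h}B_h^\star$ for designs with $\theta M_h\to0$. Designs with larger $M_h$ pay at least $p(M_h)\gtrsim$ const in some term, which is far above $B_h^\star\to0$. For the upper bound I round the real design to divisible integers, for example $M_k=\prod_{j\le k}\lceil r_j\rceil$ with ratios $r_j\to\infty$ as $\theta\to0$. The ratios then change by a factor $1+o(1)$, and $\theta M_h=O(\theta^{1/(h+1)})\to0$.

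\emph{Ternary design.} At $\gamma=1$, take $M_k=3^k$ and $h=\lceil\log_3\theta^{-1}\rceil$. Then $1/M_h\le\theta$, each middle term of $B$ is $3\theta$ for $h-1$ levels, and the last term is $3\theta$. This gives $B\le\theta(1+3h)\le\theta\big(4+\tfrac{3}{\ln 3}\ln\theta^{-1}\big)$ after bounding $h\le\log_3\theta^{-1}+1$.

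\emph{Entropy bound.} The defect pattern on $n$ cells has entropy $nH_2(\theta)$ bits. Each pass--fail outcome yields at most one bit, and counting each test or review as one step at $\gamma=1$, any protocol that identifies the pattern needs expected steps at least $nH_2(\theta)$ by the source-coding bound on decision trees (Kraft's inequality).

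I expect the main obstacle to be the integer-design $o(1)$ claim. The rounding must simultaneously keep divisibility, keep $\theta M_h\to0$, and keep every ratio within a factor $1+o(1)$. Controlling the product of the roundings along the chain is the delicate point, and I will check that each $r_j$ grows like $\theta^{-1/(h+1)}$, so that each relative rounding error is $O(\theta^{1/(h+1)})$.
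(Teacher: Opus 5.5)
Everything except one step follows the paper's own proof: the per-level counting for Eq.~\eqref{eq:nested}, the bounds $\theta M(1-\theta)^{M_h}\le p(M)\le\theta M$ behind the sandwich, AM--GM on the $h+1$ terms of $B$, the ternary design, and the prefix-code entropy bound. The gap is the lower half of the integer-design claim, in the case where $\theta M_h$ is not small. You say such designs ``pay at least $p(M_h)\gtrsim$ const in some term.'' But $p(M_h)$ occurs in $E$ only as $p(M_h)/M_{h-1}$, which is small whenever $M_{h-1}$ is large. Your remark is right only for $h=1$, where the term is $\gamma p(M_1)$. For $h=2$, take $M_1\approx(\gamma\theta)^{-1/2}$ and $M_2$ a huge multiple of it. Then $E\approx 1/M_1+\gamma\theta M_1\approx 2\sqrt{\gamma\theta}\to0$, and no term is of constant size. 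This design loses to $B_2^\star=3(\gamma\theta^2)^{1/3}$ only because it has collapsed into a one-level design, and $2\sqrt{\gamma\theta}/B_2^\star\to\infty$.

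The missing idea is the paper's induction on $h$. Fix $\delta>0$.
\begin{itemize}
\item If $\theta M_h\le\delta$, your sandwich gives $E\ge(1-\theta)^{\delta/\theta}B_h^\star$, and $(1-\theta)^{\delta/\theta}\to e^{-\delta}$.
\item If $\theta M_h>\delta$, then $p(M_h)\ge 1-e^{-\delta}$. Dropping $1/M_h$ and using $p\le1$ on the lower terms gives $E\ge p(M_h)\,E_{h-1}(M_1,\dots,M_{h-1})\ge(1-e^{-\delta})E_{h-1}^\star$, where $E_{h-1}^\star$ is the optimal work of an $(h-1)$-level design. By induction (base $E_0^\star=\gamma$), $E_{h-1}^\star\ge(1+o(1))B_{h-1}^\star$, and $B_{h-1}^\star/B_h^\star\to\infty$.
\end{itemize}
Letting $\delta\to0$ gives $\liminf E_h^\star/B_h^\star\ge1$. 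The split has to be on a fixed threshold $\delta$ rather than on ``$\theta M_h\to0$,'' since an arbitrary family of designs need not have $\theta M_h$ converge.

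Conversely, the rounding you flagged as the main obstacle is routine. With $\nu=(\gamma/\theta)^{1/(h+1)}$, the choices $M_1=\lceil\nu/\gamma\rceil$ and $M_{k+1}=M_k\lceil\nu\rceil$ make every term of $B$ equal to $(\gamma\theta^h)^{1/(h+1)}(1+o(1))$. Since $E\le B$ holds for every design, no control of $\theta M_h$ is needed there. The delicate step is the lower bound above.
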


For $h=1$ the theorem is Dorfman's: blocks of $m^\star\approx(\gamma\theta)^{-1/2}$ rows at about $2\sqrt{\gamma\theta}$ per cell, and $E\to\gamma$ as the block grows, so a pool much longer than $m^\star$ has nearly every cell reviewed or withheld. Each further printed level raises the exponent of $\theta$, from $\frac12$ to $\frac h{h+1}$; with about $\log_3\theta^{-1}$ levels the work comes within a factor $3\ln2/\ln3\approx1.89$ of the pass--fail lower bound as $\theta\to0$, while with a fixed number of levels the ratio grows without bound. Section~\ref{sec:model} evaluates the two and three levels a book can print.
\section{Evaluation}
\label{sec:eval}
\subsection{Test Bed, External Evidence, and Reproducibility}
\label{sec:repro}
We evaluate on five Wisconsin Blue Book volumes: 1925, which reports the 1924 gubernatorial primary, and 1954, 1956, 1958, and 1962, which report the 1952, 1954, 1956, and 1960 general elections, eight contests in all. Each volume covers 71 counties, and the contests have 31 candidate or printed-category columns between them, so the test bed has $31\times71=2201$ county-by-candidate verification columns, from Florence County's eight rows to Milwaukee County's 375 to 693. The frozen inputs are the project's final per-volume data, withheld-column file, and repair log; none is an independent gold standard, so the evaluation adds four kinds of evidence the workflow did not produce:

\emph{Official county returns}: county-level canvass figures for the two major candidates in seven of the eight contests, taken from the ``Results by county'' tables of the corresponding Wikipedia articles~\cite{countyreturns}, each verified to sum exactly to the published statewide total before use; for a flagged or withheld column the read printed total and the body sum are compared with the official figure separately, which shows which of the two the book printed correctly. \emph{Typed reference pages}: two student typists independently keyed six pages without access to any machine output: Wisconsin Blue Book 1940 pp.~500 and 602, Minnesota Legislative Manual 1889--90 pp.~340 and 377, and two low-resolution 1887 Iowa renderings that proved not adjudicable; disagreements on the four legible pages were adjudicated by printed totals and the page image. For the same pages we took the project's earlier single-pass reader, Amazon Textract's raw table extraction with no pre- or post-processing~\cite{textract}, since the vision workflow has no output for these pages. \emph{Hard pages}: three continuation pages of the 1888 Iowa returns, read for this benchmark at magnification and checked against their printed totals, on which Textract's stored output, the open-source readers, and a stored pass of the workflow's own reader are scored (Section~\ref{sec:human}). \emph{Source-page cases}: three Delaware County columns of the 1894 Iowa register, all 63 constituent values and the three totals rechecked against printed pp.~126--127, and one flagged Langlade County column, all 24 values and the total against printed p.~455.

The \Exact\ share is a classification rate, not a measured cell accuracy; TEDS, GriTS, and latency were not measured. The replication materials, including the reader benchmark and a numerical check of every result of Section~\ref{sec:theory}, reproduce every count, comparison, and fit below from a data snapshot of September 27, 2026, but neither the recognition run nor the exact-sum classifications themselves (Section~\ref{sec:disc}). One inconsistency in the inputs: the repair log describes the 118 flagged columns as staying withheld, whereas the final data release them with caution flags; we use final-data membership to measure release.
\subsection{Gate Outcomes on the Test Bed}
\label{sec:results}
Table~\ref{tab:model} gives the outcome by volume. The release holds 2160 columns and 91\,216 long-format records: of the 2201 columns the gate classifies 2042 as exactly reconciling (92.78\%), releases 118 with a source-discrepancy flag (5.36\%), and withholds 41 (1.86\%). The 1925 volume releases nearly everything yet has 66 of its 497 columns flagged, 44 of them in the primary's three Republican columns, where the book's own arithmetic is worst; 11 of the 41 withheld columns are Milwaukee's. Column counts hide how much withholding removes, as Proposition~\ref{prop:sizebias} predicts: Proxmire is released in 68 of 71 counties in 1954 (95.77\%) yet the released vote is 362\,151 of a statewide 560\,747 (64.58\%), because the gaps are Milwaukee-sized.

\begin{figure}[t]
\centering
\includegraphics[width=0.9\columnwidth]{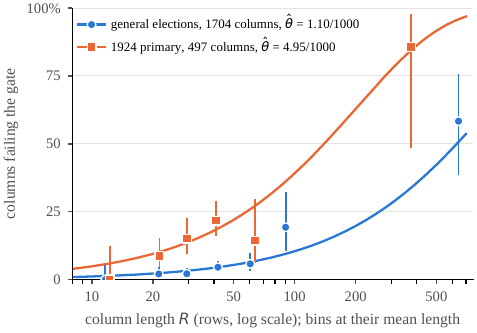}
\caption{Gate failure rises with column length as the model predicts: share of columns failing the gate, by length, for the four general-election volumes (1704 columns) and the 1924 primary (497), with 95\% Wilson intervals and the curves $1-(1-\hat\theta)^R$ of Table~\ref{tab:model}; the longest bins are Milwaukee County's.}
\label{fig:accuracy}
\end{figure}
\subsection{The Defect Model Against the Test Bed}
\label{sec:model}
Two predictions of Section~\ref{sec:theory} need nothing but the release itself to test: Proposition~\ref{prop:detect} says the probability of failing the gate rises with column length as $1-(1-\theta)^R$, and Proposition~\ref{prop:sizebias} says the share of cells in failed columns exceeds the share of columns. Every column has a known length and outcome; treating outcomes as independent, we fit the maximum-likelihood rate $\hat\theta=\arg\max_\theta\big[\sum_{c\,\mathrm{fails}}\log(1-(1-\theta)^{R_c})+\sum_{c\,\mathrm{passes}}R_c\log(1-\theta)\big]$ separately to the four general-election volumes and to the 1924 primary volume, whose typography and candidate set differ; the log-likelihood is strictly concave in $-\log(1-\theta)$, so each $\hat\theta$ is unique (Lemma~4 of the supplement). Two caveats apply. First, the likelihood uses the upper bound of Eq.~\eqref{eq:detect}, so $\hat\theta$ is a detected-defect rate, which understates the defect rate by up to 7\% for the general-election volumes and 22\% for the primary if cancellations are at the worst case Proposition~\ref{prop:detect} allows ($\kappa=\frac12$), and by 2\% or less at the digit-shaped $\hat\kappa$ of Section~\ref{sec:statewide}; bounds evaluated at $\hat\theta$ are optimistic in the same proportion. Second, a lost row fails all of a county's columns at once, so beside each profile-likelihood interval we report a county-clustered bootstrap interval (2000 replicates); the tests below are not cluster-adjusted.

\begin{table}[t]
\caption{The Defect Model by Volume}
\label{tab:model}
\centering\scriptsize
\setlength{\tabcolsep}{2.1pt}
\begin{tabular}{@{}lrrrrrlr@{}}
\toprule
& & \multicolumn{2}{c}{Failed} & \multicolumn{2}{c}{Rows} & & Cells failed (\%)\\
\cmidrule(lr){3-4}\cmidrule(lr){5-6}
Volume & Cols. & Flag & With. & $\bar R$ & $\bar R_{\mathrm{sb}}$ & $\hat\theta$ (95\%) & obs./model\\
\midrule
1925 & 497 & 66 & 13 & 37.8 & 85.1 & 4.95 (3.9--6.1) & 26.3 / 26.4\\
1954 & 639 & 20 & 2 & 45.5 & 149.0 & 0.80 (0.5--1.2) & 9.4 / 10.0\\
1956 & 213 & 0 & 8 & 46.3 & 161.8 & 0.86 (0.4--1.6) & 9.9 / 11.3\\
1958 & 568 & 20 & 14 & 47.2 & 167.6 & 1.47 (1.0--2.1) & 24.4 / 17.5\\
1962 & 284 & 12 & 4 & 49.0 & 176.4 & 1.26 (0.7--2.0) & 15.5 / 16.2\\
\midrule
General & 1704 & 52 & 28 & 46.7 & 161.6 & 1.10 (0.9--1.4) & 15.6 / 13.7\\
All & 2201 & 118 & 41 & 44.7 & 147.0 & 1.79 (1.5--2.1) & 17.6 / 18.2\\
\bottomrule
\end{tabular}
\par\smallskip
\parbox{\columnwidth}{\footnotesize Columns; flagged and withheld columns; mean and size-biased mean column length; fitted rate $\hat\theta$ per 1000 cells with its profile-likelihood 95\% interval; and observed and predicted shares of cells in failed columns.}
\end{table}
Table~\ref{tab:model} and Fig.~\ref{fig:accuracy} give the fit. The general-election volumes fail at a rate consistent with $\hat\theta=1.10$ defects per 1000 cells (profile-likelihood 95\% interval $0.87$--$1.36$; cluster bootstrap $0.81$--$1.40$), and the failure share climbs from none of the 72 columns with at most 15 rows to 58\% of the 24 Milwaukee columns. The 1924 primary fails four and a half times as often, $\hat\theta=4.95$ ($3.9$--$6.1$; cluster bootstrap $3.6$--$6.4$); its failures belong to the book, 66 of its 79 being adjudicated source discrepancies. Freeing the exponent in $p(R)=1-(1-\theta)^{R^{\nu}}$ gives $\hat\nu=1.15$ ($0.94$--$1.33$) for the general-election volumes and $1.02$ ($0.67$--$1.34$) for the primary, where the model says $1$; a length-independent rate ($\nu=0$) is rejected outright (likelihood-ratio statistics $78.9$ and $28.2$ on one degree of freedom). Binned Pearson statistics of $9.2$ over seven length bins and $6.5$ over six show no lack of fit (parametric-bootstrap $p$-values $0.15$ and $0.25$, the shortest bins expecting fewer than one failure). The size bias of Proposition~\ref{prop:sizebias}, computed exactly since Milwaukee's columns are too long for the linear form, predicts that a rate failing 7.2\% of columns puts 18.2\% of cells into failed columns; the observed share is 17.6\%, and the prediction holds within a few points in every volume except 1958, where Milwaukee's four major-candidate columns all failed.

At the point estimate the theorems quantify the design. With $\theta_e=\theta=\hat\theta$, Theorem~\ref{thm:silent} puts the silent error rate of the cells of an \Exact\ column of the general-election mean length, 46.7 rows, at most about $1/38$ of the ungated rate for every error law ($\kappa\le\frac12$, Lemma~\ref{lem:kappa}), between $1/42$ and $1/38$ for a symmetric law with $\kappa=\frac12$, and between $1/478$ and $1/432$ for the digit-shaped $\hat\kappa=0.044$ of Section~\ref{sec:statewide}. For a 693-row Milwaukee column the general bound guarantees only a reduction to 0.81, and weighted by length over the 934 accepted columns the canvass tested, the upper bounds are $1/12$ and $1/134$. Repaired from its total, a column would carry nearly twice the ungated rate instead (Proposition~\ref{prop:repair}): written as 267, the destroyed Polk County cell of Section~\ref{sec:gate} would have absorbed any misreading among its 75 siblings, about an 8\% chance at the Wisconsin general-election rate. At $\gamma=1$ the best single block is $m^\star=31$ rows at $0.066$ per cell, against $0.072$ for a 47-row county column and $0.53$ for Milwaukee's 693 rows (Theorem~\ref{thm:nested}); subtotals every 10 rows in blocks of 100 take $0.031$, and three levels, blocks of 5, 30, and 180 rows, $0.023$, within a factor of 1.9 of the $H_2(\hat\theta)=0.012$ below which no pass--fail protocol can go. Verifying blocks of 10 to 25 rows against printed municipal subtotals would put 1.1\% to 2.7\% of cells into failed blocks, against the 13.7\% the model predicts for whole county columns, and where a book prints such subtotals the better design is already on the page.

\subsection{Gate Decisions Against Official Returns}
\label{sec:statewide}
If the 71 printed county totals for a candidate sum to the official statewide total, the released sum must equal that total minus the printed totals of the withheld counties plus the residuals of the flagged columns; the identity can be tested for nine of 14 candidate--contest pairs, and it holds exactly in all nine.
The county-level test is sharper (Fig.~S1 of the supplement). The sum of released rows in an \Exact\ county column matches the official county figure in 933 of the 934 columns that can be tested; the exception is Rock County in the 1960 gubernatorial election, where the released Nelson column sums to 20\,690 against an official 20\,890 after reconciling to a read printed total, either a compensating pair of errors (Proposition~\ref{prop:kernel}) or a difference between the Blue Book and the canvass. The same comparison adjudicates the failed columns: of the 16 withheld columns with a recorded printed total and an official figure, the printed total equals the official figure in 15, and in one, Proxmire in Dane County in 1956, the body sum of 44\,851 is the official figure and the printed total of 44\,857 is the book's error. Of the 118 flagged columns, 35 have an official figure, and in all 35 the printed total is the official figure while the body sum is off by the logged residual. The release records no footing counts, so the segmentation is validated only through its consequences: a block cut at the wrong line would not sum to the official county figure. These are column-sum checks, blind to compensating errors within a column and covering neither the 1924 primary nor the minor candidates.

Two propositions predict those exceptions. The 51 failed columns the canvass can adjudicate have $\sum_c1/(R_c+1)=1.07$ ($0.77$ over the 35 flagged columns alone, the ones certain to contain a printed defect), so under equal per-number defect rates about one should have a defective total rather than a defective body (Proposition~\ref{prop:which}); exactly one does, Dane County in 1956, the modal outcome under the model ($\Pr[X=1]=0.37$), though the test has little power against unequal rates. Proposition~\ref{prop:detect} bounds the expected number of false acceptances among the 934 tested columns by $\sum_c\binom{R_c}{2}\theta_e^2\kappa/(1-\theta)^{R_c}$: at $\theta_e=\hat\theta$ and $\kappa=\frac12$ that is $1.8$ ($3.1$ at the interval's upper end), so one false acceptance is what the model expects. If reading errors are digit-shaped like the 118 flagged residuals, whose signs are balanced (61 positive, 57 negative) and whose symmetrized magnitude law $\hat p$ gives $\hat\kappa=\frac12\sum_{m>0}\hat p(m)^2=0.044$ ($0.041$ unsymmetrized; bootstrap $0.039$--$0.059$; Lemma~\ref{lem:kappa}; Corollary~\ref{cor:place} with a uniformly wrong digit at the places those residuals occupy gives $0.037$), the expectation falls to $0.16$ and one occurrence is a 15\% event. That leans toward a coordinated cause for Rock County: a body misprint carried into the book's own total, which no sum check can see.

\subsection{Observed Failures and Targeted Review}
\label{sec:review}
Table~S3 of the supplement collects the failure modes we met, the signal that exposed each, and what became of the data: structural failures (rows 1--5) are found by geometry and the cross-column signature, not by any single sum; reading failures (rows 6--7) by the sum, a second witness, or magnification; source and blind-spot cases (rows 8--9) no reader can fix. The repair log holds 174 investigation entries covering 167 columns of the test bed: 118 adjudicated source discrepancies, each recording two readers in agreement; 15 applied cell corrections and one correction to a read printed total; 29 reader disagreements, 10 unresolved cases, and one proposed correction rejected because the column still failed to reconcile. The flagged columns' residuals carry the signature of Lemma~\ref{lem:syndrome}: of the 45 with two or more digits, 36 have a single nonzero digit, where a spread uniform within each decade would give 4, and 3 of the other 9 are multiples of 9, as transpositions would be (1 expected by chance); with the 73 below 10, 112 of the 118 are consistent with one digit slip or one transposition. Langlade County's Quick column in the 1924 primary, whose 24 rows sum to 20 against a printed 21 on p.~455, can be checked on the page, and Proposition~\ref{prop:which} puts the odds at 24 to one that the body, not the total, is wrong.

\subsection{Readers Against the Same Pages}
\label{sec:human}
Table~\ref{tab:bench} gives the open-source readers the same input, a hand-cropped table panel with its header at native and twice native resolution: the two Minnesota pages the typists keyed, p.~455 of the 1925 Blue Book, and four panels of the 1954 Blue Book's pp.~652--653; Textract's earlier output and the typists are scored on the four keyed pages (Table~S2). The references are the typists' adjudicated transcription and the project's Wisconsin readings, kept only where each county column reconciles with its printed total or every cell was checked against the page. Each reader is shown at the configuration that files the most cells, matched as favorably as its output allows; the supplement gives the protocol and every panel.

The readers lose the table in different ways. A text recognizer returns lines without columns, and a blank cell, printed as leader dots, leaves no token, so the figures after it slide one column left: on p.~455, where 144 of the 448 cells are blank, docTR reads 98.6\% of the digits and files 50.3\% of the figures in place, though 96.7\% stand in the correct order within their row. A table extractor keeps blanks but draws its own grid, and the grid is where it fails: SLANet-plus returned Ramsey County's 48 rows as a single row of six cells holding 143 of the 276 figures, yet the PaddleOCR words it was given, filed by the comb, put 96.7\% of that page's figures in place. The losses gather where tables are wide and sparse; on the 1954 panels, two columns wide with zeros printed, the strongest readers file nearly everything.

Structure is recoverable from the same words. A word list has no pixels for the comb of Section~\ref{sec:geom}, so the benchmark centers the $C$ windows ($C$ given by the header) where the most figures align at their right edges and applies the arithmetic of the gate, (G1) and (G2), an empty window adding nothing to a sum. Filed this way, docTR's words put 97.9\% of the figures and 24 of the 30 columns in place, all 13 added columns on the two pages with blank cells; on the five Wisconsin panels they conflict with the project's reading in none of the 761 figures they place, an independent check of that reference. On the comb-filed words of the five distinct open-source readers (SLANet-plus's are PaddleOCR's), 100 columns close on their panel and 27 are correct; the check releases 25 of the 27 and 1 of the other 73. Run on the lines as the readers return them, the same check releases 8 columns, 2 of them wrong (Table~S2).

A commercial table reader is stronger on clean pages: on the two Minnesota pages both were given, Textract alone files 445 of the 446 figures and 5 of the 6 county columns, the comb on docTR's words 428 and 3. For a reader that strong, what the check catches is the error that looks correct. Textract and typist~B misread one and two of the 1356 keyed figures, both reading the broken 42 at Waterford as 12, and the check withholds that column from both; on the keyed grids it releases 46 of the 47 closing columns that are correct and 1 of the 10 that are wrong. Typist~A misfiles 72 figures, 69 of them belonging to another row of the same column (57 to the adjacent row), and puts six more on heading rows; six of these lie in a column on 1940 p.~602 that still sums to its printed total, and the check releases it, a sum-preserving permutation that only subtotals could catch (Proposition~\ref{prop:kernel}, Corollary~\ref{cor:subtotal}). The one inexact release from comb-filed words, a Langlade column in img2table's, offsets five dropped figures worth 50 with a 30 read as 80, a cancellation Proposition~\ref{prop:detect} makes likelier the more errors a reader makes. No page reader attempts structure above the page: county blocks that cross page breaks and rows that never reached the table (Sections~\ref{sec:segment} and~\ref{sec:diag}); the workflow's own first pass read 5612 of the 5630 audited cells of the 1940 Milwaukee table.

The hard pages test the opposite case (Table~\ref{tab:bench}, bottom; protocol, reference, every reader, and the disputed cells in the supplement, Tables~S4--S5). They are the three continuation pages of the 1889 Iowa Official Register (the 1888 election) for which the workflow's reader has stored output, pp.~113, 145, and 171: one-bit scans at 118 pixels per inch, 16 candidate columns under rotated names, no row labels, two county blocks a page. Their reference was read for this benchmark at four to fourteen times magnification and certified by the printed totals: of the 61 county columns whose rows and total all stand on the page, 59 reproduce the total exactly and two are the book's discrepancies in Madison County, and the continued Floyd and Scott blocks reproduce their county totals together with their rows on pp.~115 and 169. Textract's stored output reads all three tables sideways, each row of its grid a column of the page read from the bottom up; 76.7\% of the digits occur somewhere in it, but 1 of the 1494 figures stands in its cell and no county column is correct. The four open-source text recognizers, each given four scales, reproduce 30--71\% of the digits at their reported configurations and file 4--19\% of the figures from their own lines and 13--66\% through the comb (Table~S4), with no county column correct. The workflow's reader, in the stored pass the project's pipeline kept (Claude Sonnet~4.6 returning each table as JSON with its total line), files 99.2\% and gets 67 of the 77 county columns exact. Each of the 8 closing columns that holds one of its wrong cells fails the check, which releases the other 51, all exact, and withholds Madison's two discrepancies and Scott's 16 columns, whose total covers rows on the previous page. From the open-source readers it releases 2 columns at these configurations and 23 over every configuration run, none exact, 16 of them on a figure that is not the column's printed total, so the premise of a correctly read total fails. Where it holds, the conditional bound of Proposition~\ref{prop:detect} is vacuous, its denominator $(1-\theta)^R$ near zero when nearly every column holds a defect.

\subsection{Deployment Beyond the Test Bed}
\label{sec:deploy}
The deployment behind the test bed added 491\,359 gated precinct rows (Iowa 288\,777; Wisconsin 202\,582) from eleven Iowa Official Registers (the elections of 1889--1898 and 1916) and the Wisconsin Blue Books to a 9.3-million-row precinct dataset, raising Iowa's coverage from 51 to 62 of 65 elections and Wisconsin's from 81 to 91 of 93, with no digit bent and every failed column withheld whole. Its audits, though the workflow's own, recompute from the shipped files that all 932 \Exact\ columns of the 1892 Iowa volume and all 2222 of 1898 reconcile, and that the 71 county totals of the 1940 Wisconsin volume as read sum to the published statewide figures, to the vote, for all five leading candidates.
\section{Discussion}
\label{sec:disc}
The certificate is narrow: an \Exact\ column's sum equals a printed total that agrees with the official canvass in 933 of the 934 cases we could test, a certificate about sums and not about cells (Proposition~\ref{prop:kernel}); the one exception is within what Proposition~\ref{prop:detect} allows. In use, an \Exact\ column's sum may be taken as the county return; a \Source\ column's printed total may be taken as the county return (35 of 35), its rows being correct only in aggregate to within the residual; a \Hold\ column contributes nothing, so a sum over the release is a lower bound on any statewide total, not an estimate of it. Of the nine failure modes in Table~S3, two are reading failures; the rest were found by mechanisms that operate on the table, not on a cell, as Table~\ref{tab:bench} would lead one to expect.

Two design lessons follow. First, the verification block should follow the defect rate among the sums a book prints, not the county boundary. One unreadable cell in 693 rows withholds Milwaukee County, and 52 unread cells withhold 4882 records, because the number of cells lost scales with the size-biased mean column length (Proposition~\ref{prop:sizebias}) while the sum's protection weakens in the same proportion (Theorem~\ref{thm:silent}). Theorem~\ref{thm:nested} gives the optimum, 31 rows for a single level and subtotals every 10 rows in blocks of 100 for two, and Corollary~\ref{cor:subtotal} adds that such blocks detect the displacements a county sum cannot. Second, a better recognizer and the structure around it are complements, not substitutes: the bound of Theorem~\ref{thm:silent} falls with $\theta^2$, so halving $\theta$ roughly quarters the silently wrong cells among released ones, while a missing row, a wrong total, or a miscut county does the same harm as before.

\emph{Scope and impact.} Little in Sections~\ref{sec:problem}--\ref{sec:theory} is specific to elections: any printed register whose blocks close on a total has the same residual decomposition, kernel, segmentation likelihood, and verification design, and the election-specific inputs are $K$, the typing vocabulary $\mathcal T$, and the straight-ticket prior behind the coherence screen. As noted in Section~\ref{sec:intro}, TableVision has applications beyond electoral data. To give one specific example, a century of American administrative statistics survives mainly as such tables, and on the evidence of Table~\ref{tab:bench} the obstacle is structural and silent error, not illegible digits. One such source is the Municipal Year Book~\cite{myb}: for example, its 1950 volume prints 21 major tables for 2042 cities over 5000 people, from officials' salaries to police and fire statistics and revenues, expenditures, and debt. A table that prints components and their total carries the check equation unchanged; one that prints a single figure per city gets the geometry, typing, constraints, and provenance trace, and its columns enter the \Unverified\ state under the two-reader attestation $\mathrm{Agr}_c$. Either way the product is a dataset that states, column by column, what was checked and how.

\emph{Limitations.} The test bed consists of deployment tasks rather than a random sample of historical tables, and the outcomes need not generalize to handwriting, curved pages, or documents without redundant arithmetic; whole-column withholding selects the released data by legibility and table complexity, so excluded counties and candidates are not missing at random. The theorems hold under stated idealizations: the defect model treats cells as independent, which the clustering of the 1925 volume's flagged columns contradicts, so its rate is a summary rather than a mechanism, and the segmentation model treats readings, wrong totals among them, as independent draws from one law per column. Three quantities the theory names were not measured: the total-misread rate $\pi_t$ (one read total corrected among 2201 columns is a floor, not an estimate), the footing counts and evidence weights of the segmentation theorems (Theorem~\ref{thm:infer}; Theorem~4 of the supplement), and the reading-error law behind $\kappa$, for which the adjudicated misprints stand in; none can be recovered, because the audited release holds no row-level inventory of first-pass readings and read totals. Model versions, prompts, raw independent reads, and full image-coordinate traces are incomplete or absent from the audited release, so the scripts reproduce every count and fit but not the recognition run; and the 1925 file retains two repeated place labels across seven candidates (14 excess rows), so we count records and verification columns, not distinct precincts. The workflow's reader is scored only on the three hard pages, from the stored pass the project's pipeline kept, and their reference was read for this benchmark by an AI model of the same family and checked against the printed totals rather than keyed by independent typists; a misreading that leaves its column's sum intact would pass that check as it passes the gate.

\section{Conclusion}
Historical tables are imperfect as a rule, and recovering them is not mainly an OCR problem. The best open-source readers reproduce nearly every digit of a clean page and lose the table; a commercial reader turns a hard page sideways, its columns into rows of fused figures; and even a reader as careful as a typist leaves the data untrustworthy until someone decides where each county begins and ends, whether every row is present under the correct name, whether the printed total is credible, and what to say about a column that does not close. TableVision makes those decisions with the table's own arithmetic, used as evidence and never as a target, and leaves every column exactly reconciling, flagged under a two-reader attestation, or withheld. The theory proves what the certificate covers and how verification should be laid out; on a 2201-column test bed the testable predictions hold, the decisions survive official returns the workflow never saw, and the deployment added nearly half a million gated rows to a 9.3-million-row dataset. The design lesson should travel, and the method with it: verify against printed sums over blocks near $\theta^{-1/2}$ rows where a book offers them, use every level of subtotal it prints, and treat a century of printed registers, from election returns to the Municipal Year Book's city statistics, as datasets to be certified rather than pages to be recognized.


\clearpage
\setcounter{equation}{0}\setcounter{figure}{0}\setcounter{table}{0}
\setcounter{theorem}{0}\setcounter{proposition}{0}\setcounter{corollary}{0}\setcounter{lemma}{0}
\renewcommand{\theequation}{S\arabic{equation}}
\renewcommand{\thefigure}{S\arabic{figure}}
\renewcommand{\thetable}{S\arabic{table}}
\begin{center}
{\Large\bfseries Supplementary Material}\\[4pt]
{\large for ``TableVision: Recovering Verified Structure from Historical Tables''}\\[3pt]
Todd~Y.~Zhou and James~M.~Snyder,~Jr.
\end{center}
\vspace{2pt}
\section*{Supplementary Figure}
\begin{figure}[!t]
\centering
\includegraphics[width=\columnwidth]{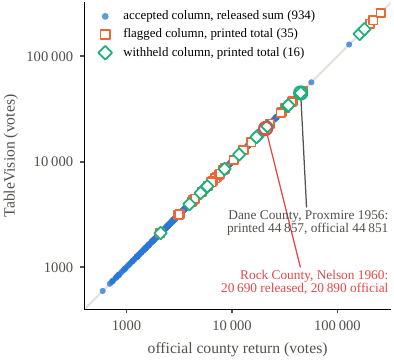}
\caption{The transcription against the official canvass, which the workflow never saw (Section~VI-D of the main text): the released sum of the 934 accepted columns and the read printed total of the 35 flagged and 16 withheld columns with an official county figure. Of the 985, 983 agree to the vote; one printed total is off by 6 (Dane County 1956, the book's error) and one accepted sum by 200 (Rock County 1960), either a compensating pair of errors (Proposition~1) or a discrepancy between the book and the canvass.}
\label{supp:fig:canvass}
\end{figure}

\section*{The Reader Benchmark of Table~I}
\emph{Panels and references.} Table~I of the main text gives the open-source readers seven table panels: pp.~340 and 377 of the Minnesota Legislative Manual for 1889--90 (Dakota and Ramsey Counties; 170 and 276 figures), the two Minnesota pages the typists keyed; p.~455 of the 1925 Wisconsin Blue Book (the 1924 gubernatorial primary in Lafayette, Langlade, and Lincoln Counties; 304 figures); and pp.~652--653 of the 1954 Wisconsin Blue Book (the primary vote for governor by precincts in Waukesha, Waupaca, Waushara, Winnebago, and Wood Counties, two panels to a page, cropped panel by panel; 126, 124, 110, and 104 figures). The Minnesota references are the typists' adjudicated transcription (Section~VI-F). The Wisconsin references are the project's readings, kept only for counties whose every column reconciles exactly with its printed total (Waukesha's across the break from p.~651) or, where a printed total does not close, whose every cell was checked against the page image. Three such columns keep their printed totals: Langlade's Bucknam and Quick on p.~455 (46 and 21, against cells that sum to 44 and 20) and Winnebago's Kohler on p.~653 (23\,353, against 23\,359); so does Dakota County's typed Harrison (Governor) column on p.~340 (261, against 240). An engine that never saw these readings agrees with them: docTR's words, filed by the comb, put 761 of the 768 Wisconsin figures in their cells, every one equal to the reference, and leave the other seven cells empty. Leader dots and dashes are blanks; the 1954 pages print their zeros, which are figures. The panels hold 1214 figures and 30 county-by-candidate columns with a printed total on the panel (Ramsey County continues overleaf). Amazon Textract (its earlier full-page output) and the typists are scored on the four keyed pages instead (Table~S2), the Minnesota pair and Wisconsin 1940 pp.~500 and 602 (1356 figures, 34 county columns), whose page images are not in the package; since the typed reference was adjudicated from the typists' own two transcriptions, their scores are upper bounds.

\emph{Readers and configurations.} Every open-source reader receives the same crop, header included, at native resolution and enlarged twice (Lanczos): Tesseract~5.3.4 with its LSTM engine in page-segmentation modes 3, 4, 6, and 11, each with the packaged and the tessdata\_best English models (16 configurations); EasyOCR~1.7.2; the PP-OCRv4 detection and recognition models of PaddleOCR through RapidOCR~1.4.4, with its per-character boxes; docTR~1.1.0 with the DB-ResNet50 detector and the CRNN-VGG16 recognizer; img2table~2.0.0 with Tesseract, implicit rows and columns, and borderless tables allowed; and PP-Structure's SLANet-plus table recognizer through RapidTable~3.0.2, given the PP-OCRv4 text boxes. The text layer of the 1925 scan is read with pdfplumber. For each reader the configuration with the most cells filed, pooled over its panels, is reported in Table~I and Tables~S1 and S2, a choice made on these panels that favors the reader; none was chosen per panel.

\emph{Scores.} A text reader's rows are its own lines (Tesseract modes 3, 4, and 6) or its words grouped by vertical position, and the cells of a row are its integers from left to right; a table reader's rows and cells are its own, empty cells included. A figure split at its thousands separator is rejoined. \emph{Characters} counts the reference digits reproduced under the one-to-one matching of reference figures to output figures, anywhere on the panel, that minimizes the total edit distance, a reading-order-independent count. \emph{Cells} counts the reference figures found in their own row and column as the reader files them, after reference rows are matched to output rows by the monotone alignment that shares the most figures and each panel is given the column offset that files the most figures. \emph{In order} (Table~S2) counts a figure if it is the $k$th figure of its row in both the reference and the output, which forgives every blank cell the reader lost. A row is exact if every figure is filed correctly and no figure stands in a blank cell; a county column is exact if every cell and the printed total are filed correctly and no figure stands in a blank cell.

\emph{The comb and the gate on words.} A word list has no rules and no pixels, so the comb of Section~IV-A cannot be measured from it. Because the figures are set flush right, the benchmark fits the comb to the right edges of the reader's figures instead: the $C$ windows are centered on the $C$ positions, at least one digit width apart, that the most figures share to within half a digit width (a dynamic program over the sorted right edges), $C$, the header's column count, being given and the digit width the median over the reader's figures. Rows are the reader's words grouped by vertical position; each figure is filed in the nearest window, and a number more than half a column pitch outside the comb belongs to the row label. On a line whose label reads ``Total,'' a column is released only if the cells filed since the previous total line are all figures or empty windows and the figures sum exactly to the figure filed on the total line, an empty window adding nothing to the sum as a blank cell adds nothing in Section~III-A of the main text: conditions (G1) and (G2) without the corroboration (G3); the block starts on the line after the previous total line or, if that makes more columns foot, on a later one, the segmentation objective of Section~IV-B. The table extractors go through the comb with the OCR words their cells are built from: img2table's Tesseract words of confidence 50 or more, and the PaddleOCR words SLANet-plus is given, so that its comb row equals PaddleOCR's. Textract's stored output and the typists' sheets carry no word positions and keep their grids. The check also runs on each text reader's lines as returned, the $k$th figure of a line taken as its $k$th cell (``as filed'' in Table~S2). Nothing in the layer was tuned to these panels, with one exception disclosed in the package: the comb was first fitted by one-dimensional $k$-means, and after a smoke test on two of the panels showed scattered numbers from leader dots claiming a window, the rule above replaced it before the full run. The layer's precision is what matters: a released column that is not exact is a false acceptance.

\begin{table*}[t]
\caption{Cells Correct per Panel}
\label{supp:tab:panels}
\centering\footnotesize
\begin{tabular}{@{}lrrrrrrr@{}}
\toprule
Reader & MN p.~340 & MN p.~377 & WI p.~455 & WI p.~652 L & WI p.~652 R & WI p.~653 L & WI p.~653 R\\
\midrule
Tesseract 5.3 (mode 6, best model, $\times2$) & 71.2/78.8 & 68.1/80.1 & 28.9/69.1 & 92.9/96.8 & 93.5/95.2 & 96.4/96.4 & 98.1/98.1\\
EasyOCR 1.7 ($\times2$) & 35.3/61.8 & 25.7/37.7 & 23.4/56.9 & 81.0/88.1 & 45.2/58.9 & 70.0/82.7 & 65.4/75.0\\
PaddleOCR PP-OCRv4 ($\times2$) & 86.5/95.9 & 89.5/96.7 & 43.8/92.1 & 100.0/100.0 & 98.4/98.4 & 99.1/99.1 & 100.0/100.0\\
docTR 1.1 ($\times2$) & 84.7/94.7 & 92.4/96.7 & 50.3/97.7 & 100.0/100.0 & 100.0/100.0 & 100.0/100.0 & 100.0/100.0\\
img2table 2.0 ($\times2$) & 42.4/54.1 & 57.6/59.4 & 44.4/56.6 & 75.4/80.2 & 54.0/57.3 & 0.0/78.2 & 15.4/83.7\\
SLANet-plus ($\times2$) & 35.9/95.9 & 0.0/96.7 & 59.9/92.1 & 62.7/100.0 & 44.4/98.4 & 68.2/99.1 & 22.1/100.0\\
\bottomrule
\end{tabular}
\par\smallskip
\parbox{\textwidth}{\footnotesize Percent of each panel's figures filed in their own row and column, each reader at the configuration of Table~I of the main text (scale $\times1$ or $\times2$; for Tesseract also the page-segmentation mode and model), as filed and, after the slash, with the reader's words filed by TableVision's comb (the values after the arrows in Table~I of the main text; img2table: its Tesseract words; SLANet-plus: the PaddleOCR words it is given).}
\end{table*}

\begin{table*}[t]
\caption{Every Score, Pooled}
\label{supp:tab:all}
\centering\footnotesize
\begin{tabular}{@{}lrrrrrrr@{}}
\toprule
Reader & Chars. & Figs. & Cells & In order & Rows & Cols. & Released\\
\midrule
Tesseract 5.3, as filed & 90.9 & 86.4 & 69.0 & 73.3 & 64.0 & 3/30 & 1 (1)\\
EasyOCR 1.7, as filed & 77.3 & 67.7 & 41.6 & 46.0 & 37.6 & 0/30 & 1 (1)\\
PaddleOCR PP-OCRv4, as filed & 98.1 & 96.5 & 81.4 & 94.1 & 77.6 & 9/30 & 2 (0)\\
docTR 1.1, as filed & 99.0 & 97.9 & 83.7 & 97.4 & 79.2 & 11/30 & 4 (0)\\
img2table 2.0, as filed & 60.5 & 50.7 & 44.8 & 19.9 & 16.8 & 0/30 & 0 (0)\\
SLANet-plus, as filed & 78.5 & 76.9 & 39.1 & 33.6 & 25.1 & 1/30 & 0 (0)\\
Tesseract 5.3, TV & 90.9 & 86.4 & 83.4 & 74.5 & 66.4 & 9/30 & 3 (0)\\
EasyOCR 1.7, TV & 77.3 & 67.7 & 60.5 & 46.0 & 39.2 & 1/30 & 0 (0)\\
PaddleOCR PP-OCRv4, TV & 98.1 & 96.5 & 96.5 & 94.1 & 89.9 & 14/30 & 6 (0)\\
docTR 1.1, TV & 99.0 & 97.9 & 97.9 & 97.4 & 93.6 & 24/30 & 16 (0)\\
img2table 2.0, TV & 77.3 & 64.1 & 63.7 & 37.9 & 34.7 & 0/30 & 1 (1)\\
SLANet-plus, TV & 98.1 & 96.5 & 96.5 & 94.1 & 89.9 & 14/30 & 6 (0)\\
Text layer of p.~455, as filed & 98.1 & 96.7 & 48.0 & 88.5 & 21.9 & 0/14 & 1 (1)\\
Text layer of p.~455, TV & 98.1 & 96.7 & 95.4 & 88.5 & 85.9 & 9/14 & 7 (0)\\
Amazon Textract & $>$99.9 & 99.9 & 99.9 & 99.9 & 99.5 & 33/34 & 18 (0)\\
Typist A & 99.5 & 99.2 & 94.7 & 94.7 & 76.6 & 25/34 & 11 (1)\\
Typist B & 99.9 & 99.9 & 99.9 & 99.9 & 99.0 & 33/34 & 18 (0)\\
\bottomrule
\end{tabular}
\par\smallskip
\parbox{\textwidth}{\footnotesize Pooled over the seven panels (text layer: p.~455; Textract and typists: the four keyed pages), each reader at the configuration of Table~I of the main text. Characters and figures read are position-free; cells and in order are as defined above; TV: the reader's words filed by TableVision's comb, the values after the arrows in Table~I of the main text; released counts the columns whose filed cells sum exactly to the filed total, (G1)--(G2) of the gate, on the output of the row (a text reader's lines, a table reader's grid, or the comb's cells), the wrong ones in parentheses.}
\end{table*}

\emph{What the panels show.} The loss from reading to filing gathers where tables are wide and sparse (Table~S1). On the four 1954 panels, two columns wide with their zeros printed, docTR and PaddleOCR file 98.4--100\% of the figures; on p.~455, seven columns wide with 144 of its 448 cells blank, they file 50.3\% and 43.8\%, although 96.7\% and 84.9\% of the figures stand in the correct order within their rows, so the loss there is mostly blanks that left no token. The comb recovers 97.7\% and 92.1\% on that page. Through the comb, for all six readers, and on the keyed grids, the check releases 79 columns (Table~S2), 32 from comb-filed words and 47 from the keyed grids, and 77 are exact (with the text layer of p.~455, 86 and 84). On the comb-filed words of the five distinct open-source readers (SLANet-plus's being PaddleOCR's), 100 columns close on their panel and 27 are correct: the check releases 25 of the 27 and 1 of the other 73. On the keyed grids, 57 columns close on their page and 47 are correct: it releases 46 of the 47 and 1 of the other 10. The three correct columns it withholds have no line labeled ``Total'' in Tesseract's words (two) or a typist's ``XX'' on a heading row (one), each a gap in the sense of (G1). One exception is typist~A's Schleier column for Waukesha County on Wisconsin 1940 p.~602, where six figures are displaced by one row (the typist's 1, 0, 1, 0, 2, 0 against the reference's 0, 1, 0, 1, 0, 2) and the column still sums to its printed total. The other is Langlade County's Hirst column on p.~455 in img2table's words, which come from Tesseract in mode 11: five figures dropped (9, 4, 24, 4, and 9, together 50) and a 30 read as 80. Over every configuration run, comb-filed words give 59 releases, 57 exact, 47 of 49 on the pages with blank cells (SLANet-plus's comb rows, which repeat PaddleOCR's, not counted); both exceptions are this column, read by Tesseract in mode 11 at $\times2$ alone and inside img2table. Amazon Textract's output also carries one correct figure behind a run of asterisks (Hampton, p.~340), which the parser strips; counted strictly, Textract errs on 2 of the 1356 keyed cells.

\emph{The check on lines as returned.} On the text readers' own lines the check releases 8 columns at the configurations of Table~I, 6 of them exact (with the text layer, 9 and 6). The exact ones are columns in which nothing slid: five on the 1954 panels and Langlade's Lueck column, left of every blank on p.~455. The wrong ones are Lafayette's Quick column on p.~455 in Tesseract's lines and in the text layer, which holds the Scattering votes of Seymour and South Wayne in place of its own votes from New Diggings and Darlington's first ward and still sums to its printed 2, and Langlade's Comings column in EasyOCR's lines, ten figures from three columns that sum to its printed 161. Over every configuration run the lines give 21 releases, 12 exact; five of the nine wrong ones are runs of rows ending at the total line that happen to sum to it, chosen because on misfiled lines the true block foots in fewer columns than a later start, the condition of Section~IV-B for the cut failing.

\section*{Failure Modes}
Table~\ref{supp:tab:failures} collects the failure modes met in development and in the test bed, the signal that exposed each, and what became of the affected data (Section~VI-E of the main text).

\begin{table*}[t]
\caption{Failure Modes, the Signals That Exposed Them, and Their Disposition}
\label{supp:tab:failures}
\centering\scriptsize
\begin{tabular}{@{}rp{6.0cm}p{5.4cm}p{5.3cm}@{}}
\toprule
& Failure mode (where observed) & Signal that exposed it & Disposition\\
\midrule
1 & Sideways pages; skew over half a row pitch (IA 1894, 1895) & rendering; row 20 px lower at the right edge & rotate, deskew; one page, two counties, not attempted\\
2 & Column comb shifted by a 28-px double rule (IA) & width guard (b); a total under another's name & comb re-measured (S5)\\
3 & Data row taken for a heading or never detected (six rows in five counties, IA 1894) & $\mathrm{Omit}_1$: every column short by one row's figures; gap at a label line & row read off the page and block re-gated (four counties)\\
4 & Sub-heading given a reading from the column rule (eight rows, WI 1940) & by-eye check; residuals & readings set absent (S1, S4)\\
5 & Last column clipped by the scan edge (WI 1940; WI 1956) & window against page edge (S6); unread cells & columns withheld\\
6 & Witness misreads: text-layer drift (297 for 207, WI 1940); broken glyphs (42 read as 12, MN 1889) & witness disagreement; residual; 5--8$\times$ re-read & corrected only from the page\\
7 & Whole-county first-pass misreads (18 of 5630 cells, Milwaukee, WI 1940) & by-eye read; residuals & corrected; six columns still off by the book's addition\\
8 & Printed arithmetic that does not close (IA 1896; Langlade, WI 1925; Winnebago, WI 1940; Dane, WI 1958) & rows contradict the total; row dominance; coherence screen; canvass & no cell bent; withheld (IA 1896, Dane) or flagged (Langlade); WI 1940 not released\\
9 & Blind spots of the sum: compensating errors in an accepted column (Rock, WI 1962); sum-preserving displacement (typist A) & official county return only; row-grid comparison & remains \Exact, noted for page inspection; structural constraint\\
\bottomrule
\end{tabular}
\par\smallskip
\parbox{\textwidth}{\footnotesize Observed in development and in the test bed (Section~VI-E of the main text). WI: Wisconsin Blue Book; IA: Iowa Official Register; MN: Minnesota Legislative Manual. Years are volume years (the 1925 volume reports the 1924 primary; the 1954--1962 volumes the elections two years earlier).}
\end{table*}

\section*{The Hard Pages of Table~I}
\emph{Pages.} The bottom panel of Table~I of the main text is three right-hand pages of the 1889 Iowa Official Register, which reports the November 1888 election: pp.~113, 145, and 171, embedded in the project's scan as one-bit images at 118 pixels per inch, about $680\times960$ pixels a page. Every right-hand page of the volume's returns continues the rows of its facing page with 16 more candidate columns, Treasurer of State (four), Judge of the Supreme Court (three), Attorney General (three), and Railroad Commissioners (six), under rotated names, and prints no row labels. Each of the three holds two county blocks: Fayette, closing on p.~113, and the first 12 rows of Floyd, which closes on p.~115; Lyon and Madison; and the last 14 rows of Scott, whose total also covers 8 rows on p.~169, and Shelby. They are the volume's continuation pages for which the workflow's reader has stored output, made in June 2026 by the project's re-reading of problem pages, before the benchmark existed. For p.~171 the project's pipeline stored two passes, the second made because Scott's columns, whose totals cover rows on p.~169, failed its tolerance test; it keeps the first when the second does no better, as here, and the benchmark scores the pass it kept. The second differs in one cell, reading Pleasant Valley's Mahin as the reference's 88; scored instead, it would add one cell and one county column and change no release. Amazon Textract's stored table output reads 54 of the volume's 58 continuation pages sideways, these three among them: each of its grids has 16 rows, one for each of the page's columns, holding that column's figures fused into a few cells, read from the bottom up and some of them upside down, and 3 to 11 columns of such cells and header fragments.

\emph{Reference.} The cells were read for this benchmark from the page images at four times (Lanczos), every cell of a column that did not foot re-read at 7 to 14 times (nearest neighbor), and checked against the printed totals. All 16 columns of Fayette and of Shelby, the 13 of Lyon's that print votes, and 14 of Madison's reproduce their printed totals exactly. Madison's Campbell and Mahin columns do not close (cells 2296 and 1804, printed 2298 and 1803); every cell of both was re-read at pixel level, the reader's pass reads them identically, and the printed totals are kept. Floyd's 12 rows with the 4 on p.~115, and Scott's 14 with the 8 on p.~169, reproduce those counties' 16 printed totals each. One first reading was corrected by this check, Scott's Campbell at Rockingham, 21, whose last glyph has the descender of a 4. Compared afterwards with the reader's pass, the reference differs in 12 of the 1616 cells (Table~\ref{supp:tab:disputes}), and in each it is the reference value that makes its column reproduce its printed total; a crop of each is in the package. The reader's output for p.~113 had been displayed once, while the data were being located, before the reference was read. The reference was read by an AI model of the reader's family (Claude), not by people; it is checked against the printed totals, not keyed by independent typists, so a misreading that leaves a column sum intact would escape it. A second model of the same family, reading 40 randomly chosen cells, 8 more of Shelby's, and every cell of Madison's two non-closing columns before seeing the reference, agreed in all 86. The panels hold 1494 figures (1417 cells and 77 printed totals) and 77 county columns with a printed total on the page, 59 of which close there.

\emph{Protocol.} Each county block is a panel, the top block with the page's header. Because a continuation page has no row labels, for a block that ends in a printed total the check takes the reader's last line holding figures as the total line, the crop ending at that line, for every reader alike; Floyd's block, which ends without one, can release nothing. The open-source readers get each crop at one, two, three, and four times (Lanczos), and the configuration that files the most cells, pooled over the six panels, is reported as before. Textract and the workflow's reader have stored whole-page output and are scored on whole pages against the same reference. The reader is Claude Sonnet~4.6 through the project's transcription script, one full-page image at 1568 pixels on the long edge, prompted to return every table as JSON with its rows and printed totals; its rows are filed as returned, the row numbers it supplies for unlabeled rows dropped, and its totals as a line labeled ``Total,'' on which the check runs unchanged. Neither Textract nor the workflow's reader was run again for the benchmark.

\emph{What the hard pages show.} Recognition fails before structure can help (Table~\ref{supp:tab:hard}): at its reported configuration the best open-source reader reproduces 71.0\% of the digits, files 19.0\% of the figures from its own lines and 65.8\% through the comb, and gets no county column correct, and the table extractors do worse; over every configuration run, two county columns come out correct, both through the comb, and neither is released. Textract's sideways grid holds 76.7\% of the digits, but only 1 of the 1494 figures in its cell. The reader's pass files 99.2\%; its 12 wrong cells are seven misread figures (14 for 12, 89 for 92, 84 for 82, 56 for 36, 64 for 84, 83 for 88, and 43 for 45), two figures shifted one cell sideways, each overwriting its neighbor's figure and leaving its own cell blank, and one dropped figure; the 8 closing columns they fall in fail the check, which releases the other 51 closing columns, all exact. Over every configuration run, the open-source readers' output yields 23 releases, 12 from their lines and 11 through the comb (SLANet-plus's comb rows, which repeat PaddleOCR's words, not counted), and none is exact: runs of misread or misfiled figures whose sum equals a figure on the total line, either a small or misread one (2, 7, 37, 96, or 531) or a printed total reached with figures from other columns; 16 of the 23 rest on a figure that is not the column's printed total.

\begin{table*}[t]
\caption{The Hard Pages, Every Reader}
\label{supp:tab:hard}
\centering\footnotesize
\begin{tabular}{@{}lrrrrrrr@{}}
\toprule
Reader & Chars. & Figs. & Cells & In order & Rows & Cols. & Released\\
\midrule
Tesseract 5.3 (mode 6, best model, $\times2$) & 67.4 & 63.3 & 15.0 & 19.7 & 0.0 & 0/77 & 0 (0)\\
EasyOCR 1.7 ($\times3$) & 29.9 & 24.2 & 3.5 & 4.1 & 0.0 & 0/77 & 0 (0)\\
PaddleOCR PP-OCRv4 ($\times4$) & 60.8 & 56.2 & 13.9 & 16.3 & 0.0 & 0/77 & 0 (0)\\
docTR 1.1 ($\times1$) & 71.0 & 69.5 & 19.0 & 30.7 & 0.0 & 0/77 & 1 (1)\\
img2table 2.0 ($\times3$) & 18.6 & 17.6 & 3.8 & 1.1 & 0.0 & 0/77 & 0 (0)\\
SLANet-plus ($\times1$) & 32.8 & 30.5 & 14.8 & 3.9 & 0.9 & 0/77 & 0 (0)\\
Tesseract 5.3, TV (mode 6, best model, $\times2$) & 67.4 & 63.3 & 53.3 & 18.8 & 0.0 & 0/77 & 1 (1)\\
EasyOCR 1.7, TV ($\times3$) & 29.9 & 24.2 & 12.7 & 3.8 & 0.0 & 0/77 & 0 (0)\\
PaddleOCR PP-OCRv4, TV ($\times4$) & 60.8 & 56.2 & 51.3 & 16.2 & 0.0 & 0/77 & 0 (0)\\
docTR 1.1, TV ($\times1$) & 71.0 & 69.5 & 65.8 & 29.9 & 3.8 & 0/77 & 0 (0)\\
img2table 2.0, TV ($\times3$) & 54.1 & 49.7 & 37.1 & 11.8 & 0.0 & 0/77 & 0 (0)\\
SLANet-plus, TV ($\times1$) & 34.8 & 32.2 & 28.7 & 6.3 & 0.9 & 0/77 & 0 (0)\\
Amazon Textract (stored output) & 76.7 & 73.1 & 0.1 & 0.0 & 0.0 & 0/77 & 0 (0)\\
TableVision reader (one stored pass) & 99.7 & 99.3 & 99.2 & 97.4 & 90.6 & 67/77 & 51 (0)\\
\bottomrule
\end{tabular}
\par\smallskip
\parbox{\textwidth}{\footnotesize Pooled over the six county blocks of pp.~113, 145, and 171 of the 1889 Iowa register (Textract and the workflow's reader: the three whole pages), each reader at the configuration that files the most cells from its own output (scale $\times1$ to $\times4$; for Tesseract also the page-segmentation mode and model). Columns as in Table~S2; released (wrong).}
\end{table*}

\begin{table}[t]
\caption{Cells Where the Reference and the Reader's Pass Differ}
\label{supp:tab:disputes}
\centering\scriptsize
\setlength{\tabcolsep}{2.5pt}
\begin{tabular}{@{}rlllrrrrr@{}}
\toprule
&&&& \multicolumn{2}{c}{Cell} & & \multicolumn{2}{c}{Column sum}\\
\cmidrule(lr){5-6}\cmidrule(l){8-9}
p. & County & Row & Column & ref. & pass & Total & ref. & pass\\
\midrule
113 & Fayette & Bethel & Jones & 12 & 14 & 289 & 289 & 291\\
113 & Fayette & Fremont & Stone & 92 & 89 & 2815 & 2815 & 2812\\
113 & Fayette & West Union & Rice & 6 & blank & 288 & 288 & 282\\
113 & Fayette & West Union & Sharp & 5 & 6 & 43 & 43 & 44\\
113 & Floyd & 3d ward & Stone & 82 & 84 & 2002 & 2002 & 2004\\
113 & Floyd & Union & Sharp & 1 & blank & 12 & 12 & 11\\
145 & Lyon & Garfield & Twombly & 36 & 56 & 953 & 953 & 973\\
145 & Madison & Lee & Case & 84 & 64 & 1347 & 1347 & 1327\\
145 & Madison & Penn & Smyth & 66 & 29 & 1354 & 1354 & 1317\\
145 & Madison & Penn & Jones & 29 & blank & 376 & 376 & 347\\
171 & Scott & Pleasant Valley & Mahin & 88 & 83 & 2250 & 2250 & 2245\\
171 & Scott & Rockingham & Smyth & 45 & 43 & 5725 & 5725 & 5723\\
\bottomrule
\end{tabular}
\par\smallskip
\parbox{\columnwidth}{\footnotesize Row labels from the facing page. Floyd's and Scott's totals are their county totals, on pp.~115 and 171, with the rows of pp.~115 and 169 included in the sums. In two cells, Lee's Case and Bethel's Jones, the glyph at pixel level looks more like the pass's figure (Textract also returns 64 for Lee's Case); there the reference follows the printed total and the sister columns (Smyth 84, Mitchell 82; Rice and Williamson 12), as the typed pages do for Waterford's broken 42.}
\end{table}

\section*{Proofs of the Results of Section V}
This supplement restates each result of Section~V of the main text, numbered as in the main text, and proves it; two results appear only here, Theorem~4, on when the segmentation recursion returns the true cut (stated in words in Section~IV-B of the main text), and Lemma~4, on the likelihood fitted in Section~VI-C. Every statement is also checked numerically, exhaustively where the domain is finite, by \texttt{replication/theory/theory\_checks.py} in the package. The notation is that of Sections~III--V of the main text: $v_c$ is the vector of printed figures of a column over the county's $R$ constituent lines, $t_c$ the printed total, $\hat V_{ic}$ and $\hat t_c$ the readings, $d_c=\sum_i\hat V_{ic}-\hat t_c$ the residual (Eq.~(1) of the main text), $e_{ic}$ and $e_{tc}$ the reading errors, $b_c$ the source discrepancy, $\sigma_c$ the scope error, $\rho$ the extraction map, and $\mathrm{Cmp}_c$ the completeness predicate. Eq.~(2) of the main text is the identity $d_c=b_c+\sigma_c+\sum_ie_{ic}-e_{tc}$, which holds whenever $\mathrm{Cmp}_c$ does and follows by substituting $\hat V_{ic}=v_{\rho(i)c}+e_{ic}$ and $\hat t_c=t_c+e_{tc}$ into Eq.~(1). In Section~V-A of the main text a correctly scoped column is fixed, one for which $\rho$ is a bijection of $\{1,\dots,\hat R\}$ onto the county's constituent lines (so $\hat R=R$ and $\sigma_c=0$), with consistent printed arithmetic ($b_c=0$), and the readings are written $v_c+e$, $t_c+e_t$. In Section~V-C, cells are defective independently with probability $\theta$, a numeric defect changes a cell by a nonzero amount drawn, independently of everything else, from a law $q$ on $\mathbb Z\setminus\{0\}$, $\theta_e\le\theta$ is the rate of numeric defects, and
\begin{equation}
 \kappa=\max_{k\ge2}\Pr\Big[\textstyle\sum_{i=1}^{k}\varepsilon_i=0\Big]
\label{supp:eq:kappa}
\end{equation}
is the cancellation probability for independent $\varepsilon_i$ with law $q$; the total is assumed read correctly and to be the canvass figure, and $\mathrm{Cred}_c$ is assumed to hold. Throughout, $S_k=\varepsilon_1+\dots+\varepsilon_k$ and $p_k=\Pr[S_k=0]$, so $p_0=1$, $p_1=0$, and $\kappa=\max_{k\ge2}p_k$.

\begin{lemma}[Exactness and complexity; Lemma~1 of the main text]
$F(k,j)$ of Eq.~(5) of the main text is the maximum score of a cut of lines $1,\dots,j$ into $k$ blocks of lengths in $[2,L]$, so the recursion returns a global maximizer over all such $K$-cuts, in $O(NL(C+K))$ time and $O(NL+KN)$ space beyond the $O(NC)$ readings and their prefix sums.
\end{lemma}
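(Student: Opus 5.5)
The plan is a standard induction on $k$ for correctness, followed by a direct count of the work done.

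\emph{Correctness.} Let $G(k,j)$ be the maximum of $\sum_{m=1}^{k}S(j_{m-1}+1,j_m)$ over all cuts $0=j_0<j_1<\dots<j_k=j$ with every block length $j_m-j_{m-1}$ in $[2,L]$, and set $G(k,j)=-\infty$ when no such cut exists. I will prove $F(k,j)=G(k,j)$ by induction on $k$.

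For $k=0$, the only cut of the empty prefix is the empty cut, so $G(0,0)=0$ and $G(0,j)=-\infty$ for $j>0$. These values match the initialisation.

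For the inductive step, fix an admissible $k$-cut of lines $1,\dots,j$ and condition on its last cut point $i=j_{k-1}$. The length constraint on the last block is exactly $\max(0,j-L)\le i\le j-2$. Because the score is additive over blocks, the total equals $S(i+1,j)$ plus the score of a $(k-1)$-cut of $1,\dots,i$. That prefix score is at most $G(k-1,i)=F(k-1,i)$ by the inductive hypothesis, so $G(k,j)\le F(k,j)$. Conversely, concatenating an optimal $(k-1)$-cut of $1,\dots,i$ with the block $[i+1,j]$ gives an admissible $k$-cut of score $F(k-1,i)+S(i+1,j)$, so $G(k,j)\ge F(k,j)$. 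The $-\infty$ convention handles prefixes that cannot be cut, since $-\infty$ plus any finite score stays $-\infty$.

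Storing the maximizing $i$ for each $(k,j)$ and backtracking from $(K,N)$ reconstructs a cut whose score is $F(K,N)=G(K,N)$, so it is a global maximizer over all $K$-cuts. This is the usual optimal-substructure argument; its only requirement is that $S(i,j)$ depends on nothing outside the block $[i,j]$. The cue terms $h_i$, $f_j$ and $\eta_j$ are fixed properties of lines adjacent to or inside the block, not of the choice of neighbouring blocks, so the requirement holds.

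\emph{Complexity.} This is the step that needs care, because the stated bound $O(NL(C+K))$ separates the cost of scores from the cost of the recursion.

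\emph{Precomputation.} I first compute the prefix sums $P_c$ and the unread counts $U_c$ in $O(NC)$; this is the allowance made in the statement. I then tabulate $S(i,j)$ for every pair with $2\le j-i+1\le L$, which is at most $NL$ pairs. Each $\phi(i,j)$ costs $O(C)$ from two prefix differences per column, and the cue terms are $O(1)$. The table therefore costs $O(NLC)$ time and $O(NL)$ space.

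\emph{Recursion.} There are $KN$ states $(k,j)$, and each maximizes over at most $L-1$ values of $i$ with $O(1)$ table lookups. This costs $O(NLK)$ time, and the $F$ values and back-pointers take $O(KN)$ space.

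Summing the two parts gives $O(NL(C+K))$ time and $O(NL+KN)$ space beyond the readings, as claimed. If one preferred not to store the score table, recomputing $S$ on demand would cost $O(NLKC)$ time instead. Precomputation is what makes the bound additive in $C$ and $K$.
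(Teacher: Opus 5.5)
Your proof is correct and follows the paper's own argument. The correctness part is the same optimal-substructure induction on $k$, resting only on the additivity of block scores. The complexity part is the same count: $O(C)$ per block score from prefix sums over at most $NL$ pairs, plus $KN$ states with $O(L)$ transitions each. Your explicit tabulation of the score table $S$ spells out what the paper leaves implicit and accounts for the $O(NL)$ space term.
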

\begin{IEEEproof}
The score of a cut is the sum of its block scores, and the last block of an optimal $k$-cut of $1,\dots,j$ is some $[i+1,j]$ with $j-L\le i\le j-2$ whose prefix is an optimal $(k-1)$-cut of $1,\dots,i$; induction on $k$ gives Eq.~(5) of the main text. Prefix sums make each $\phi(i,j)$ an $O(C)$ evaluation over at most $NL$ pairs, and the recursion fills $KN$ states with $O(L)$ constant-time transitions each. The argument uses only that the score of a cut is a sum of block scores, so it holds for any such score, in particular the weighted score of Theorem~2.
\end{IEEEproof}

\begin{proposition}[Kernel]
\label{supp:prop:kernel}
Condition (G2) holds if and only if $\mathbf 1^{\mathsf T}e=e_t$. The undetectable patterns therefore form the hyperplane $H=\{(e,e_t):\mathbf 1^{\mathsf T}e=e_t\}$, a sublattice of rank $R$ in $\mathbb{Z}^{R+1}$: every pattern with exactly one nonzero component is detected, and every permutation of the entries of $v_c$ is undetectable.
\end{proposition}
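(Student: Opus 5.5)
The proof is a direct computation from the residual identity, Eq.~(2) of the main text, in the special case fixed in Section~V-A. For a correctly scoped column, $\rho$ is a bijection onto the $R$ constituent lines, so $\sigma_c=0$, and $b_c=0$ by assumption. Substituting the readings $v_c+e$ and $t_c+e_t$ into Eq.~(1) gives
\[
d_c=\mathbf 1^{\mathsf T}(v_c+e)-(t_c+e_t)=(\mathbf 1^{\mathsf T}v_c-t_c)+\mathbf 1^{\mathsf T}e-e_t=\mathbf 1^{\mathsf T}e-e_t .
\]
The column is complete here, since every reading is an integer. Hence (G2), $d_c=0$, holds exactly when $\mathbf 1^{\mathsf T}e=e_t$, and the set of undetectable patterns is $H$.

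Next I show that $H$ is a sublattice of rank $R$. It is the kernel of the surjective group homomorphism $\mathbb Z^{R+1}\to\mathbb Z$, $(e,e_t)\mapsto\mathbf 1^{\mathsf T}e-e_t$, so it is a subgroup. For an explicit basis, take the $R$ vectors $(\mathbf e_i,1)$, $i=1,\dots,R$. Any $(e,e_t)\in H$ equals $\sum_i e_i(\mathbf e_i,1)$, because its last coordinate is forced to be $\mathbf 1^{\mathsf T}e$. These vectors are linearly independent because their first $R$ coordinates are, so the rank is exactly $R$. Equivalently, $H$ is the graph of $e\mapsto\mathbf 1^{\mathsf T}e$ and therefore isomorphic to $\mathbb Z^R$.

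The two consequences follow quickly.
\begin{itemize}
\item \emph{Single-component patterns are detected.} If exactly one component is nonzero, it is either a cell $e_i\ne0$ with $e_t=0$, giving $d_c=e_i\ne0$, or the total $e_t\ne0$ with $e=0$, giving $d_c=-e_t\ne0$. In both cases the pattern lies outside $H$.
\item \emph{Permutations are undetectable.} Suppose the cells are read as $v_{\pi(i)c}$ for a permutation $\pi$ and the total is read correctly. Then $e_i=v_{\pi(i)c}-v_{ic}$ and $e_t=0$. Since $\sum_i v_{\pi(i)c}=\sum_i v_{ic}$, we get $\mathbf 1^{\mathsf T}e=0=e_t$, so the pattern lies in $H$.
\end{itemize}

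No step is a real obstacle. The only points needing care are stating the rank claim precisely, as the rank of a free abelian group with an exhibited basis, and making clear that "permutation" means the entries are permuted while the total is read correctly.
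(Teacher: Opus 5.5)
Your proposal is correct and takes the same route as the paper. The paper also sets $\sigma_c=b_c=0$ in the residual identity to get $d_c=\mathbf 1^{\mathsf T}e-e_t$, then checks single-component patterns and permutations $e=\pi v_c-v_c$ directly. Your explicit basis $(\mathbf e_i,1)$ only spells out the rank-$R$ claim, which the paper states without detail.
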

\begin{IEEEproof}
By Eq.~(2) of the main text with $b_c=\sigma_c=0$, $d_c=\mathbf 1^{\mathsf T}e-e_t$, so (G2) is the single linear equation $\mathbf 1^{\mathsf T}e=e_t$, whose integer solutions form a sublattice of rank $R$. A pattern with one nonzero component violates it; a permutation of $v_c$ has $e=\pi v_c-v_c$ with $\mathbf 1^{\mathsf T}e=0=e_t$.
\end{IEEEproof}

\begin{corollary}[Second constraints]
\label{supp:cor:subtotal}
Partition the lines into groups $g\in\mathcal G$ with separately printed subtotals, read with errors $e_{s_g}$, and keep the county total. A pattern $(e,(e_{s_g})_g,e_t)$ is undetectable if and only if $\mathbf 1_g^{\mathsf T}e=e_{s_g}$ for every $g$ and $\mathbf 1^{\mathsf T}e=e_t$, so the undetectable set has codimension $|\mathcal G|+1$ in $\mathbb{Z}^{R+|\mathcal G|+1}$ instead of one; with the subtotals read correctly, a permutation is detected unless it preserves every group sum, and a compensating pair unless both members share a group.
\end{corollary}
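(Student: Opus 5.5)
The plan is to apply the same residual identity used for Proposition~\ref{supp:prop:kernel} to each printed sum separately. With $b=\sigma=0$ at every level, the group subtotal $g$ has residual $d_g=\mathbf 1_g^{\mathsf T}e-e_{s_g}$. This holds because Eq.~(2) of the main text applies verbatim to each group, with the group's lines as constituents and its subtotal as total. The county total keeps $d=\mathbf 1^{\mathsf T}e-e_t$. A pattern is undetectable exactly when every printed check passes, that is, when all $|\mathcal G|+1$ residuals vanish. This gives the stated system of equations.

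For the codimension, I will write the system as $Ae'=0$ with $e'=(e,(e_{s_g})_g,e_t)\in\mathbb{Z}^{R+|\mathcal G|+1}$. Each group row carries a $-1$ in its own subtotal coordinate, and the county row carries a $-1$ in the $e_t$ coordinate. These $|\mathcal G|+1$ coordinates are distinct and each appears in only one row, so the rows are linearly independent. The solution set is therefore a sublattice of rank $R$, that is, of codimension $|\mathcal G|+1$. It is in fact a saturated sublattice: the last $|\mathcal G|+1$ coordinates are free functions of $e$, so $e$ alone parametrizes it. I will also note that the county equation is implied by the group equations only when $e_t=\sum_ge_{s_g}$. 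It is not implied in general, so it does count as an independent constraint.

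For the two consequences, I set $e_{s_g}=e_t=0$. A permutation $\pi$ of $v_c$ has $e=\pi v_c-v_c$. Since $\mathbf 1^{\mathsf T}e=0$ automatically, the pattern is undetectable if and only if $\mathbf 1_g^{\mathsf T}(\pi v_c)=\mathbf 1_g^{\mathsf T}v_c$ for every $g$, that is, if and only if the permutation preserves every group sum. A compensating pair has $e=a(\mathbf e_i-\mathbf e_j)$ with $a\ne0$. If $i$ and $j$ lie in the same group, every group sum is unchanged. If $i\in g$ and $j\in g'$ with $g\ne g'$, then $\mathbf 1_g^{\mathsf T}e=a\ne0$, and the pattern is detected.

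The main obstacle is only one of bookkeeping: making sure the groups partition the lines, so that each cell enters exactly one subtotal equation, and that the subtotal coordinates are genuinely separate error variables. Both follow from the hypotheses as stated.
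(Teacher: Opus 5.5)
Your proof is correct and follows essentially the paper's own argument: each printed sum gives a residual of the form of Eq.~(2) over its own lines. The $|\mathcal G|+1$ forms are independent because each carries a variable ($e_{s_g}$ or $e_t$) that appears in no other, and the permutation and compensating-pair claims follow from checking which group sums change. Your extra remarks are correct refinements that the paper does not need: the solution set is the graph of an integer linear map of $e$, hence a saturated rank-$R$ sublattice, and the county equation is implied by the group equations only when $e_t=\sum_g e_{s_g}$.
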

\begin{IEEEproof}
Each printed sum is a residual of the form of Eq.~(2) of the main text over its own lines, and the $|\mathcal G|+1$ linear forms are independent because each contains a variable ($e_{s_g}$ or $e_t$) that occurs in no other form. A permutation has $\mathbf 1^{\mathsf T}e=0$ and is caught by any group whose sum it changes; a compensating pair $e_i=-e_j$ changes the sum of every group containing exactly one of $i,j$.
\end{IEEEproof}

\begin{proposition}[Arbitration by arithmetic]
\label{supp:prop:arbitration}
Let two witnesses read the column as $x^{(1)}$ and $x^{(2)}$, disagreeing on $D=\{i:x^{(1)}_i\ne x^{(2)}_i\}$ with $\delta_i=x^{(2)}_i-x^{(1)}_i$. Assume $\sigma_c=b_c=0$, that the total is read correctly ($e_t=0$), and that for every $i$ at least one witness has the printed figure. Then the true column is $x^{(1)}$ with $x^{(2)}$ substituted on some $T\subseteq D$, and $T$ satisfies $\sum_{i\in T}\delta_i=t_c-\mathbf 1^{\mathsf T}x^{(1)}$. The sum identifies $T$, and hence the column, if and only if that residual has a unique representation as a subset sum of $\{\delta_i\}_{i\in D}$. This holds for every residual when the $|\delta_i|$, sorted increasingly, are superincreasing (each exceeds the sum of all smaller ones), and in particular whenever $|D|=1$.
\end{proposition}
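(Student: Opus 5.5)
The plan has three steps: describe the true column as a substitution pattern, turn the printed total into a subset-sum equation, and then show that the representation is unique under the superincreasing hypothesis.

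\emph{The true column as a substitution pattern.} Since $\sigma_c=b_c=0$, the decomposition of Eq.~(2) applies to the printed column itself: the true figures $v_c$ satisfy $\mathbf 1^{\mathsf T}v_c=t_c$. For $i\notin D$ the two witnesses agree. By hypothesis at least one of them holds $v_{ic}$, so both do, and $v_{ic}=x^{(1)}_i$. For $i\in D$ the witnesses differ, and exactly one holds $v_{ic}$. Let $T$ be the set of $i\in D$ where it is the second witness. Then $v_c=x^{(1)}+\sum_{i\in T}\delta_i\mathbf e_i$. Summing coordinates and using $e_t=0$, so that the read total equals $t_c$, gives $\sum_{i\in T}\delta_i=t_c-\mathbf 1^{\mathsf T}x^{(1)}$.

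\emph{Identification.} The sum check sees only the residual $r=t_c-\mathbf 1^{\mathsf T}x^{(1)}$. The candidate columns consistent with the data and the hypotheses are exactly $x^{(1)}+\sum_{i\in T'}\delta_i\mathbf e_i$ for those $T'\subseteq D$ with $\sum_{T'}\delta_i=r$. Distinct subsets $T'$ give distinct columns, because every $\delta_i$ with $i\in D$ is nonzero and the subsets differ in at least one coordinate. The column is therefore determined if and only if $r$ has a unique subset-sum representation. This gives the "if and only if".

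\emph{Sufficiency of the superincreasing condition.} This is the main obstacle, because the $\delta_i$ carry signs while the hypothesis concerns the $|\delta_i|$. Suppose $T'\ne T''$ have equal sums. Then $\sum_{T'\setminus T''}\delta_i-\sum_{T''\setminus T'}\delta_i=0$, which is a signed combination $\sum_{i\in S}\epsilon_i\delta_i=0$ with $S=T'\triangle T''$ nonempty and $\epsilon_i\in\{\pm1\}$. Let $m$ be the element of $S$ with the largest $|\delta_m|$. Superincreasing means $|\delta_m|$ strictly exceeds the sum of the smaller magnitudes. I read "sorted increasingly, superincreasing" as implying that the magnitudes are distinct. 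By the triangle inequality the remaining terms cannot cancel $\epsilon_m\delta_m$, which contradicts the signed combination being zero. So the representation is unique for every residual. When $|D|=1$ there is a single $\delta_i\ne0$, and the two subsets have sums $0$ and $\delta_i$, which differ. This case is immediate, and it is also the degenerate case of the superincreasing condition.
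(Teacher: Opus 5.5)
Your proof is correct and follows the paper's own argument step for step. As in the paper, the substitution pattern restates the hypotheses, identification is equivalent to uniqueness of the subset-sum representation because distinct subsets give distinct footing columns, and superincreasing magnitudes separate any two subsets at the largest index where they differ; your use of the symmetric difference $T'\triangle T''$ with the triangle inequality, and your remark that superincreasing forces distinct magnitudes, make explicit what the paper's one-line version leaves implicit.
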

\begin{IEEEproof}
The claim that the true column is $x^{(1)}$ with $x^{(2)}$ substituted on some $T\subseteq D$ restates the hypotheses, $T$ being the set of positions at which $x^{(1)}$ is wrong. Two subsets $T\ne T'$ with equal sums produce two footing columns, and conversely; hence uniqueness of the representation is necessary and sufficient. If the absolute differences are superincreasing, two distinct subsets differ at some largest index $j$, and $|\delta_j|$ exceeds the sum of all smaller absolute differences, so the two sums differ.
\end{IEEEproof}

\begin{proposition}[Signature of omitted rows]
\label{supp:prop:signature}
If a set $\Omega$ of constituent lines is omitted, none duplicated, and all other terms of Eq.~(2) of the main text vanish in every column, the vector of residuals across the county's columns is $\boldsymbol d=-\sum_{\ell\in\Omega}\boldsymbol v_\ell$. A single omitted line is detected if and only if it prints a nonzero figure in some column, and then its figures are determined; which printed line carries them is not, and must be read from the page. From $\boldsymbol d$ alone, a single omitted line valued $-\boldsymbol d$, any set $\Omega$ with $\sum_{\ell\in\Omega}\boldsymbol v_\ell=-\boldsymbol d$, and reading errors with $\sum_ie_{ic}-e_{tc}=d_c$ in every column are observationally equivalent.
\end{proposition}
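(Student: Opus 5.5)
The plan is to compute the scope term $\sigma_c$ of Eq.~(2) directly and read off each claim from it. Under the hypotheses, $\rho$ is injective with image $\mathcal U_k\setminus\Omega$. Injectivity holds because no line is duplicated. Because every other term vanishes, no $\rho(i)$ is $\varnothing$ or a non-constituent line, since either would add its own value $v_{\varnothing c}$ or $v_{\ell c}$ to $\sigma_c$. We also have $b_c=0$ and $e_{ic}=e_{tc}=0$. Then
\begin{equation*}
\sum_{i=1}^{\hat R}v_{\rho(i)c}-\sum_{\ell\in\mathcal U_k}v_{\ell c}=-\sum_{\ell\in\Omega}v_{\ell c},
\end{equation*}
and Eq.~(2) gives $d_c=\sigma_c=-\sum_{\ell\in\Omega}v_{\ell c}$ in each column $c$. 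Stacking over $c$ gives $\boldsymbol d=-\sum_{\ell\in\Omega}\boldsymbol v_\ell$. I will state the hypothesis "all other terms vanish" as including $\rho(i)\in\mathcal U_k$ for every $i$, which is how I read "omitted, none duplicated."

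The second claim is the case $\Omega=\{\ell\}$, where $\boldsymbol d=-\boldsymbol v_\ell$. Detection means (G2) fails in some column, that is, $d_c\ne0$ for some $c$. This happens if and only if $v_{\ell c}\ne0$ for some $c$. Since all figures are nonnegative, no cancellation across columns is even at issue. When the line is detected, $\boldsymbol v_\ell=-\boldsymbol d$, so its figures are determined. To show that the identity of the line is not determined, I will exhibit two lines of a county that print identical figure vectors. Omitting either gives the same $\boldsymbol d$ and the same observed readings up to row order. Alternatively, I will simply note that $\boldsymbol d$ depends on $\ell$ only through $\boldsymbol v_\ell$, so the label is not a function of $\boldsymbol d$.

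The third claim is observational equivalence, and I will prove it by showing that every hypothesis yields the same residual vector. A set $\Omega$ gives $-\sum_{\Omega}\boldsymbol v_\ell$ by the first part. Reading errors with $\sigma_c=b_c=0$ give $d_c=\sum_ie_{ic}-e_{tc}$ by Eq.~(2). Whenever these equal the common $\boldsymbol d$, the three hypotheses produce identical residual vectors, so no function of $\boldsymbol d$ can separate them. The error hypothesis is always realizable: take $e_{tc}=-d_c$ in each column. The total reading is then $t_c+e_{tc}=\sum_{\ell\in\mathcal U_k\setminus\Omega}v_{\ell c}\ge0$, so it is a legitimate reading.

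The main obstacle is wording rather than mathematics. "Observationally equivalent" has to be pinned to "from $\boldsymbol d$ alone," since the full readings could differ, and the realizability check above is what makes the error alternative genuine rather than vacuous.
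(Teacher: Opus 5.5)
Your proof is correct and follows the paper's own argument: compute $\sigma_c=-\sum_{\ell\in\Omega}v_{\ell c}$ with the other terms zero, specialize to $\Omega=\{\ell\}$ so that $\boldsymbol d=-\boldsymbol v_\ell$ fixes the figures but not the label, and observe that the three hypotheses give the same $\boldsymbol d$. The first step needs the hypothesis read as $\rho$ being a bijection onto $\mathcal U_k\setminus\Omega$, as you end up stating and as the paper assumes tacitly; your first justification does not establish it, because added rows contribute to $\sigma_c$ itself, not to $b_c$ or the $e$'s.

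Your optional realizability check has a sign slip. With $e_{tc}=-d_c$ the total reading is $t_c-d_c=t_c+\sum_{\ell\in\Omega}v_{\ell c}$, a total read too high, and not $\sum_{\ell\in\mathcal U_k\setminus\Omega}v_{\ell c}=t_c+d_c$, which would give residual $-d_c$. The corrected reading is still nonnegative, so your conclusion stands.
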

\begin{IEEEproof}
$\sigma_c=-\sum_{\ell\in\Omega}v_{\ell c}$ for every $c$ and the other terms are zero, so $\boldsymbol d=-\sum_{\ell\in\Omega}\boldsymbol v_\ell$. For a single omitted line $\ell$, $\boldsymbol d=-\boldsymbol v_\ell$ is nonzero exactly when $\ell$ prints a nonzero figure in some column, and it determines $\boldsymbol v_\ell$ but not $\ell$. The equivalence statement is the observation that the three hypotheses produce the same $\boldsymbol d$.
\end{IEEEproof}

\begin{lemma}[Place-value syndromes]
\label{supp:lem:syndrome}
Let a correctly scoped column have exactly one wrong number, a cell or the total, misread or misprinted. (i)~If a digit $a$ at place $p$ is read or printed as $a'$, then $d_c=\pm(a'-a)10^p$ (plus for a cell, minus for the total), so $|d_c|$ has exactly one nonzero digit, which gives $|a'-a|$ and $p$. (ii)~If the number's digits appear in another order, $9\mid d_c$; adjacent digits $b,a$ at places $p+1,p$ appearing as $a,b$ give $d_c=\pm9(a-b)10^p$. (iii)~Two digit substitutions alone in a column cancel if and only if they fall at the same place and move $d_c$ by opposite amounts.
\end{lemma}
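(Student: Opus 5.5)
The plan is to reduce everything to Eq.~\eqref{eq:decomp} for a correctly scoped column, so that $\sigma_c=0$. I would fold the single wrong number into either $b_c$ (if it is misprinted) or a reading error (if it is misread). Either way $d_c$ equals the signed difference between the observed and correct value of that one number: with a plus sign if it is a cell (it enters $\sum_i\hat V_{ic}$, or $\sum_\ell v_{\ell c}$ for a misprint), and a minus sign if it is the total (it enters through $-e_{tc}$ or $-t_c$). This matches Proposition~\ref{prop:kernel}, where $d_c=\mathbf 1^{\mathsf T}e-e_t$. After this step every part is a statement about the decimal expansion $N=\sum_p n_p10^p$ of a single number and the change $N'-N$.

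For (i), substituting digit $a$ by $a'$ at place $p$ changes only the term $n_p10^p$, so $N'-N=(a'-a)10^p$ and $d_c=\pm(a'-a)10^p$. Since $1\le|a'-a|\le9$, the absolute value $|d_c|$ is a single nonzero digit $|a'-a|$ followed by $p$ zeros. Hence its decimal expansion recovers both $|a'-a|$ and $p$.

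For (ii), I would use $10\equiv1\pmod 9$, so $N\equiv\sum_pn_p\pmod 9$. A reordering of the digits preserves the digit sum, so $N'\equiv N$ and $9\mid N'-N=\pm d_c$. For the adjacent transposition I would compute directly: the two affected terms change from $b10^{p+1}+a10^p$ to $a10^{p+1}+b10^p$, a difference of $9(a-b)10^p$, and the sign convention of (i) then gives $d_c=\pm9(a-b)10^p$.

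For (iii), two substitutions contribute $\delta_110^{p}$ and $\delta_210^{q}$ to $d_c$, each with its cell or total sign absorbed into $\delta_j$, where $1\le|\delta_j|\le9$. This holds whether the two substitutions fall in one cell, in two cells, or one in the total. If $p=q$, they cancel exactly when $\delta_1=-\delta_2$. If $p<q$, then $|\delta_110^p|\le9\cdot10^p<10^q\le|\delta_210^q|$, so the sum is nonzero. This superincreasing comparison of place values, the same device as in Proposition~\ref{prop:arbitration}, is the only step needing an inequality.

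The main obstacle is bookkeeping rather than depth. I need to keep the sign for a total versus a cell straight, and to cover the case where both substitutions hit the same number at different places, which the same inequality handles. I also need to state explicitly that ``digit substitution'' keeps the number of digits fixed (a dropped or inserted digit is not a substitution), so that every change really is of the form $\delta10^p$ with $1\le|\delta|\le9$.
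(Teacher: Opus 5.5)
Your proposal is correct and follows the paper's proof essentially step for step. You reduce via Eq.~\eqref{eq:decomp} with $\sigma_c=0$, so that the one wrong number enters $d_c$ with sign $+$ as a cell and $-$ as the total: through $b_c$ for a misprint, and through $e_{ic}$ or $e_{tc}$ for a misread. You then use the single-digit change $(a'-a)10^p$, the congruence $10\equiv1\pmod 9$ together with the direct adjacent-transposition computation, and the place-value inequality $9\cdot10^p<10^q$ for $p<q$, which also covers two substitutions in one number.
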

\begin{IEEEproof}
With $\sigma_c=0$, Eq.~(2) of the main text reads $d_c=b_c+\sum_ie_{ic}-e_{tc}$. If the one wrong number is a cell that is off by $e$, then $d_c=e$: a misread cell has $e_{ic}=e$ and every other term zero, and a misprinted cell, the readings being correct and the total being the sum the cells require, has $b_c=e$. If it is the total, off by $e$, then $d_c=-e$ in the same two ways. A number with digits $a_p$ is $\sum_pa_p10^p$. (i)~Changing the digit at place $p$ from $a$ to $a'$ changes the number by $e=(a'-a)10^p$, with $1\le|a'-a|\le9$; the decimal expansion of $|d_c|=|a'-a|10^p$ is the single digit $|a'-a|$ followed by $p$ zeros, so it names $|a'-a|$ and $p$, and distinct pairs give distinct values. (ii)~Since $10\equiv1\pmod9$, every number is congruent modulo~9 to the sum of its digits, which a reordering preserves; so $9\mid e$. Reading $b$ at place $p+1$ and $a$ at place $p$ as $a$ and $b$ changes the number by $(a10^{p+1}+b10^p)-(b10^{p+1}+a10^p)=(a-b)(10^{p+1}-10^p)=9(a-b)10^p$. (iii)~The two substitutions add $u_110^{p_1}+u_210^{p_2}$ to $d_c$, where $u_i$ is the digit change, with sign reversed if it falls in the total, so $1\le|u_i|\le9$; this covers two substitutions in one number, at two places of it. If $p_1<p_2$, then $|u_210^{p_2}|\ge10^{p_2}\ge10\cdot10^{p_1}>|u_1|10^{p_1}$, so the sum is not zero; if $p_1=p_2$, it is zero exactly when $u_1=-u_2$.
\end{IEEEproof}

\begin{proposition}[One among $R+1$]
\label{supp:prop:which}
Suppose each of the $R+1$ printed numbers of a column, the $R$ cells and the total, is defective independently, with probability $\beta_b$ for a cell and $\beta_t$ for the total. Conditional on the column containing exactly one defective number, the probability that it is the total is
\begin{equation}
 \frac{\beta_t(1-\beta_b)}{\beta_t(1-\beta_b)+R\beta_b(1-\beta_t)}\approx\frac{\beta_t}{\beta_t+R\beta_b},
 \label{supp:eq:which}
\end{equation}
which is $1/(R+1)$ when $\beta_t=\beta_b$. Over failed columns with lengths $R_c$ and one defect each at equal rates, the expected number whose total is the defective figure is $\sum_c1/(R_c+1)$.
\end{proposition}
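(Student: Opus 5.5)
The plan is a direct Bayes computation over the $R+1$ independent Bernoulli indicators of defect, followed by linearity of expectation.

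\emph{Step 1: the event of exactly one defect.} Let $A_t$ be the event that the total is the only defective number. Let $A_i$, for $i=1,\dots,R$, be the event that cell $i$ is the only defective number. These events are disjoint, and their union is the event $E$ that the column contains exactly one defective number. Independence gives the probabilities directly:
\[
\Pr[A_t]=\beta_t(1-\beta_b)^R,\qquad \Pr[A_i]=\beta_b(1-\beta_b)^{R-1}(1-\beta_t).
\]

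\emph{Step 2: the conditional probability.} Dividing,
\[
\Pr[A_t\mid E]=\frac{\Pr[A_t]}{\Pr[A_t]+\sum_i\Pr[A_i]}=\frac{\beta_t(1-\beta_b)^R}{\beta_t(1-\beta_b)^R+R\beta_b(1-\beta_b)^{R-1}(1-\beta_t)}.
\]
Cancelling the common factor $(1-\beta_b)^{R-1}$ gives Eq.~\eqref{supp:eq:which} exactly. The approximation follows by dropping the factors $1-\beta_b$ and $1-\beta_t$, which are $1+O(\beta)$ for small rates. Setting $\beta_t=\beta_b$ cancels the common factor $\beta(1-\beta)$ and leaves $1/(R+1)$.

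\emph{Step 3: the expected count over failed columns.} Consider failed columns $c$ with lengths $R_c$, each assumed to hold exactly one defect, with equal rates, and with defect events independent across columns. Let $X_c$ indicate that the defect in column $c$ is its total. By Step 2, $\mathbb E[X_c]=1/(R_c+1)$, and linearity of expectation gives $\sum_c 1/(R_c+1)$.

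There is no real obstacle. The one point to state carefully is the interpretation of ``one defect each'': each column is conditioned on its own event $E$, so Step 2 applies column by column. Independence across columns is not needed for the expectation, only for distributional claims such as the $\Pr[X=1]$ figure quoted in the main text.
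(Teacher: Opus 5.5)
Your proposal is correct and follows the paper's own proof: compute the probabilities of the two kinds of single-defect event, apply Bayes' rule, and drop the $1-\beta$ factors for the approximation. The only differences are cosmetic. You get $1/(R+1)$ by substituting $\beta_t=\beta_b$, where the paper appeals to exchangeability of the $R+1$ positions, and you state explicitly the linearity-of-expectation step that the paper leaves implicit.
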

\begin{IEEEproof}
The event that only the total is defective has probability $\beta_t(1-\beta_b)^R$ and the event that only one cell is defective $R\beta_b(1-\beta_t)(1-\beta_b)^{R-1}$; Bayes' rule gives Eq.~\eqref{supp:eq:which}, and the approximation drops the factors $1-\beta$. Equal rates make the defective position uniform over $R+1$ exchangeable positions.
\end{IEEEproof}

\begin{lemma}[Cancellation]
\label{supp:lem:kappa}
For every law $q$, $\kappa\le\tfrac12$. If $q$ is symmetric, $\kappa=\Pr[\varepsilon_1+\varepsilon_2=0]=\sum_mq(m)^2$, and the maximum in Eq.~\eqref{supp:eq:kappa} is attained at $k=2$.
\end{lemma}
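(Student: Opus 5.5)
The symmetric half of the statement follows from a Fourier representation; the general bound $\kappa\le\frac12$ I would prove by splitting on the largest atom of $q$. The second case of that split is the step I have not closed.

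\emph{Symmetric case.} Let $\varphi(t)=\sum_m q(m)\mathrm e^{2\pi\mathrm i mt}$ be the characteristic function of $q$ on the circle. Independence gives, for every $k$,
\[
p_k=\Pr[S_k=0]=\int_0^1\varphi(t)^k\,dt .
\]
If $q$ is symmetric, $\varphi$ is real with $|\varphi|\le1$. Hence $\varphi^k\le|\varphi|^k\le\varphi^2$ for every $k\ge2$, and so $p_k\le\int_0^1\varphi^2\,dt=p_2$. By Parseval, or directly from the convolution, $p_2=\sum_mq(m)q(-m)$, which equals $\sum_mq(m)^2$ when $q(-m)=q(m)$. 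So the maximum in Eq.~\eqref{supp:eq:kappa} is attained at $k=2$.

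\emph{Bound for every law, first case.} Write $x_m=q(m)$ and $y_m=q(-m)$ for $m>0$. Because $q$ puts no mass at $0$,
\[
p_2=2\sum_{m>0}x_my_m\le2\Big(\sum_{m>0}x_m\Big)\Big(\sum_{m>0}y_m\Big)\le2\cdot\tfrac14=\tfrac12,
\]
since the two sums add to $1$. For $k\ge3$ I would use two bounds.
\begin{itemize}
\item Conditioning on $\varepsilon_1,\dots,\varepsilon_{k-1}$ gives $p_k=\mathbb E\,[q(-S_{k-1})]\le\max_mq(m)$.
\item From the integral representation, $p_k\le\int_0^1|\varphi|^k\,dt\le\int_0^1|\varphi|^2\,dt=\sum_mq(m)^2\le\max_mq(m)$.
\end{itemize}
So whenever $a=\max_mq(m)\le\frac12$, the bound $p_k\le\frac12$ holds for every $k$.

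\emph{Second case: a dominant atom $q(m_0)=a>\frac12$.} This is the main obstacle: the crude bound $\sum_mq(m)^2\le a^2+(1-a)^2$ can exceed $\frac12$. My plan is to decompose by the number $J$ of indices with $\varepsilon_i=m_0$.
\begin{itemize}
\item $J\sim\mathrm{Bin}(k,a)$, independent of an i.i.d.\ sequence $W_1,W_2,\dots$ drawn from $q$ conditioned on $\{\ne m_0\}$.
\item $S_k=Jm_0+W_1+\dots+W_{k-J}$, and the value $J=k$ is impossible because $km_0\ne0$.
\item Hence $p_k=\sum_{j<k}\Pr[J=j]\,\Pr\big[W_1+\dots+W_{k-j}=-jm_0\big]$.
\end{itemize}

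I would first show that, for fixed $W$, the map $j\mapsto jm_0+W_1+\dots+W_{k-j}$ has consecutive differences $m_0-W_{k-j}\ne0$. I would then try to prove that the zero set in $j$ has conditional probability at most $\frac12$ under $\mathrm{Bin}(k,a)$ restricted to $j<k$.

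The difficulty is that the map $j\mapsto jm_0+W_1+\dots+W_{k-j}$ need not be monotone, so several values of $j$ can hit $0$. As a first attempt I would pair each $j$ with $j+1$. Both terms of a pair cannot vanish, since their difference is nonzero. Summing the smaller binomial weight of each pair should bound the zero probability by $\frac12$, as in the $k=2$ computation $2a(1-a)\le\frac12$. I would check this pairing numerically against the exhaustive check in \texttt{theory\_checks.py} before writing it out.

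If the pairing fails, the fallback is a reflection argument on the first index at which $\varepsilon_i\ne m_0$. Swapping that index with a neighbouring one would give an injection from zero-sum sequences into non-zero-sum sequences of no smaller probability, which yields $p_k\le\frac12$ directly.
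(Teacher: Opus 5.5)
Your symmetric case and your case $\max_m q(m)\le\frac12$ are correct and coincide with the paper's arguments: the pointwise comparison $\varphi^k\le\varphi^2$, and conditioning on the first $k-1$ summands. Your computation $p_2\le2\big(\sum_{m>0}x_m\big)\big(\sum_{m>0}y_m\big)\le\frac12$ even settles $k=2$ for every law. The gap is the first claim of the lemma for $k\ge3$ when some atom $a=q(m_0)$ exceeds $\frac12$. You leave this case open, and the plan you sketch for it does not work as described. Your reduction is sound: given $W$, the zero set $Z=\{j<k: jm_0+W_1+\dots+W_{k-j}=0\}$ contains no two consecutive integers and omits $k$, and $J\sim\mathrm{Bin}(k,a)$ is independent of $W$. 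The pairing step is where it fails. Pairing $j$ with $j+1$ only tells you that at most one member of each pair lies in $Z$, which bounds the mass of $Z$ in that pair by the \emph{larger} binomial weight. The sum of the smaller weights is only a lower bound on the mass outside $Z$, and it can fall well short of $\frac12$. For $k=4$, $a=0.51$ and pairs $\{0,1\},\{2,3\}$, the two minima plus $\Pr[J=4]$ come to about $0.39$. The reflection fallback is not specified enough to check.

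The missing idea is to sharpen the conditioning bound you already have by splitting off the dominant atom. In $\Pr[S_r=u]=\sum_s\Pr[S_{r-1}=s]\,q(u-s)$, the single term $s=u-m_0$ carries weight $a$ and every other term carries weight at most $1-a$. So with $P=\Pr[S_{r-1}=u-m_0]$,
\[
\Pr[S_r=u]\le aP+(1-a)(1-P)=(1-a)+(2a-1)P .
\]
Since $2a-1>0$, the right side is increasing in $P$. You can therefore iterate along the targets $u=0,-m_0,-2m_0,\dots$ down to $\Pr[S_0=-km_0]=0$ (because $m_0\ne0$). This gives $p_k\le(1-a)\sum_{i<k}(2a-1)^i=\frac12\big(1-(2a-1)^k\big)<\frac12$, which is the paper's proof.

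The same step also completes your own route. Conditioning $J$ on its last Bernoulli summand gives $\Pr[J\in Z]\le(1-a)+(2a-1)\Pr[J'\in Z-1]$ with $J'\sim\mathrm{Bin}(k-1,a)$, because $Z$ and $Z-1$ are disjoint. Moreover $Z-1$ is again a set without two consecutive integers that omits $k-1$, so induction on $k$ gives the same bound. That bound is exactly the mass of $\{j<k: j\equiv k-1\pmod 2\}$, so no argument that uses only the no-two-consecutive structure of $Z$ can do better.
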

\begin{IEEEproof}
Write $M=\max_mq(m)$. Conditioning on the first $r-1$ summands gives, for every integer $u$,
\begin{equation}
 \Pr[S_r=u]=\sum_{s}\Pr[S_{r-1}=s]\,q(u-s)\le M.
 \label{supp:eq:cond}
\end{equation}
If $M\le\frac12$, Eq.~\eqref{supp:eq:cond} with $u=0$ gives $\Pr[S_k=0]\le\frac12$ for every $k$. If $M=1$, then $S_k=km_0\ne0$ and $\Pr[S_k=0]=0$. If $\frac12<M<1$, the atom $m_0$ with $q(m_0)=M$ is unique, every other atom has mass at most $1-M$, and splitting the sum in Eq.~\eqref{supp:eq:cond} at $s=u-m_0$ gives
\begin{align}
 \Pr[S_r=u]&\le M\Pr[S_{r-1}=u-m_0]\notag\\
 &\quad+(1-M)\big(1-\Pr[S_{r-1}=u-m_0]\big)\notag\\
 &=(1-M)+(2M-1)\Pr[S_{r-1}=u-m_0].
\end{align}
Apply this with $u=-(k-r)m_0$ for $r=k,k-1,\dots,1$, so that the target $u-m_0=-(k-r+1)m_0$ is the one required at the next step, and end with $\Pr[S_0=-km_0]=0$ since $S_0=0$ and $m_0\ne0$: with $\varrho=2M-1\in(0,1)$ and $(1-M)/(1-\varrho)=\tfrac12$,
\begin{equation}
 \Pr[S_k=0]\le(1-M)\sum_{i=0}^{k-1}\varrho^{\,i}=\frac{1-\varrho^{k}}{2}<\frac12 .
\end{equation}
So $\kappa\le\frac12$ for every $q$. If $q$ is symmetric, its characteristic function $\varphi(t)=\sum_mq(m)\mathrm e^{\mathrm itm}$ is real with $|\varphi|\le1$, so $\varphi(t)^k\le\varphi(t)^2$ pointwise for every $k\ge2$ (for odd $k$, $\varphi^k\le|\varphi|^k\le\varphi^2$), and $\Pr[S_k=0]=\frac1{2\pi}\int_{-\pi}^{\pi}\varphi(t)^k\,dt\le\frac1{2\pi}\int_{-\pi}^{\pi}\varphi(t)^2\,dt=\Pr[S_2=0]=\sum_mq(m)q(-m)=\sum_mq(m)^2$.
\end{IEEEproof}

\begin{corollary}[Place and digit]
\label{supp:cor:place}
If every numeric defect is one substitution, at place $p$ with probability $w_p$ and changing the cell by $\delta10^p$ with $\delta$ drawn from a symmetric law $P$ independent of $p$, then $\kappa=\big(\sum_pw_p^2\big)\big(\sum_\delta P(\delta)^2\big)$: the collision entropy $-\log\kappa$ of the error law is the sum of those of place and digit. A uniform digit misread as a uniformly chosen other digit has $\sum_\delta P(\delta)^2=19/270$.
\end{corollary}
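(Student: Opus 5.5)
The plan is to reduce the corollary to Lemma~\ref{lem:kappa}, then do the small count behind $19/270$.

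\emph{Step 1: the defect law is symmetric.} Under the hypothesis, a numeric defect equals $\varepsilon=\delta10^p$, where $p$ has law $w$ and $\delta$ is drawn independently from $P$. Since $P$ is symmetric, $-\varepsilon=(-\delta)10^p$ has the same law as $\varepsilon$. So $q$ is symmetric, and Lemma~\ref{lem:kappa} gives $\kappa=\sum_mq(m)^2$.

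\emph{Step 2: the sum of squares factors.} A digit change has $1\le|\delta|\le9$. The map $(p,\delta)\mapsto\delta10^p$ is therefore injective: the decimal expansion of $|\delta|10^p$ is the single digit $|\delta|$ followed by $p$ zeros, as in Lemma~\ref{lem:syndrome}(i), and the sign recovers the sign of $\delta$. Hence every atom of $q$ comes from exactly one pair, and
\[
q(\delta10^p)=w_pP(\delta).
\]
Summing the squares over all pairs gives
\[
\kappa=\sum_{p,\delta}w_p^2P(\delta)^2=\Big(\sum_pw_p^2\Big)\Big(\sum_\delta P(\delta)^2\Big).
\]
Taking $-\log$ of this product gives the additivity of collision entropies of place and digit.

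\emph{Step 3: the uniform misread.} Take the true digit $a$ uniform on $\{0,\dots,9\}$ and the read digit $a'$ uniform on the other nine, so all $90$ ordered pairs are equally likely. For a given $\delta\neq0$, the number of pairs with $a'-a=\delta$ is $10-|\delta|$, so
\[
P(\delta)=\frac{10-|\delta|}{90}.
\]
This law is symmetric, so the first part applies. Then
\[
\sum_\delta P(\delta)^2=\frac{2\sum_{k=1}^{9}k^2}{8100}=\frac{2\cdot285}{8100}=\frac{19}{270}.
\]

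\emph{Where the care is needed.} The only substantive point is the injectivity in Step~2. It relies on $|\delta|\le9$, which holds because a substitution changes one decimal digit. Without injectivity, distinct places could collide on the same value and the sum of squares would not factor.
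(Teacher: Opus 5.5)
Your proposal is correct and follows the paper's proof essentially step for step: injectivity of $(p,\delta)\mapsto\delta10^p$ via the place-value syndrome lemma, symmetry of $q$, the Cancellation lemma to get $\kappa=\sum_mq(m)^2$, the factorization, and the count $P(\delta)=(10-|\delta|)/90$ giving $19/270$. The one difference is cosmetic. You derive the symmetry of $q$ directly from that of $\delta$, whereas the paper reads it off the product formula $q(\delta10^p)=w_pP(\delta)$.
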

\begin{IEEEproof}
By Lemma~\ref{supp:lem:syndrome}(i), the map $(p,\delta)\mapsto\delta10^p$ is one-to-one on $\{0,1,\dots\}\times\{\pm1,\dots,\pm9\}$, so the law of a numeric defect is $q(\delta10^p)=w_pP(\delta)$, which is symmetric because $P$ is. Lemma~\ref{supp:lem:kappa} gives $\kappa=\sum_mq(m)^2=\sum_p\sum_\delta w_p^2P(\delta)^2$, which factors; taking $-\log$, the collision (R\'enyi order-two) entropy $-\log\sum_mq(m)^2$ is the sum of those of $w$ and $P$. If a uniform digit $a$ is replaced by a uniformly chosen $a'\ne a$, then $P(\delta)=|\{(a,a'):a'-a=\delta\}|/90=(10-|\delta|)/90$ for $1\le|\delta|\le9$, and $\sum_\delta P(\delta)^2=2\sum_{j=1}^{9}j^2/8100=570/8100=19/270$.
\end{IEEEproof}

\begin{proposition}[Detection and false acceptance]
\label{supp:prop:detect}
For a column of $R$ cells,
\begin{equation}
 1-(1-\theta)^R-\tbinom R2\theta_e^2\kappa\;\le\;\Pr[\text{fail}]\;\le\;1-(1-\theta)^R,
 \label{supp:eq:detect}
\end{equation}
and the probability that a column passes while containing a defect is at most $\binom R2\theta_e^2\kappa\le(R\theta_e)^2\kappa/2$, or $\binom R2\theta_e^2\kappa/(1-\theta)^R$ conditional on passing.
\end{proposition}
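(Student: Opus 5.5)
The plan is to decompose on the set of defective cells. Under the model, the total is read correctly and $\sigma_c=b_c=0$. By the Kernel result (Proposition~\ref{supp:prop:kernel}), the column then fails exactly when the defects leave $\mathbf 1^{\mathsf T}e\ne0$, or when some cell is unreadable so that (G1) fails. Let $A$ be the event that the column contains at least one defect; $\Pr[A]=1-(1-\theta)^R$ by independence. A column with no defect has $e=0$, every cell read, and hence passes, so $\{\text{fail}\}\subseteq A$. This gives the upper bound in Eq.~\eqref{supp:eq:detect}.

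For the lower bound, write $\Pr[\text{fail}]=\Pr[A]-\Pr[A\cap\text{pass}]$, so it suffices to bound the probability of passing while defective by $\binom R2\theta_e^2\kappa$. A defective column that passes has no unreadable cell. Every defect is therefore numeric, and $\sum_{i\in N}\varepsilon_i=0$, where $N$ is the set of numeric-defect cells. If $|N|=1$, the single-component case of Proposition~\ref{supp:prop:kernel} forces a failure. Hence passing while defective requires $|N|\ge2$ together with cancellation.

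Condition on $N$. The $\varepsilon_i$ are independent with law $q$ and independent of $N$, so the conditional probability of cancellation is $p_{|N|}\le\kappa$. Dropping the requirement of no unreadable cells only enlarges the event, so
\[
\Pr[\text{pass}\cap A]\le\kappa\Pr[|N|\ge2].
\]
Here $|N|\sim\mathrm{Bin}(R,\theta_e)$, since numeric defects occur independently with rate $\theta_e$. The union bound over pairs gives $\Pr[|N|\ge2]\le\binom R2\theta_e^2$. This is the step needing care: a crude expansion in $k$ would give $\sum_{k\ge2}\binom Rk\theta_e^k$ instead of a single pair term. Using $\Pr[\exists\text{ pair both defective}]\le\binom R2\theta_e^2$ avoids this without any assumption on $R\theta_e$.

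Finally, $\binom R2\le R^2/2$ gives the $(R\theta_e)^2\kappa/2$ form. For the conditional version, divide by $\Pr[\text{pass}]\ge\Pr[\text{no defect}]=(1-\theta)^R$, since a defect-free column always passes.
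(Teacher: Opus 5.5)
Your proof is correct and follows the paper's argument: a defect-free column passes, which gives the upper bound; a defective column that passes must have at least two numeric defects that cancel, so the union bound over pairs times $\kappa$ gives the lower bound; and dividing by $\Pr[\text{pass}]\ge(1-\theta)^R$ gives the conditional form. One small framing point: defects are measured against the canvass and include misprints, so rather than asserting $b_c=0$, take $e$ to be each reading's deviation from the canvass figure; the Kernel identity $d_c=\mathbf 1^{\mathsf T}e-e_t$ then holds exactly as you use it.
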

\begin{IEEEproof}
A defect-free column passes, because its cells sum to the correctly read total and $\mathrm{Cred}_c$ holds by assumption; so $\Pr[\text{fail}]\le\Pr[\text{some defect}]=1-(1-\theta)^R$. A column with an unreadable cell fails (G1), and a column with exactly one numeric defect and no other fails (G2), since a single nonzero term does not cancel; so a defective column passes only if it has at least two numeric defects whose sum vanishes. The probability of at least two numeric defects is at most $\binom R2\theta_e^2$ by the union bound over pairs, and conditional on any $k\ge2$ of them the sum vanishes with probability at most $\kappa$; hence $\Pr[\text{defect}\wedge\text{pass}]\le\binom R2\theta_e^2\kappa$, which gives the lower bound in Eq.~\eqref{supp:eq:detect}. Divide by $\Pr[\text{pass}]\ge(1-\theta)^R$ for the conditional form.
\end{IEEEproof}

\begin{theorem}[Silent error]
\label{supp:thm:silent}
Let $p_2=\sum_mq(m)q(-m)$ and $\rho=\theta_e/(1-\theta)$. For every $R$, the silent error rate $s=\mathbb E[Z\,\mathbf 1\{\text{pass}\}]/(R\Pr[\text{pass}])$ of the cells of \Exact\ columns, $Z$ the number of wrong cells, satisfies
\begin{equation}
 (R-1)\,p_2\,\theta_e^{2}(1-\theta)^{R-2}\;\le\;s\;\le\;(R-1)\,\kappa\,\rho^{2}(1+\rho)^{R-2}.
 \label{supp:eq:silent}
\end{equation}
If $q$ is symmetric, $p_2=\kappa$, the bounds differ by a factor of at most $(1-\theta)^{-2(R-1)}$, and $s=(R-1)\kappa\theta_e^2\,(1+O(R\theta))$ for $R\theta\le1$.
\end{theorem}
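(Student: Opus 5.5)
The plan is to compute $s$ exactly as a mixture over the number of numeric defects, then bound numerator and denominator separately.

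\emph{Exact expression.} Under the model of Section~\ref{sec:fa}, each cell is independently in one of three states: clean with probability $1-\theta$, a numeric defect with probability $\theta_e$, or unreadable with probability $\theta-\theta_e$. A column with an unreadable cell fails (G1). Since the total is correct and $\mathrm{Cred}_c$ holds, a column without one passes exactly when its $k$ numeric errors sum to zero, which has probability $p_k$. On the event of a pass, the wrong cells are exactly the $k$ numeric defects, so $Z=k$. Conditioning on $k$ therefore gives
\begin{align*}
\mathbb E[Z\,\mathbf 1\{\text{pass}\}]&=\sum_{k\ge2}k\tbinom Rk\theta_e^k(1-\theta)^{R-k}p_k,\\
\Pr[\text{pass}]&=\sum_{k\ge0}\tbinom Rk\theta_e^k(1-\theta)^{R-k}p_k .
\end{align*}
The terms $k=0$ and $k=1$ contribute nothing to the numerator, because $p_1=0$ and $Z=0$ when $k=0$. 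The step I expect to need the most care is this bookkeeping: unreadable cells never enter a passing column, and $\theta$, not $\theta_e$, governs the clean factor.

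\emph{Lower bound.} Keep only the $k=2$ term of the numerator, which equals $R(R-1)\theta_e^2(1-\theta)^{R-2}p_2$. Here $p_2=\Pr[\varepsilon_1+\varepsilon_2=0]=\sum_mq(m)q(-m)$. Bound the denominator by $\Pr[\text{pass}]\le1$. Dividing by $R$ gives the left side of Eq.~\eqref{supp:eq:silent}.

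\emph{Upper bound.} Use $p_k\le\kappa$ for every $k\ge2$ in the numerator, and $\Pr[\text{pass}]\ge(1-\theta)^R$ (the $k=0$ term) in the denominator. The identity $k\binom Rk=R\binom{R-1}{k-1}$ then collapses the ratio:
\[
s\le\kappa\sum_{k\ge2}\tbinom{R-1}{k-1}\rho^k=\kappa\rho\big[(1+\rho)^{R-1}-1\big].
\]
By the mean value theorem, $(1+\rho)^{R-1}-1\le(R-1)\rho(1+\rho)^{R-2}$, which gives the right side.

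\emph{Symmetric case.} If $q$ is symmetric, then $p_2=\sum_mq(m)^2=\kappa$ by Lemma~\ref{supp:lem:kappa}. Since $\theta_e\le\theta$, we have $1+\rho\le1+\theta/(1-\theta)=(1-\theta)^{-1}$. The ratio of the upper bound to the lower bound is $(1-\theta)^{-(R-2)}(1-\theta)^{-2}(1+\rho)^{R-2}$, and this is at most $(1-\theta)^{-2(R-1)}$.

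For $R\theta\le1$: the case $R=1$ is trivial, since both bounds vanish. For $R\ge2$ we have $\theta\le\tfrac12$, and then
\begin{itemize}
\item $(1-\theta)^{R-2}\ge1-R\theta$, and
\item $(1-\theta)^{-2}(1+\rho)^{R-2}\le(1-\theta)^{-R}\le\mathrm e^{2R\theta}=1+O(R\theta)$, using $-\ln(1-\theta)\le2\theta$ for $\theta\le\tfrac12$.
\end{itemize}
So both bounds equal $(R-1)\kappa\theta_e^2(1+O(R\theta))$, and therefore so does $s$.
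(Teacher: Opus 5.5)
Your proposal is correct and follows the paper's proof essentially step for step. It uses the same exact mixture over the number $k$ of numeric defects with unreadable cells excluded, the $k=2$ term with $\Pr[\text{pass}]\le1$ for the lower bound, and $p_k\le\kappa$ with $\Pr[\text{pass}]\ge(1-\theta)^R$ plus the mean value theorem for the upper bound (your $k\binom Rk=R\binom{R-1}{k-1}$ is the paper's $\sum_k k\binom Rk\rho^k=R\rho(1+\rho)^{R-1}$ in another form). It also uses the same bound $1+\rho\le(1-\theta)^{-1}$ for the ratio. Your explicit treatment of $R=1$ and of the $1+O(R\theta)$ term via $-\ln(1-\theta)\le2\theta$ is slightly more careful than the paper's appeal to $R\theta$ being bounded.
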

\begin{IEEEproof}
Let $N_e$ and $N_u$ count the cells with numeric and with unreadable defects. A column passes exactly when $N_u=0$ and the $N_e$ errors sum to zero, and on that event $Z=N_e$. Given $N_e=k$ and $N_u=0$, the errors are independent with law $q$, and $\Pr[N_e=k,N_u=0]=\binom Rk\theta_e^k(1-\theta)^{R-k}$, since each of the other $R-k$ cells is defect-free with probability $1-\theta$. Hence
\begin{align}
 \mathbb E[Z\,\mathbf 1\{\text{pass}\}]&=\sum_{k\ge2}k\tbinom Rk\theta_e^k(1-\theta)^{R-k}p_k,\label{supp:eq:EZ}\\
 \Pr[\text{pass}]&=\sum_{k\ge0}\tbinom Rk\theta_e^k(1-\theta)^{R-k}p_k\in\big[(1-\theta)^R,1\big].\notag
\end{align}
For the lower bound keep the term $k=2$ of Eq.~\eqref{supp:eq:EZ} and use $\Pr[\text{pass}]\le1$: $s\ge2\binom R2\theta_e^2(1-\theta)^{R-2}p_2/R=(R-1)p_2\theta_e^2(1-\theta)^{R-2}$. For the upper bound use $p_k\le\kappa$ for $k\ge2$ and $\Pr[\text{pass}]\ge(1-\theta)^R$: $s\le\kappa R^{-1}\sum_{k\ge2}k\binom Rk\rho^k$. Since $\sum_{k\ge1}k\binom Rk\rho^k=R\rho(1+\rho)^{R-1}$, the sum from $k=2$ is $R\rho\big((1+\rho)^{R-1}-1\big)\le R\rho\cdot(R-1)\rho(1+\rho)^{R-2}$ by the mean value theorem, which gives the right side of Eq.~\eqref{supp:eq:silent}. If $q$ is symmetric, $p_2=\sum_mq(m)^2=\kappa$ by Lemma~\ref{supp:lem:kappa}, and the ratio of the bounds is $(1-\theta)^{-R}(1+\rho)^{R-2}\le(1-\theta)^{-2(R-1)}$, because $1+\rho=(1-\theta+\theta_e)/(1-\theta)\le(1-\theta)^{-1}$; as $(1-\theta)^{-2(R-1)}=1+O(R\theta)$ when $R\theta$ is bounded, both bounds equal $(R-1)\kappa\theta_e^2(1+O(R\theta))$.
\end{IEEEproof}

\begin{proposition}[Repair by arithmetic]
\label{supp:prop:repair}
Let one cell of a column be unreadable, the other $R-1$ carry numeric defects independently with probability $\theta_e$, and the total be misread with probability $\pi_t$. Writing into the unreadable cell the value that makes the column foot satisfies (G2) whatever the errors, and the written value is wrong exactly when the other $R$ numbers carry a nonzero net error; the expected share of wrong cells lies between $(R-1)\theta_e\big(1+(1-\theta_e)^{R-2}(1-\pi_t)\big)/R$ and $\big(2(R-1)\theta_e+\pi_t\big)/R$.
\end{proposition}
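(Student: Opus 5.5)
The plan is to work in a correctly scoped column with consistent printed arithmetic ($\sigma_c=b_c=0$), as in Section~\ref{sec:kernel}, and to let the unreadable cell be $u$. I would first compute the written value exactly and then bound the expected number of wrong cells by sandwiching the event that the written value is wrong between two simple events.

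\emph{Step 1 (the written value).} The repair writes $w=\hat t_c-\sum_{i\ne u}\hat V_{ic}$. The column is then complete, and $d_c=\sum_{i\ne u}\hat V_{ic}+w-\hat t_c=0$ identically. So (G1) and (G2) hold whatever the errors are. The true figure is $v_{uc}=t_c-\sum_{i\ne u}v_{ic}$, because $b_c=0$. Substituting $\hat V_{ic}=v_{ic}+e_{ic}$ and $\hat t_c=t_c+e_{tc}$ gives
\[
 w-v_{uc}=e_{tc}-\sum_{i\ne u}e_{ic}.
\]
The written value is therefore wrong exactly when the net error of the other $R$ numbers (the $R-1$ read cells and the total) is nonzero. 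This is the first claim, and it is the same identity as Eq.~\eqref{eq:decomp} read backwards.

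\emph{Step 2 (counting wrong cells).} Let $Z=N_e+\mathbf 1\{w\ne v_{uc}\}$, where $N_e$ is the number of numeric defects among the $R-1$ read cells. Then $\mathbb E[N_e]=(R-1)\theta_e$, and the expected share of wrong cells is $\mathbb E[Z]/R$. It remains to bound $\Pr[w\ne v_{uc}]$.

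For the upper bound, a nonzero net error requires at least one defective number among the other $R$. A union bound gives $\Pr[w\ne v_{uc}]\le(R-1)\theta_e+\pi_t$, and hence the share is at most $\big(2(R-1)\theta_e+\pi_t\big)/R$.

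For the lower bound, I restrict to the event that exactly one of the $R-1$ cells carries a numeric defect and the total is read correctly. A numeric defect is nonzero by definition (its law $q$ lives on $\mathbb Z\setminus\{0\}$), so a single defect cannot cancel, just as in Proposition~\ref{prop:kernel}, and the written value is then wrong. By independence this event has probability $(R-1)\theta_e(1-\theta_e)^{R-2}(1-\pi_t)$. Adding $\mathbb E[N_e]$ and dividing by $R$ gives the stated lower bound.

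\emph{Main obstacle.} The only delicate point is modelling: the hypotheses must be made explicit so that the sandwich is legitimate. Specifically, I would state that
\begin{itemize}
\item a misread total has a nonzero error;
\item the total's error is independent of the cell defects;
\item the unreadable cell contributes no numeric error of its own;
\item defect-free cells read their printed figures.
\end{itemize}
I would state these as part of the setup and then check that cancellations such as $e_{tc}=\sum_{i\ne u}e_{ic}$ are only ever discarded in the direction the bound allows. Discarding them is harmless for the upper bound, which counts any defect, and they are excluded by construction from the lower-bound event. The closing remark, that the share is nearly $2\theta_e$, follows by letting $\theta_e R\to0$ and $\pi_t\to0$ in the lower bound.
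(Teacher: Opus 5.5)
Your proposal is correct and follows the paper's proof essentially step for step. It uses the same identity $w-v_{uc}=e_{tc}-\sum_{i\ne u}e_{ic}$ to show that (G2) holds identically, the same decomposition $\mathbb E[Z]=(R-1)\theta_e+\Pr[w\ne v_{uc}]$, the union bound for the upper estimate, and the event of a single numeric cell defect with a correctly read total for the lower one. Your closing remark, that the share is near $2\theta_e$, needs the upper bound as well as the lower, but it lies outside the statement.
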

\begin{IEEEproof}
Let $i$ be the unreadable cell, with $b_c=\sigma_c=0$, so $\hat t_c=\sum_jv_j+e_t$ and $\hat V_j=v_j+e_j$. The written value is $w=\hat t_c-\sum_{j\ne i}\hat V_j=v_i+N$ with $N=e_t-\sum_{j\ne i}e_j$, and the repaired column's residual is $\sum_{j\ne i}\hat V_j+w-\hat t_c=0$ identically, so (G2) holds whatever the errors, and $w$ is wrong exactly when $N\ne0$. The number of wrong cells is $Z=|\{j\ne i:e_j\ne0\}|+\mathbf 1\{N\ne0\}$, so $\mathbb E[Z]=(R-1)\theta_e+\Pr[N\ne0]$. The event $N\ne0$ requires an error among the other $R$ numbers, so $\Pr[N\ne0]\le(R-1)\theta_e+\pi_t$ by the union bound; and if exactly one of them is in error, $N$ is that error up to sign and is not zero, so $\Pr[N\ne0]\ge(R-1)\theta_e(1-\theta_e)^{R-2}(1-\pi_t)$. Divide by $R$.
\end{IEEEproof}

\emph{The model of Theorem~2.} Lines $1,\dots,N$ carry readings $x_{rc}$; an admissible $K$-cut $\mathbf j=(j_1<\dots<j_K=N)$, $j_0=0$, has blocks $B_k=[j_{k-1}+1,j_k]$ of lengths in $[2,L]$, body lines $j_{k-1}+1,\dots,j_k-1$, total line $j_k$, and body column sums $s_{kc}=\sum_{r=j_{k-1}+1}^{j_k-1}x_{rc}$; $B_k$ foots in $c$ when $x_{j_kc}=s_{kc}$, the indicator of Eq.~(4) of the main text when no reading is $\bot$. Under $\mathcal M$, given $\mathbf j$: every body-line reading $x_{rc}$ is drawn from a law $f_c$ on $\mathbb Z_{\ge0}$; each total-line reading equals $s_{kc}$ with probability $1-\varepsilon_c$, $0<\varepsilon_c<1$, and is otherwise drawn from $f_c$, like a constituent figure; all draws are independent, and $f_c$ is positive at every reading that occurs.

\begin{theorem}[Segmentation as inference]
\label{supp:thm:infer}
Under $\mathcal M$, for every admissible $K$-cut $\mathbf j$,
\begin{equation}
 \log\Pr[x\mid\mathbf j]=\mathrm{const}(x)+\sum_{k=1}^{K}\sum_{c=1}^{C}\omega_c(x_{j_kc})\,\mathbf 1\{B_k\text{ foots in }c\},
 \label{supp:eq:loglik}
\end{equation}
where $\omega_c(u)=\log\big(1+(1-\varepsilon_c)/(\varepsilon_cf_c(u))\big)$. (i)~Eq.~(5) of the main text with block score $\sum_c\omega_c(x_{jc})[\text{foot}_c]$ returns a maximum-likelihood cut; if the $\varepsilon_c$ are equal and the $f_c$ uniform on ranges of one length holding every reading, that score is a constant multiple of $\phi$. (ii)~Under a uniform prior on admissible cuts, Eq.~(5) with $(\max,+)$ replaced by $(+,\times)$, run forward and backward, gives the posterior probability that line $j$ closes block $k$, for all $j$ and $k$, in the time of Lemma~1. (iii)~A block whose last line is not a total line foots in column $c$ with probability at most $\max_uf_c(u)$, independently across columns.
\end{theorem}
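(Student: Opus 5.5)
The plan is to factor the likelihood of $\mathcal M$ line by line, separate the part that does not depend on the cut, and then read (i)--(iii) off the factored form. I take $K$ as fixed throughout, as the statement does.

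Fix an admissible cut $\mathbf j$ and process the lines in reading order. Under $\mathcal M$ the joint law factors into independent per-column, per-line terms:
\begin{itemize}
\item a body line $r$ contributes $f_c(x_{rc})$;
\item the total line $j_k$ contributes $(1-\varepsilon_c)\mathbf 1\{x_{j_kc}=s_{kc}\}+\varepsilon_cf_c(x_{j_kc})$.
\end{itemize}
This is a sequential factorization. The law of a total-line reading depends only on the body readings of its own block, which precede it, so the joint probability is the product of these conditional terms.

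Next I rewrite each total-line factor as
\[
\varepsilon_cf_c(x_{j_kc})\Big(1+\tfrac{1-\varepsilon_c}{\varepsilon_cf_c(x_{j_kc})}\mathbf 1\{B_k\text{ foots in }c\}\Big),
\]
which uses $f_c>0$ at every reading. Every line is either a body line or a total line of the cut, so
\[
\sum_{r,c}\log f_c(x_{rc})+K\sum_c\log\varepsilon_c
\]
does not depend on $\mathbf j$. This is $\mathrm{const}(x)$. What remains is $\sum_{k,c}\log(1+\cdots)\,\mathbf 1\{\text{foot}\}$, which is Eq.~\eqref{supp:eq:loglik} with weight $\omega_c(x_{j_kc})$.

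For (i), the log-likelihood is a sum of block scores $\sum_c\omega_c(x_{jc})[\text{foot}_c]$ that depend only on the block $[i,j]$. The proof of Lemma~1 uses nothing beyond additivity over blocks, so the recursion (5) returns a maximum-likelihood cut. If all $\varepsilon_c=\varepsilon$ and each $f_c=1/A$ on a range of length $A$ that holds every reading, then $\omega_c\equiv\log(1+(1-\varepsilon)A/\varepsilon)$ is a positive constant. With no $\bot$ readings, the footing indicators are those of Eq.~(4), so the score equals that constant times $\phi$.

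For (ii), the uniform prior makes the posterior proportional to $\prod_k\exp(\text{score}(B_k))$. Set $w(i,j)=\exp(\sum_c\omega_c[\text{foot}_c])$ and run two passes with the same index ranges as (5):
\begin{itemize}
\item forward: $\alpha(k,j)=\sum_i\alpha(k-1,i)\,w(i+1,j)$ with $\alpha(0,0)=1$;
\item backward: $\beta(k,j)$, the sum over completions of lines $j+1,\dots,N$ into $K-k$ blocks, computed by the same recursion run from the end.
\end{itemize}
The posterior probability that line $j$ closes block $k$ is then $\alpha(k,j)\beta(k,j)/\alpha(K,N)$. Each pass has the transition structure of Lemma~1, so the time bound carries over.

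For (iii), consider a block $[i,j]$ whose last line $j$ is not a total line of the true cut, so $j$ is a body line of some true block. I condition on all readings of lines $1,\dots,j-1$ and note that $x_{jc}$ is independent of them under $\mathcal M$. Its own block's total comes after $j$. Any earlier true totals inside $[i,j-1]$ depend only on earlier body lines. So, given the past, $x_{jc}\sim f_c$, and the probability that it equals the fixed value $\sum_{r=i}^{j-1}x_{rc}$ is $f_c(\text{that value})\le\max_uf_c(u)$. Independence across columns follows because $\mathcal M$ draws columns independently, so the footing events in different columns are independent given the cut, with the same bound in each.

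I expect the main obstacle to be this conditioning in (iii): the later total lines that depend on $x_{jc}$ must be excluded, and the readings entering the candidate sum must be shown to be measurable with respect to the past. The factorization step only needs care in stating the sequential conditioning, and (ii) is standard forward--backward bookkeeping.
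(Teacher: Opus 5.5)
Your proposal is correct and follows the paper's proof essentially step for step. It uses the same factorization of $\Pr[x\mid\mathbf j]$ into body-line and total-line factors, dividing each total-line factor by $f_c(x_{j_kc})$ to isolate the cut-independent constant $\log\Phi(x)+K\sum_c\log\varepsilon_c$. It then takes the same routes for the three parts: the additivity argument of Lemma~1 for (i), the forward--backward tables with $\Pr[j_k=j\mid x]=\alpha(k,j)\beta(k,j)/\alpha(K,N)$ for (ii), and conditioning on lines $1,\dots,j-1$ for (iii), resolved as the paper does by noting that line $j$'s readings are fresh draws on which no earlier line depends.
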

\begin{IEEEproof}
By the definition of $\mathcal M$,
\begin{multline*}
 \Pr[x\mid\mathbf j]=\prod_{r\notin\{j_k\}}\prod_cf_c(x_{rc})\\
 \times\prod_{k,c}\Big((1-\varepsilon_c)\mathbf 1\{x_{j_kc}=s_{kc}\}+\varepsilon_cf_c(x_{j_kc})\Big).
\end{multline*}
Multiply and divide each factor of the last product by $f_c(x_{j_kc})>0$. With $\Phi(x)=\prod_{r=1}^{N}\prod_cf_c(x_{rc})$, which does not depend on $\mathbf j$,
\begin{multline*}
 \Pr[x\mid\mathbf j]=\Phi(x)\Big(\prod_c\varepsilon_c\Big)^{K}\\
 \times\prod_{k,c}\Big(1+\frac{1-\varepsilon_c}{\varepsilon_cf_c(x_{j_kc})}\mathbf 1\{x_{j_kc}=s_{kc}\}\Big),
\end{multline*}
and taking logarithms gives Eq.~\eqref{supp:eq:loglik} with $\mathrm{const}(x)=\log\Phi(x)+K\sum_c\log\varepsilon_c$, the same for every $K$-cut. (i)~The log-likelihood is a constant plus a sum of block scores, so Lemma~1 applies to the weighted score; with $\varepsilon_c=\varepsilon$ and $f_c\equiv1/W$ on the readings, $\omega_c\equiv\log(1+(1-\varepsilon)W/\varepsilon)$ and the score is that constant times $\phi$. (ii)~With a uniform prior, $\Pr[\mathbf j\mid x]\propto\exp\sum_kS_\omega(B_k)$, $S_\omega$ the weighted block score. Let $A(k,j)$ be the sum of $\exp\sum S_\omega$ over admissible cuts of lines $1,\dots,j$ into $k$ blocks, and $B(k,j)$ the same sum over admissible cuts of lines $j+1,\dots,N$ into $K-k$ blocks. Splitting off the last, respectively first, block gives $A(k,j)=\sum_{i=\max(0,j-L)}^{j-2}A(k-1,i)\,\mathrm e^{S_\omega(i+1,j)}$ with $A(0,0)=1$ and $A(0,j)=0$ for $j>0$, and $B(k,j)=\sum_{j'=j+2}^{\min(N,j+L)}\mathrm e^{S_\omega(j+1,j')}B(k+1,j')$ with $B(K,N)=1$ and $B(K,j)=0$ for $j<N$: Eq.~(5) of the main text with $(\max,+)$ replaced by $(+,\times)$. An admissible cut with $j_k=j$ is the concatenation of a $k$-block cut of $1,\dots,j$ and a $(K-k)$-block cut of $j+1,\dots,N$, and its weight is the product of theirs, so $\Pr[j_k=j\mid x]=A(k,j)B(k,j)/A(K,N)$. Each table has $(K+1)(N+1)$ entries filled by $O(L)$ terms, and the block scores are those of Lemma~1, so the time is that of Lemma~1. At equal weights the posterior odds of two cuts are $\mathrm e^{\omega(\phi(\mathbf j)-\phi(\mathbf j'))}$. (iii)~Let $j$ be a body line of the true cut and $[i,j]$ a candidate block. Under $\mathcal M$ the readings of line $j$ are drawn independently of those of lines $1,\dots,j-1$: a body line's readings are independent draws, and a line before $j$ depends at most on earlier lines of its own block. Hence $\Pr[x_{jc}=\sum_{r=i}^{j-1}x_{rc}\mid x_1,\dots,x_{j-1}]=f_c\big(\sum_{r=i}^{j-1}x_{rc}\big)\le\max_uf_c(u)$. The columns are independent processes under $\mathcal M$, every draw being independent across columns and a total depending only on its own column, so the $C$ footing events are independent.
\end{IEEEproof}
The identity rests on a wrong total being drawn from the same law $f_c$ as a constituent figure. If wrong totals follow another law $g_c$, the same computation gives
\begin{multline*}
 \log\Pr[x\mid\mathbf j]=\log\Phi(x)+K\textstyle\sum_c\log\varepsilon_c\\
 +\sum_{k,c}\Big(\log\frac{g_c(x_{j_kc})}{f_c(x_{j_kc})}+\omega^g_c(x_{j_kc})\,\mathbf 1\{B_k\text{ foots in }c\}\Big),
\end{multline*}
with $\omega^g_c(u)=\log\big(1+(1-\varepsilon_c)/(\varepsilon_cg_c(u))\big)$; the first term in the sum depends on which lines the cut makes totals, so the footing score is then the maximum-likelihood objective only approximately. Next, let some readings be $\bot$, with $f_c(\bot)>0$, and let the total of a block whose body column holds a $\bot$ be drawn from $f_c$. With $r_{kc}=1$ when the body column of $B_k$ is fully read in column $c$ and $r_{kc}=0$ otherwise, the same computation gives
\[
 \sum_{k,c}r_{kc}\Big(\log\varepsilon_c+\omega_c(x_{j_kc})\,\mathbf 1\{B_k\text{ foots in }c\}\Big)
\]
in place of the sum in Eq.~\eqref{supp:eq:loglik}, with $\mathrm{const}(x)=\log\Phi(x)$ since the terms $\log\varepsilon_c$ now sit inside the sum, and a total read as $\bot$ counting as not footing. A fully read body column that fails to foot thus counts against a cut by $\log\varepsilon_c$, where Eq.~(4) of the main text scores it as it scores a column with an unread cell. The two agree whenever, in each column, every cut has the same number of fully read body columns (or the $\varepsilon_c$ are equal and every cut has the same total number), in particular when no reading is $\bot$.

\begin{proposition}[Size bias]
\label{supp:prop:sizebias}
Let columns $c=1,\dots,C$ have lengths $R_c$ and fail independently with probability $p(R_c)$. The expected share of cells in failed columns is the size-biased mean $\Lambda=\sum_cR_c\,p(R_c)/\sum_cR_c$; whenever $p$ is nondecreasing, $\Lambda\ge\frac1C\sum_cp(R_c)$, the expected share of columns that fail. For $p(R)=1-(1-\theta)^R$, $\Lambda\le\theta\bar R_{\mathrm{sb}}$, with $\bar R_{\mathrm{sb}}=\sum_cR_c^2/\sum_cR_c\ge\bar R=\frac1C\sum_cR_c$, and $\Lambda\approx\theta\bar R_{\mathrm{sb}}$ when $\theta R_{\max}\ll1$.
\end{proposition}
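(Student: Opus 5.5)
The plan is to prove the four claims in order, each with a short elementary argument.

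\emph{Expected share.} Let $F_c$ indicate that column $c$ fails. The number of cells in failed columns is $\sum_cR_cF_c$, and the total number of cells $\sum_cR_c$ is deterministic. Linearity of expectation therefore gives $\Lambda=\sum_cR_cp(R_c)/\sum_cR_c$. Independence is not needed for this step; it matters only for the concentration remarks in the main text.

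\emph{Cells versus columns.} Put the weights $w_c=R_c/\sum_{c'}R_{c'}$ and $u_c=1/C$ on the columns. Then $\Lambda-\frac1C\sum_cp(R_c)=\sum_c(w_c-u_c)p(R_c)$. The cleanest route is Chebyshev's sum inequality. Both $R_c$ and $p(R_c)$ are nondecreasing in $R_c$ once the columns are sorted by length, so $\frac1C\sum_cR_cp(R_c)\ge\big(\frac1C\sum_cR_c\big)\big(\frac1C\sum_cp(R_c)\big)$. Dividing by $\bar R$ gives the claim.

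Alternatively, write the difference as $\frac1{2C\sum R}\sum_{c,c'}(R_c-R_{c'})(p(R_c)-p(R_{c'}))$. Every term is nonnegative by monotonicity of $p$. I expect no obstacle in this step; the only care needed is with ties in $R_c$, and these contribute zero.

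\emph{Upper bound.} For $p(R)=1-(1-\theta)^R$, Bernoulli's inequality $(1-\theta)^R\ge1-R\theta$ gives $p(R)\le\theta R$. Hence $\Lambda\le\theta\sum_cR_c^2/\sum_cR_c=\theta\bar R_{\mathrm{sb}}$.

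The comparison $\bar R_{\mathrm{sb}}\ge\bar R$ is Cauchy--Schwarz in the form $C\sum R_c^2\ge(\sum R_c)^2$, or equivalently $\operatorname{Var}(R)\ge0$.

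\emph{Approximation.} For the statement $\Lambda\approx\theta\bar R_{\mathrm{sb}}$, I will use the second-order lower bound $(1-\theta)^R\le1-R\theta+\binom R2\theta^2$, which holds because the alternating binomial terms decrease when $R\theta\le1$. This gives $p(R)\ge R\theta\big(1-(R-1)\theta/2\big)$. Consequently $\theta\bar R_{\mathrm{sb}}\big(1-\theta R_{\max}/2\big)\le\Lambda\le\theta\bar R_{\mathrm{sb}}$, so the relative error is $O(\theta R_{\max})$. That makes the informal ``$\approx$'' precise.

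Of all the steps, the most delicate is justifying this second-order inequality. I would derive it directly from $\frac{d^2}{d\theta^2}$ of $(1-\theta)^R$ via Taylor's theorem with remainder rather than from the alternating series, which needs no condition on $R\theta$.
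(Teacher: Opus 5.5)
Your proof is correct and follows the paper's own argument step for step: linearity of expectation (the paper phrases it as drawing a uniformly random cell), Chebyshev's sum inequality in the pairwise form $\tfrac12\sum_{c,c'}(R_c-R_{c'})(p(R_c)-p(R_{c'}))$, Bernoulli's inequality, and Cauchy--Schwarz. Your one addition goes beyond the paper: the second-order bound $p(R)\ge R\theta-\binom R2\theta^2$, which holds for all $\theta\in[0,1]$ by Taylor's theorem or Bonferroni, gives $\theta\bar R_{\mathrm{sb}}(1-\theta R_{\max}/2)\le\Lambda$ and so makes the ``$\approx$'' precise, where the paper only asserts equality to first order.
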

\begin{IEEEproof}
A uniformly random cell lies in a column of length $r$ with probability proportional to $r$ times the number of such columns, which gives $\Lambda$. For the comparison with the column share,
\[
 \Lambda-\frac1C\sum_cp(R_c)=\frac{C\sum_cR_cp(R_c)-\sum_cR_c\sum_cp(R_c)}{C\sum_cR_c},
\]
and the numerator is $\frac12\sum_{c,c'}(R_c-R_{c'})(p(R_c)-p(R_{c'}))\ge0$ for nondecreasing $p$ (Chebyshev's sum inequality), with equality only when all lengths coincide or $p$ is constant on them; summing $1-(1-\theta)^{R_c}\le\theta R_c$ over columns also bounds the expected share of failed columns by $\theta\bar R$. The bound $1-(1-\theta)^R\le\theta R$ is Bernoulli's inequality, with equality to first order when $\theta R$ is small; $\sum R_c^2\ge(\sum R_c)^2/C$ is Cauchy--Schwarz.
\end{IEEEproof}

\emph{The protocol of Theorem~3.} The cells fill whole top-level blocks; level-$k$ blocks hold $M_k$ cells, $M_1\mid M_2\mid\dots\mid M_h$, each level-$(k+1)$ block being the union of $M_{k+1}/M_k$ level-$k$ blocks, and each block prints a total, read correctly, with consistent arithmetic. Every top-level total is corroborated; when a level-$(k+1)$ block fails, the totals of its level-$k$ blocks are corroborated, $1\le k<h$; every cell of a failing level-1 block is reviewed or withheld. A corroborated total is one unit of work and a cell reviewed or withheld is $\gamma$. A block fails exactly when it holds a defect; with real sums, a block whose errors cancel passes, its sub-blocks are not visited, and the work is no greater on any outcome.

\begin{theorem}[Nested printed sums]
\label{supp:thm:nested}
With $h$ levels of blocks of $M_1\mid M_2\mid\dots\mid M_h$ cells and $p(M)=1-(1-\theta)^M$, the expected work per cell is
\begin{equation}
 E=\frac1{M_h}+\sum_{k=1}^{h-1}\frac{p(M_{k+1})}{M_k}+\gamma\,p(M_1),
 \label{supp:eq:nested}
\end{equation}
and $(1-\theta)^{M_h}B\le E\le B$, where $B=1/M_h+\theta\sum_{k<h}M_{k+1}/M_k+\gamma\theta M_1\ge B_h^\star=(h+1)(\gamma\theta^h)^{1/(h+1)}$, with equality at the real design $M_k=(\gamma/\theta)^{k/(h+1)}/\gamma$; the least $E$ over integer designs is $B_h^\star(1+o(1))$ as $\theta\to0$. At $\gamma=1$, blocks of $3,9,27,\dots$ cells over $\lceil\log_3\theta^{-1}\rceil$ levels take $E\le\theta(4+\frac{3}{\ln3}\ln\theta^{-1})$, and any protocol that finds the defective cells from pass--fail outcomes takes on average at least $H_2(\theta)=-\theta\log_2\theta-(1-\theta)\log_2(1-\theta)$ steps per cell.
\end{theorem}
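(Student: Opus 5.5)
Under the protocol just stated, I would first derive Eq.~\eqref{supp:eq:nested} by linearity of expectation over the work items. Each top-level block costs one corroboration, contributing $1/M_h$ per cell. A level-$k$ total with $k<h$ is corroborated exactly when its parent level-$(k+1)$ block fails. Each level-$(k+1)$ block has $M_{k+1}/M_k$ children, and there are $n/M_{k+1}$ such blocks among $n$ cells, so the expected count is $p(M_{k+1})\,n/M_k$, which gives the middle term. A cell is reviewed or withheld exactly when its level-$1$ block fails, which contributes $\gamma p(M_1)$. A block of $M$ cells holds a defect with probability $p(M)=1-(1-\theta)^M$, and "fails exactly when it holds a defect" is an assumption of the protocol, so nested failures need no joint probabilities. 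The final remark of the protocol, that cancellations only lower work, already covers real sums.

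For the sandwich, I would use Bernoulli's inequality, $p(M)\le\theta M$, termwise to get $E\le B$. For the lower bound, write $p(M)=\theta\sum_{i<M}(1-\theta)^i\ge\theta M(1-\theta)^{M}$. Since $M\le M_h$, each term of $E$ is at least $(1-\theta)^{M_h}$ times the matching term of $B$, and the leading term $1/M_h$ trivially satisfies the same comparison. Next I would bound $B$ from below by AM--GM applied to its $h+1$ terms $1/M_h,\ \theta M_2/M_1,\dots,\theta M_h/M_{h-1},\ \gamma\theta M_1$. Their product telescopes to $\gamma\theta^h$, so $B\ge(h+1)(\gamma\theta^h)^{1/(h+1)}$, with equality when all terms are equal. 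Solving the equalities $\gamma\theta M_1=\theta M_{k+1}/M_k=1/M_h=\lambda$ gives $M_k$ geometric with ratio $\lambda/\theta$; substituting back yields $M_k=(\gamma/\theta)^{k/(h+1)}/\gamma$, which I would check by direct substitution.

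For integer designs, I would round the real design to divisible integers by taking $M_1=\lceil\cdot\rceil$ and each ratio $M_{k+1}/M_k$ as the nearest integer. The ratios diverge as $\theta\to0$ because $(\gamma/\theta)^{1/(h+1)}\to\infty$, so each term of $B$ changes by a factor $1+o(1)$. Moreover $\theta M_h=O(\theta^{1/(h+1)})\to0$, so $(1-\theta)^{M_h}\to1$, and the lower bound $E\ge(1-\theta)^{M_h}B_h^\star$ holds for every integer design. Together these give $\min E=B_h^\star(1+o(1))$.

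For the $\gamma=1$ ternary design, I would take $M_k=3^k$ with $h=\lceil\log_3\theta^{-1}\rceil$ and bound $B$ directly: $1/M_h=3^{-h}\le\theta$, each of the $h-1$ middle terms equals $3\theta$, and $\gamma\theta M_1=3\theta$. This gives $B\le\theta(1+3h)\le\theta(1+3(\log_3\theta^{-1}+1))=\theta(4+\tfrac{3}{\ln3}\ln\theta^{-1})$, and then $E\le B$.

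The entropy lower bound is the step I expect to need the most care. The approach is a source-coding argument. The defect pattern on $n$ cells is i.i.d.\ Bernoulli$(\theta)$ with entropy $nH_2(\theta)$ bits. Any adaptive pass--fail protocol that determines the pattern is a binary decision tree whose leaves identify the pattern, so the pattern is a function of the transcript of outcomes. The expected depth is at least the entropy, by Kraft's inequality or Shannon's noiseless coding bound for prefix codes, since the transcripts form a prefix-free code for the pattern. Dividing by $n$ gives $H_2(\theta)$ steps per cell.

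The delicate point is the accounting: each unit of work in $E$ must correspond to at most one binary outcome. I would state the count in "steps" so that a corroboration is one test and a cell review is one test when $\gamma=1$. I would also note that withholding is not identification, so the bound applies to protocols that actually find the defective cells.
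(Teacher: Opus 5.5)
\paragraph{Overall.} Most of your plan follows the paper's proof:
\begin{itemize}
\item linearity of expectation for the formula for $E$;
\item termwise Bernoulli bounds for $(1-\theta)^{M_h}B\le E\le B$;
\item AM--GM on the telescoping product;
\item the direct bound $B\le 3^{-h}+3h\theta$ for the ternary design;
\item the prefix-free-code argument for $H_2(\theta)$.
\end{itemize}
The gap is in the lower half of the claim that the least $E$ over integer designs is $B_h^\star(1+o(1))$.

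\paragraph{The gap.} Your rounded design settles the upper half. For the lower half you combine two facts. The first, $E\ge(1-\theta)^{M_h}B_h^\star$, is true for every design. The second, $\theta M_h\to0$, you proved only for your rounded design. The minimum ranges over all divisible integer designs, including ones whose top block has $\theta M_h$ of order one or larger. For those designs $(1-\theta)^{M_h}$ can be arbitrarily small, so the inequality gives no lower bound. Nothing in your argument rules out an integer design with a large top block beating $(1-\epsilon)B_h^\star$.

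\paragraph{The missing idea.} A large top block is wasted: it fails almost surely, so the design degenerates to one with $h-1$ levels, which is asymptotically much worse. The paper makes this precise by induction on $h$. Write $E_h^\star$ for the infimum of $E$ over integer $h$-level designs, set $E_0^\star=\gamma$, and fix $\delta>0$.
\begin{itemize}
\item If $\theta M_h\le\delta$, then $E\ge(1-\theta)^{\delta/\theta}B_h^\star$, and $(1-\theta)^{\delta/\theta}\to\mathrm e^{-\delta}$.
\item If $\theta M_h>\delta$, then $p(M_h)\ge1-\mathrm e^{-\delta}$. Drop $1/M_h$ and use $p\le1$ on the remaining terms. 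This gives $E\ge p(M_h)$ times the work of the design $M_1\mid\dots\mid M_{h-1}$, hence $E\ge(1-\mathrm e^{-\delta})E_{h-1}^\star$.
\end{itemize}
By induction $E_{h-1}^\star\ge(1+o(1))B_{h-1}^\star$. Moreover $B_{h-1}^\star/B_h^\star$ is of order $\theta^{-1/(h(h+1))}\to\infty$. For $h=1$, use $E\ge\gamma p(M_1)$ against $B_1^\star=2\sqrt{\gamma\theta}$ instead. So for small $\theta$ the second case cannot come near $B_h^\star$.

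Hence $\liminf E_h^\star/B_h^\star\ge\mathrm e^{-\delta}$ for every $\delta>0$. Combined with your rounded-design upper bound, this gives $E_h^\star=B_h^\star(1+o(1))$.
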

\begin{IEEEproof}
\emph{The formula.} Each cell's top-level block contributes one corroborated total per $M_h$ cells. A level-$k$ total, $k<h$, is corroborated exactly when the protocol reaches its parent, the level-$(k+1)$ block containing it, and finds it failing; a block holding a defect fails and so do all its ancestors, which hold the same defect, so this happens exactly when the parent holds a defect, with probability $p(M_{k+1})$. There are $1/M_k$ level-$k$ totals per cell, which gives the middle terms; a cell is reviewed exactly when its level-1 block holds a defect, with probability $p(M_1)$, at work $\gamma$.

\emph{The two-sided bound.} For $1\le M\le M_h$, keeping only the event of exactly one defect and then using Bernoulli's inequality, $\theta M(1-\theta)^{M_h}\le\theta M(1-\theta)^{M-1}\le p(M)\le\theta M$. Applied to every $p$ in Eq.~\eqref{supp:eq:nested}, with $1/M_h\ge(1-\theta)^{M_h}/M_h$, this gives $(1-\theta)^{M_h}B\le E\le B$.

\emph{The optimum of $B$.} $B$ is the sum of the $h+1$ positive terms $1/M_h$, $\theta M_{k+1}/M_k$ for $1\le k<h$, and $\gamma\theta M_1$, whose product telescopes to $\gamma\theta^h$. By the inequality of arithmetic and geometric means $B\ge(h+1)(\gamma\theta^h)^{1/(h+1)}=B_h^\star$, with equality exactly when every term equals $\tau=(\gamma\theta^h)^{1/(h+1)}$, that is, $M_1=\tau/(\gamma\theta)$, $M_{k+1}/M_k=\tau/\theta$, and $M_h=1/\tau$, which is $M_k=(\tau/\theta)^k/\gamma=(\gamma/\theta)^{k/(h+1)}/\gamma$; the condition on $M_h$ is then implied, since $\tau^{h+1}=\gamma\theta^h$. These sizes are real numbers, increasing in $k$ when $\gamma>\theta$, with $M_1\ge1$ exactly when $\gamma^h\theta\le1$.

\emph{Integer designs, $\theta\to0$.} Write $E_h^\star(\theta)$ for the infimum of $E$ over positive integers $M_1\mid\dots\mid M_h$, and $E_0^\star=\gamma$ (no printed sum: every cell reviewed). Upper bound: with $\nu=(\gamma/\theta)^{1/(h+1)}\to\infty$, take $M_1=\lceil\nu/\gamma\rceil$ and $M_{k+1}=M_k\lceil\nu\rceil$. Since $\gamma\nu^{-h}=\theta\nu=\tau$, each of the $h+1$ terms of $B$ is $\tau(1+o(1))$, so $E_h^\star\le B=B_h^\star(1+o(1))$. Lower bound, by induction on $h$: fix $\delta>0$ and an integer design. If $\theta M_h\le\delta$, then $E\ge(1-\theta)^{M_h}B_h^\star\ge(1-\theta)^{\delta/\theta}B_h^\star$, and $(1-\theta)^{\delta/\theta}\to\mathrm e^{-\delta}$. If $\theta M_h>\delta$, then $p(M_h)\ge1-\mathrm e^{-\delta}$, and dropping $1/M_h$ and using $p\le1$ in the remaining terms,
\begin{align*}
 E&\ge p(M_h)\Big(\frac1{M_{h-1}}+\sum_{k=1}^{h-2}\frac{p(M_{k+1})}{M_k}+\gamma p(M_1)\Big)\\
 &\ge(1-\mathrm e^{-\delta})E_{h-1}^\star,
\end{align*}
the bracket being the work of the $(h-1)$-level design $M_1\mid\dots\mid M_{h-1}$ (for $h=1$, read $E\ge\gamma p(M_1)\ge(1-\mathrm e^{-\delta})E_0^\star$). By induction $E_{h-1}^\star\ge(1+o(1))B_{h-1}^\star$ for $h\ge2$, and $B_{h-1}^\star/B_h^\star$ is a constant times $\theta^{-1/(h(h+1))}\to\infty$ (for $h=1$, $E_0^\star/B_1^\star=\gamma/(2\sqrt{\gamma\theta})\to\infty$), so for small $\theta$ such designs do worse than $B_h^\star$. Hence $\liminf E_h^\star/B_h^\star\ge\mathrm e^{-\delta}$ for every $\delta>0$, and with the upper bound $E_h^\star/B_h^\star\to1$.

\emph{Powers of three.} With $M_k=3^k$ and $\gamma=1$, $B=3^{-h}+3(h-1)\theta+3\theta=3^{-h}+3h\theta$. For $h=\lceil\log_3\theta^{-1}\rceil$, $3^{-h}\le\theta$ and $h<\ln\theta^{-1}/\ln3+1$, so $E\le B\le\theta(4+\frac3{\ln3}\ln\theta^{-1})$.

\emph{The information bound.} Under the idealization every step of any such protocol, a corroborated total compared with its block's sum or a cell reviewed, has two outcomes, and the protocol ends knowing the set $\mathcal D$ of defective cells, which the outcomes determine. Because the protocol chooses each step from the outcomes so far and decides from them when to stop, the outcome sequences of distinct sets $\mathcal D$ form a prefix-free binary code for $\mathcal D$, and by the noiseless coding theorem the expected number of steps is at least the entropy of $\mathcal D$, $R\,H_2(\theta)$ bits for $R$ cells defective independently with probability $\theta$. At $\gamma=1$ every step is one unit of work. Since $H_2(\theta)\ge\theta\log_2\theta^{-1}$, the powers of three come within a factor $\big(4+\frac3{\ln3}\ln\theta^{-1}\big)\ln2/\ln\theta^{-1}\to3\ln2/\ln3\approx1.89$ of the bound.
\end{IEEEproof}

\begin{theorem}[Segmentation recovery]
\label{supp:thm:segment}
Let $\mathbf j^\star$ be the true cut of Section~IV-B of the main text, with true blocks $B_1,\dots,B_K$ ending at the true total lines $j^\star_1<\dots<j^\star_K$ and footing counts $\phi_1,\dots,\phi_K$; call a candidate block coincidental if its last line is not a true total line and aligned if it ends at some $j^\star_k$ without being $B_k$. Suppose (i) every true block has length in $[2,L]$ and foots in at least $\phi_{\min}$ columns; (ii) every coincidental block foots in at most $\phi_0\le\phi_{\min}-1$ columns; and (iii) every aligned block ending at $j^\star_k$ foots in at most $\phi_k$ columns. Then the recursion of Eq.~(5) of the main text with $\lambda=0$ returns $\mathbf j^\star$, its score exceeding every other cut's by at least the number of that cut's coincidental blocks, hence by at least one, and it still returns $\mathbf j^\star$ when each of the nonnegative cue weights is smaller than $1/(3K)$. If, moreover, a coincidental block foots in each of $C'$ columns independently with probability at most $\varpi$, as it does under the model of Theorem~2 with $\varpi=\max_c\max_uf_c(u)$, then for every fixed $m$ the probability that some coincidental block foots in $m$ or more of those columns is at most $NL\,(C'\varpi)^{m}/m!$, which with $m=\phi_0+1$ bounds the chance that hypothesis~(ii) fails; a column in which every block foots, an all-zero column, adds $K$ to every $K$-cut and can be left out of the count.
\end{theorem}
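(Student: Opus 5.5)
The plan is to compare the score of $\mathbf j^\star$ directly with that of an arbitrary admissible $K$-cut $\mathbf j\ne\mathbf j^\star$, sorting the blocks of $\mathbf j$ into the three kinds the statement names. Lemma~1 already guarantees that the recursion returns a global maximizer of $\sum_kS(j_{k-1}+1,j_k)$. So it suffices to show that $\mathbf j^\star$ is the unique maximizer, with the stated margin.

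Every block of $\mathbf j$ ends either at a true total line or not. If it ends at $j^\star_k$, it is either $B_k$ itself or aligned, and by (iii) it foots in at most $\phi_k$ columns. If it does not end at a true total line, it is coincidental, and by (ii) it foots in at most $\phi_0\le\phi_{\min}-1$ columns. Let $m$ be the number of coincidental blocks of $\mathbf j$.

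The $K-m$ non-coincidental blocks end at distinct true total lines, forming a set $A\subseteq\{1,\dots,K\}$ with $|A|=K-m$. This gives
\[
\phi(\mathbf j)\le\sum_{k\in A}\phi_k+m(\phi_{\min}-1)\le\sum_{k=1}^K\phi_k-m=\phi(\mathbf j^\star)-m,
\]
where the second step uses (i): each of the $m$ missed indices has $\phi_k\ge\phi_{\min}$. It remains to show $m\ge1$. If $m=0$, all $K$ endpoints of $\mathbf j$ are distinct true total lines, so they are exactly $j^\star_1<\dots<j^\star_K$ and $\mathbf j=\mathbf j^\star$. Hence the margin is at least $m\ge1$, and admissibility of $\mathbf j^\star$ follows from (i).

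For the cue weights, each block contributes at most $\lambda_h+\lambda_f+\lambda_t<3\cdot\frac1{3K}$, so a whole cut gains less than $1$ from cues. The cue terms are nonnegative for $\mathbf j^\star$, so its total score is at least $\phi(\mathbf j^\star)$. Any other cut has total score below $\phi(\mathbf j)+1\le\phi(\mathbf j^\star)$, so the strict inequality survives.

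For the probabilistic part, I would take a union bound over all candidate blocks $[i,j]$ with length in $[2,L]$; there are at most $NL$ of them. For a fixed block, independence across the $C'$ columns (Theorem~2(iii), with $\varpi=\max_c\max_uf_c(u)$) bounds the chance of footing in at least $m$ columns. Taking a union over the $\binom{C'}{m}$ column sets, each footing with probability at most $\varpi^m$, gives
\[
\binom{C'}{m}\varpi^m\le\frac{(C'\varpi)^m}{m!}.
\]
Setting $m=\phi_0+1$ bounds the probability that (ii) fails. For the all-zero remark, an all-zero column foots in every block, so it adds exactly $K$ to every $K$-cut's $\phi$. It therefore changes no comparison and may be dropped from both $C'$ and the footing counts.

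The main obstacle is the counting step: the aligned and true blocks must be matched injectively to distinct indices $k$, which works because distinct blocks of one cut have distinct endpoints. One must also check that footing counts for aligned blocks are compared against the same $\phi_k$ that $\mathbf j^\star$ collects. The rest is bookkeeping.
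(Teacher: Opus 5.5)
Your proposal is correct and follows the paper's proof essentially step for step. It uses the same split of the blocks of $\mathbf j$ into true or aligned blocks indexed by $A$ (bounded by $\phi_k$) and coincidental blocks (bounded by $\phi_0$), the same observation that $|A|=K$ forces $\mathbf j=\mathbf j^\star$, the same $3K\cdot\frac{1}{3K}<1$ cue argument, and the same double union bound over $m$-sets of columns and at most $NL$ candidate blocks. The only blemish is cosmetic: you use $m$ both for the number of coincidental blocks and for the footing threshold, so rename one of them.
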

\begin{IEEEproof}
In a column where $B_k$ foots, an aligned block foots only if the lines by which it differs from $B_k$ are all zero there. Let $\mathbf j\ne\mathbf j^\star$ be a cut and $A\subseteq\{1,\dots,K\}$ the set of $k$ such that some block of $\mathbf j$ ends at $j^\star_k$; these blocks are true or aligned, so by (iii) each scores at most $\phi_k$, and the remaining $K-|A|$ blocks of $\mathbf j$ are coincidental and score at most $\phi_0$ by (ii). Hence $\mathrm{score}(\mathbf j)\le\sum_{k\in A}\phi_k+(K-|A|)\phi_0$, while $\mathrm{score}(\mathbf j^\star)=\sum_k\phi_k\ge\sum_{k\in A}\phi_k+(K-|A|)\phi_{\min}$ by (i), so the gap is at least $(K-|A|)(\phi_{\min}-\phi_0)\ge K-|A|$. If $K-|A|=0$, the $K$ cut points of $\mathbf j$ are the $K$ true total lines and $\mathbf j=\mathbf j^\star$; so the gap is at least one for every other cut. Three nonnegative cues per block add less than $3K\cdot1/(3K)=1$ to any cut and at least zero to $\mathbf j^\star$, which cannot reverse a gap of one. For the probabilistic statement, there are at most $NL$ candidate blocks, and one foots in $m$ or more of $C'$ independent columns with probability at most $\binom{C'}{m}\varpi^{m}\le(C'\varpi)^{m}/m!$, by the union bound over the $m$-sets of columns; apply the union bound again over blocks. The threshold $m$ must be fixed in advance, since the realized $\phi_{\min}$ is random. A column in which every block foots adds $K$ to every $K$-cut and changes no comparison.
\end{IEEEproof}

\begin{lemma}[The likelihood of Section~VI-C]
\label{supp:lem:mle}
For columns of lengths $R_c$, each failing the gate with probability $1-(1-\theta)^{R_c}$ independently, the log-likelihood $\ell(\theta)$ of Section~VI-C of the main text is strictly concave in $\lambda=-\log(1-\theta)$ when some column fails; when some column also passes, it has a unique maximizer $\hat\theta\in(0,1)$, and each set $\{\theta:\ell(\theta)\ge\ell(\hat\theta)-z\}$ is an interval.
\end{lemma}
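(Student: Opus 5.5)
Write $\lambda=-\log(1-\theta)$, so that $(1-\theta)^{R_c}=\mathrm e^{-\lambda R_c}$ and $\theta\mapsto\lambda$ is a strictly increasing bijection of $(0,1)$ onto $(0,\infty)$. In this variable the log-likelihood of Section~VI-C of the main text becomes
\[
 \ell(\lambda)=\sum_{c\,\mathrm{fails}}\log\big(1-\mathrm e^{-\lambda R_c}\big)-\lambda\sum_{c\,\mathrm{passes}}R_c .
\]
The plan is to prove strict concavity term by term, to find the maximizer from the behavior of $\ell$ at the two ends of $(0,\infty)$, and to read off the intervals from concavity. Everything then carries back to $\theta$ through the monotone reparametrization.

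\emph{Concavity.} The pass term is linear in $\lambda$, so it is concave. For a failing column put $g(\lambda)=\log(1-\mathrm e^{-a\lambda})$ with $a=R_c\ge1$. Then
\[
 g'(\lambda)=\frac{a}{\mathrm e^{a\lambda}-1},\qquad g''(\lambda)=-\frac{a^2\mathrm e^{a\lambda}}{(\mathrm e^{a\lambda}-1)^2}<0 \quad\text{on }(0,\infty),
\]
so each fail term is strictly concave. A sum of concave functions with at least one strictly concave summand is strictly concave; this is the only place the hypothesis that some column fails enters.

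\emph{Existence and uniqueness.} As $\lambda\to0^+$, any fail term tends to $\log 0^+=-\infty$, while the pass term tends to $0$, so $\ell\to-\infty$. As $\lambda\to\infty$, the fail terms tend to $0$ and the pass term $-\lambda\sum_{\mathrm{passes}}R_c$ tends to $-\infty$, because some column passes and has $R_c\ge1$. A continuous function on $(0,\infty)$ that tends to $-\infty$ at both ends attains its supremum. Strict concavity makes the maximizer $\hat\lambda$ unique, and $\hat\theta=1-\mathrm e^{-\hat\lambda}$ lies in $(0,1)$. The same conclusion can be checked directly: $\ell'$ is strictly decreasing, running from $+\infty$ down to $-\sum_{\mathrm{passes}}R_c<0$, so it has exactly one zero.

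\emph{Intervals.} A superlevel set $\{\lambda:\ell(\lambda)\ge\ell(\hat\lambda)-z\}$ of a concave function is convex, so it is an interval in $\lambda$. Its preimage under the continuous increasing map $\theta\mapsto\lambda$ is therefore an interval in $\theta$. Because $\ell\to-\infty$ at both ends, the set is also closed and bounded away from $0$ and $1$.

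The main obstacle is only bookkeeping. I need to keep clear that concavity is claimed in $\lambda$ and not in $\theta$, and that the interval property survives the change of variable because the map is monotone.
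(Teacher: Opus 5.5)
Your proposal is correct and follows the paper's proof essentially step for step. Both reparametrize by $\lambda=-\log(1-\theta)$, obtain strict concavity from $g''<0$ for each failing-column term plus the linear passing term, use the $-\infty$ limits at $\lambda\to0^+$ and $\lambda\to\infty$ for existence and concavity for uniqueness, and carry the convex superlevel sets back to $\theta$ through the monotone map.
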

\begin{IEEEproof}
In $\lambda\in(0,\infty)$, $\ell=\sum_{\mathrm{fails}}\log(1-\mathrm e^{-\lambda R_c})-\lambda\sum_{\mathrm{passes}}R_c$. For $a>0$, $g(\lambda)=\log(1-\mathrm e^{-a\lambda})$ has $g''=-a^2\mathrm e^{a\lambda}/(\mathrm e^{a\lambda}-1)^2<0$, and the second sum is linear, so $\ell$ is strictly concave when some column fails. As $\lambda\to0$ a failing column's term tends to $-\infty$, and as $\lambda\to\infty$ a passing column's term does, so a unique maximizer exists, and the increasing map $\lambda\mapsto1-\mathrm e^{-\lambda}$ carries the superlevel sets, intervals, to intervals.
\end{IEEEproof}
\end{document}